\let\oldnl\nl%
\newcommand{\nonl}{\renewcommand{\nl}{\let\nl\oldnl}}%
\newcommand{\thickhline}{%
    \noalign {\ifnum 0=`}\fi \hrule height 1pt
    \futurelet \reserved@a \@xhline
}
\newcolumntype{"}{@{\hskip\tabcolsep\vrule width 1pt\hskip\tabcolsep}}
\newtheorem{definition}{Definition}
\newtheorem{assumption}{Assumption}
\newtheorem{problem}{Problem}
\newtheorem{theorem}{Theorem}
\newtheorem{corollary}[theorem]{Corollary}
\newtheorem{lemma}[theorem]{Lemma}
\newcommand{\Identity}{{\rm I\kern-.2em l}}
\newcommand{\Expect}{\mathbb{E}}
\newcommand{\Expectbracket}[1]{\mathbb{E}\left[ #1 \right]}
\newcommand{\Expectsubbracket}[2]{\mathbb{E}_{#1}\left[ #2 \right]}
\newcommand{\Expectcond}[2]{\mathbb{E}\left[\left. #1 \right| #2 \right]}
\newcommand{\Variancebracket}[1]{\mathrm{Var}\left[ #1 \right]}
\newcommand{\x}{\mathbf{x}}
\newcommand{\y}{\mathbf{y}}
\newcommand{\z}{\mathbf{z}}
\newcommand{\g}{\mathbf{g}}
\newcommand{\p}{\mathbf{p}}
\newcommand{\bu}{\mathbf{u}}
\newcommand{\norm}[1]{\left\Vert #1 \right\Vert}
\newcommand{\normsq}[1]{\left\Vert #1 \right\Vert^2}
\newcommand{\innerprod}[1]{\left\langle #1 \right\rangle}
\title{A Lightweight Method for Tackling Unknown Participation Statistics in Federated Averaging}
\author{%
  Shiqiang Wang \\
  IBM T. J. Watson Research Center\\
  Yorktown Heights, NY 10598 \\
  \texttt{wangshiq@us.ibm.com} \\
  \And
  Mingyue Ji \\
  Department of ECE,
  University of Utah\\
  Salt Lake City, UT 84112 \\
  \texttt{mingyue.ji@utah.edu} \\
}
\begin{document}

\maketitle

\begin{abstract}
In federated learning (FL), clients usually have diverse participation statistics that are unknown a priori, which can significantly harm the performance of FL if not handled properly. Existing works aiming at addressing this problem are usually based on global variance reduction, which requires a substantial amount of additional memory in a multiplicative factor equal to the total number of clients. An important open problem is to find a lightweight method for FL in the presence of clients with unknown participation rates. In this paper, we address this problem by \textit{adapting the aggregation weights} in federated averaging (FedAvg) based on the participation history of each client. We first show that, with heterogeneous participation statistics, FedAvg with non-optimal aggregation weights can diverge from the optimal solution of the original FL objective, indicating the need of finding optimal aggregation weights. However, it is difficult to compute the optimal weights when the participation statistics are unknown. To address this problem, we present a new algorithm called FedAU, which improves FedAvg by adaptively weighting the client updates based on online estimates of the optimal weights without knowing the statistics of client participation. We provide a theoretical convergence analysis of FedAU using a novel methodology to connect the estimation error and convergence. Our theoretical results reveal important and interesting insights, while showing that FedAU converges to an optimal solution of the original objective and has desirable properties such as linear speedup. Our experimental results also verify the advantage of FedAU over baseline methods with various participation patterns.
\end{abstract}

\section{Introduction}
\label{sec:introduction}

We consider the problem of finding $\x\in\mathbb{R}^d$ that minimizes the distributed finite-sum objective:
\begin{equation}
\textstyle
    f(\x) := \frac{1}{N}\sum_{n=1}^N F_n(\x),
    \label{eq:original_objective}
\end{equation}
where each individual (local) objective $F_n(\x)$ is only computable at the client $n$. This problem often arises in the context of federated learning (FL) \citep{kairouz2019advances,li2020federated,yang2019federated}, where $F_n(\x)$ is defined on client $n$'s local dataset, $f(\x)$ is the global objective, and $\x$ is the parameter vector of the model being trained. Each client keeps its local dataset to itself, which is not shared with other clients or the server. It is possible to extend \eqref{eq:original_objective} to weighted average with positive coefficients multiplied to each $F_n(\x)$, but for simplicity, we consider such coefficients to be included in $\{F_n(\x):\forall n\}$ (see Appendix~\ref{appendix:objective_extension}) and do not write them out. 

Federated averaging (FedAvg) is a commonly used algorithm for minimizing \eqref{eq:original_objective}, which alternates between \textit{local updates} at each client and \textit{parameter aggregation} among multiple clients with the help of a server \citep{mcmahan2017communication}. However, there are several challenges in FedAvg, including data heterogeneity and partial participation of clients, which can cause performance degradation and even non-convergence if the FedAvg algorithm is improperly configured. 

\textbf{Unknown, Uncontrollable, and Heterogeneous Participation of Clients.}
Most existing works on FL with partial client participation assume that the clients participate according to a known or controllable random process \citep{karimireddy2020scaffold,yang2021achieving,chen2022optimal,fraboni2021clustered,li2020federatedOptimization,Li2020On}. 
In practice, however, it is common for clients to have heterogeneous and time-varying computation power and network bandwidth, which depend on both the inherent characteristics of each client and other tasks that concurrently run in the system. This generally leads to \textit{heterogeneous participation statistics} across clients, which are \textit{difficult to know a priori} due to their complex dependency on various factors in the system \citep{wang2021field}. It is also generally \textit{impossible to fully control} the participation statistics, due to the randomness of whether a client can successfully complete a round of model updates \citep{Bonawitz2019Towards}.

The problem of having heterogeneous and unknown participation statistics is that it may cause the result of FL to be biased towards certain local objectives, which \textit{diverges} from the optimum of the original objective in \eqref{eq:original_objective}. In FL, data heterogeneity across clients is a common phenomenon, resulting in diverse local objectives $\{F_n(\x)\}$. The participation heterogeneity is often correlated with data heterogeneity, because the characteristics of different user populations may be correlated with how powerful their devices are.  Intuitively, when some clients participate more frequently than others, the final FL result will be benefiting the local objectives of those frequently participating clients, causing a possible discrimination for clients that participate less frequently. 

A few recent works aiming at addressing this problem are based on the idea of global variance reduction by saving the most recent updates of all the clients, which requires a substantial amount of additional memory in the order of $Nd$, i.e., the total number of clients times the dimension of the model parameter vector \citep{yang2021anarchic,yan2020distributed,gu2021fast,jhunjhunwala2022fedvarp}.
This additional memory consumption is either incurred at the server or evenly distributed to all the clients.
For practical FL systems with many clients, this causes unnecessary memory usage that affects the overall capability and performance of the system. Therefore, we ask the following important question in this paper:

\textit{Is there a \textbf{lightweight} method that \textbf{provably} minimizes the original objective in \eqref{eq:original_objective}, when the participation statistics of clients are \underline{\smash{unknown}}, \underline{\smash{uncontrollable}}, and \underline{\smash{heterogeneous}}?}

We leverage the insight that we can apply different weights to different clients' updates in the parameter aggregation stage of FedAvg. If this is done properly, the effect of heterogeneous participation can be canceled out so that we can minimize \eqref{eq:original_objective}, as shown in existing works that assume known participation statistics \citep{chen2022optimal,fraboni2021clustered,li2020federatedOptimization,Li2020On}. However, in our setting, we \textit{do not know} the participation statistics a priori, which makes it \textit{challenging to compute (estimate) the optimal aggregation weights}. It is also non-trivial to quantify the impact of estimation error on convergence.

\textbf{Our Contributions.}
We thoroughly analyze this problem and make the following novel contributions. 

\begin{enumerate}[leftmargin=1.5em,topsep=0em]
    \item To motivate the need for adaptive weighting in parameter aggregation, we show that FedAvg with non-optimal weights minimizes a different objective (defined in \eqref{eq:alternative_objective}) instead of~\eqref{eq:original_objective}.

    \item We propose a \textit{lightweight procedure for estimating the optimal aggregation weight} at each client~$n$ as part of the overall FL process, based on client $n$'s participation history. We name this new algorithm FedAU, which stands for \underline{Fed}Avg with \underline{a}daptive weighting to support \underline{u}nknown participation statistics.

    \item We analyze the convergence upper bound of FedAU, using a novel method that first obtains a \textit{weight error term} in the convergence bound and then further bounds the weight error term via a bias-variance decomposition approach. Our result shows that \textit{FedAU converges to an optimal solution of the original objective} \eqref{eq:original_objective}. In addition, a desirable \textit{linear speedup} of convergence with respect to the number of clients is achieved when the number of FL rounds is large enough.

    \item We verify the advantage of FedAU in experiments with several datasets and baselines, with a \textit{variety of participation patterns} including those that are independent, Markovian, and cyclic. 
\end{enumerate}

\textbf{Related Work.} 
Earlier works on FedAvg considered the convergence analysis with full client participation \citep{Gorbunov2021Local,Haddadpour2019Local,Lin2020Dont,stich2018local,MLSYS2019Jianyu,CooperativeSGD,yu2019parallel,malinovsky2023federated}, which do not capture the fact that only a subset of clients participates in each round in practical FL systems. Recently, partial client participation has came to attention. Some works analyzed the convergence of FedAvg where the statistics or patterns of client participation are known or controllable \citep{fraboni2021clustered,fraboni2021impact,Li2020On,yang2021achieving,wang2022unified,cho2023convergence,karimireddy2020scaffold,li2020federatedOptimization,chen2022optimal,rizk2022federated}. However, as pointed out by \cite{wang2021field,Bonawitz2019Towards}, the participation of clients in FL can have complex dependencies on the underlying system characteristics, which makes it difficult to know or control each client's behavior a priori. A recent work analyzed the convergence for a re-weighted objective \citep{patel2022towards}, where the re-weighting is essentially arbitrary for unknown participation distributions. Some recent works \citep{yang2021anarchic,yan2020distributed,gu2021fast,jhunjhunwala2022fedvarp} aimed at addressing this problem using variance reduction, by including the most recent local update of each client in the global update, even if they do not participate in the current round. These methods require a substantial amount of additional memory to store the clients' local updates.
In contrast, our work focuses on developing a lightweight algorithm that has virtually the same memory requirement as the standard FedAvg algorithm.

A related area is adaptive FL algorithms, where adaptive gradients \citep{reddi2021adaptive,wang2022communication1,wang2022communication2} and adaptive local updates \citep{ruan2021towards,wang2020tackling} were studied. 
Some recent works viewed the adaptation of aggregation weights from different perspectives \citep{wu2021fast,tan2022adafed,wang2022asyncfeded}, which do not address the problem of unknown participation statistics. 
All these methods are orthogonal to our work and can potentially work together with our algorithm.
To the best of our knowledge, no prior work has studied weight adaptation in the presence of unknown participation statistics with provable convergence guarantees.

A uniqueness in our problem is that the statistics related to participation need to be collected across multiple FL rounds. Although \cite{wang2022unified} aimed at extracting a participation-specific term in the convergence bound, that approach still requires the aggregation weights in each round to sum to one (thus coordinated participation); it also requires an amplification procedure over multiple rounds for the bound to hold, making it difficult to tune the hyperparameters. 
In contrast, this paper considers uncontrolled and uncoordinated participation without sophisticated amplification mechanisms.

\section{FedAvg with Pluggable Aggregation Weights}

\begin{wrapfigure}{R}{0.54\textwidth}
\vspace{-0.2in}
\begin{algorithm}[H]
 \caption{FedAvg with pluggable aggregation weights}
 \label{alg:main-alg}
 
\nonl
\textbf{Input: }{$\gamma$, $\eta$, $\x_0$, $I$};\,
\textbf{Output: }{$\{\x_t : \forall t\}$};

Initialize
$t_0 \leftarrow 0$,
$\bu \leftarrow \mathbf{0}$;

\For{$t = 0, \ldots, T-1$}{

    \For{$n = 1,\ldots,N$ in parallel \label{alg-line:loopWorkers}}{

            Sample $\Identity_t^n$ from an \textit{unknown} process; \label{alg-line:sampling}%

            \uIf{$\Identity_t^n = 1$}{
            
                $\y^n_{t,0} \leftarrow \x_t$;  \label{alg-line:assignGlobalParameter}
                
                \For{$i = 0, \ldots, I-1$ \label{alg-line:loopLocal}}{
                    $\y^n_{t,i+1} \leftarrow \y^n_{t,i} - \gamma \g_n(\y^n_{t,i})$; \label{alg-line:localUpdate}
                }
                
                $\Delta_t^n \leftarrow \y^n_{t,I} - \x_t$;\, \label{alg-line:updateVector} 
            }
            \Else{
                $\Delta_t^n \leftarrow \mathbf{0}$;
            }

            $\omega_t^n \!\leftarrow\! \texttt{ComputeWeight}(\!\{\Identity_\tau^n \!:\! \tau \!<\! t\}\!)$;\!\! \label{alg-line:computeWeight}

    }

    $\x_{t+1} \leftarrow \x_t + \frac{\eta}{N}\sum_{n=1}^N \omega_t^n \Delta_t^n$; \label{alg-line:globalUpdate}
        
}
\end{algorithm}
\vspace{-0.2in}
\end{wrapfigure}

We begin by describing a generic FedAvg algorithm that includes a separate oracle for computing the aggregation weights, as shown in Algorithm~\ref{alg:main-alg}. In this algorithm, there are a total of $T$ rounds, where each round $t$ includes $I$ steps of local stochastic gradient descent (SGD) at a participating client. For simplicity, we consider $I$ to be the same for all the clients, while noting that our algorithm and results can be extended to more general cases. We use $\gamma > 0$ and $\eta > 0$ to denote the local and global step sizes, respectively. The variable $\x_0$ is the initial model parameter, $\Identity_t^n$ is an identity function that is equal to one if client $n$ participates in round $t$ and zero otherwise, and $\g_n(\cdot)$ is the stochastic gradient of the local objective $F_n(\cdot)$ for each client $n$.

The main steps of Algorithm~\ref{alg:main-alg} are similar to those of standard FedAvg, but with a few notable items as follows. \textit{1)} In Line~\ref{alg-line:sampling}, we clearly state that we do not have prior knowledge of the sampling process of client participation. \textit{2)} Line~\ref{alg-line:computeWeight} calls a separate oracle to compute the aggregation weight $\omega_t^n$ ($\omega_t^n > 0$) for client $n$ in round $t$. This computation is done on each client $n$ alone, without coordinating with other clients. We \textit{do not} need to save the full sequence of participation record $\{\Identity_\tau^n : \tau < t\}$, because it is sufficient to save an aggregated metric of the participation record for weight computation. In Section~\ref{sec:FedAU}, we will see that we use the average participation interval for weight computation in FedAU, where the average can be computed in an online manner. We also note that we do not include $\Identity_t^n$ in the current round $t$ for computing the weight, which is needed for the convergence analysis so that $\omega_t^n$ is independent of the local parameter $\y^n_{t,i}$ when the initial parameter of round $t$ (i.e., $\x_t$) is given. \textit{3)} The parameter aggregation is weighted by $\omega_t^n$ for each client $n$ in Line~\ref{alg-line:globalUpdate}.

\textbf{Objective Inconsistency with Improper Aggregation Weights.}
We first show that without weight adaptation, FedAvg minimizes an alternative objective that is generally different from \eqref{eq:original_objective}.

\begin{theorem}[Objective minimized at convergence, informal]
\label{theorem:convergence_alternative_informal}
When $\Identity_t^n\sim\mathrm{Bernoulli}(p_n)$ and the weights are time-constant, i.e., $\omega_t^n =\omega_n$ but generally $\omega_n$ may not be equal to  $\omega_{n'}$ ($n\neq n'$), with properly chosen learning rates $\gamma$ and $\eta$ and some other assumptions, Algorithm~\ref{alg:main-alg} minimizes the following objective:
\begin{align}
    \textstyle h(\x) := \frac{1}{P} \sum_{n=1}^N \omega_n p_n F_n(\x),
    \label{eq:alternative_objective}
\end{align}
where $P := \sum_{n=1}^N \omega_n p_n$.
\end{theorem}

A formal version of the theorem is given in Appendix~\ref{appendix:proof_convergence_alternative}. Theorem~\ref{theorem:convergence_alternative_informal} shows that, even in the special case where each client $n$ participates according to a Bernoulli distribution with probability $p_n$, choosing a constant aggregation weight such as $\omega_n=1, \forall n$ as in standard FedAvg causes the algorithm to converge to a different objective that is weighted by $p_n$. As mentioned earlier, this implicit weighting discriminates clients that participate less frequently. In addition, since the participation statistics (here, the probabilities $\{p_n\}$) of clients are unknown, the exact objective being minimized is also unknown, and it is generally unreasonable to minimize an unknown objective.
This means that \textit{it is important to design an adaptive method to find the aggregation weights}, so that we can minimize \eqref{eq:original_objective} even when the participation statistics are unknown, which is our focus in this paper.

\textit{The full proofs of all mathematical claims are in Appendix~\ref{appendix:proofs}.
}

\section{FedAU: Estimation of Optimal Aggregation Weights}
\label{sec:FedAU}

In this section, we describe the computation of aggregation weights $\{\omega_t^n\}$ based on the participation history observed at each client, which is the core of our FedAU algorithm that extends FedAvg. Our goal is to choose $\{\omega_t^n\}$ to minimize the original objective~\eqref{eq:original_objective} as close as possible.

\textbf{Intuition.} 
We build from the intuition in  Theorem~\ref{theorem:convergence_alternative_informal} and design an aggregation weight adaptation algorithm that works for \textit{general participation patterns}, i.e., not limited to the Bernoulli distribution considered in Theorem~\ref{theorem:convergence_alternative_informal}. From~\eqref{eq:alternative_objective}, we see that if we can choose $\omega_n=\sfrac{1}{p_n}$, the objective being minimized is the same as~\eqref{eq:original_objective}. We note that $p_n \approx \frac{1}{T}\sum_{t=0}^{T-1} \Identity_t^n$ for each client $n$ when $T$ is large, due to ergodicity of the Bernoulli distribution considered in Theorem~\ref{theorem:convergence_alternative_informal}. Extending to general participation patterns that are not limited to the Bernoulli distribution, intuitively, we would like to choose the weight $\omega_n$ to be inversely proportional to the average frequency of participation. In this way, the bias caused by lower participation frequency is ``canceled out'' by the higher weight used in aggregation. Based on this intuition, our goal of aggregation weight estimation is as follows.

\begin{problem}[Goal of Weight Estimation, informal] 
    \label{problem:weight_estimation_informal}
    Choose $\{\omega_t^n\}$ so that its long-term average (i.e., for large $T$) $\frac{1}{T}\sum_{t=0}^{T-1} \omega_t^n$ is close to $\frac{1}{\frac{1}{T}\sum_{t=0}^{T-1} \Identity_t^n}$, for each $n$.
\end{problem}

Some previous works have discovered this need of debiasing the skewness of client participation \citep{Li2020On,perazzone2022communication} or designing the client sampling scheme to ensure that the updates are unbiased \citep{fraboni2021clustered,li2020federatedOptimization}. However, in our work, we consider the more realistic case where the participation statistics are unknown, uncontrollable, and heterogeneous. In this case, we \textit{are unable to directly find} the optimal aggregation weights because we do not know the participation statistics a priori.

\textbf{Technical Challenge.}
If we were to know the participation pattern for all the $T$ rounds, an immediate solution to Problem~\ref{problem:weight_estimation_informal} is to choose $\omega_t^n$ (for each client $n$) to be equal to $T$ divided by the number of rounds where client $n$ participates. We can see that this solution is equal to the average \textit{interval} between every two adjacent participating rounds, assuming that the first interval starts right before the first round $t=0$. However, since we do not know the future participation pattern or statistics in each round $t$, we cannot directly apply this solution. In other words, in every round $t$, we need to perform an \textit{online} estimation of the weight $\omega_t^n$ based on the participation history up to round $t-1$.

A challenge in this online setting is that the estimation accuracy is related to the number of times each client $n$ has participated until round $t-1$. When $t$ is small and client $n$ has not yet participated in any of the preceding rounds, we do not have any information about how to choose $\omega_t^n$. For an intermediate value of $t$ where client $n$ has participated only in a few rounds, we have limited information about the choice of $\omega_t^n$. In this case, if we directly use the average participation interval up to the $(t-1)$-th round, the resulting $\omega_t^n$ can be far from its optimal value, i.e., the estimation has a high variance if the client participation follows a random process. This is problematic especially when there exists a long interval between two rounds (both before the $(t-1)$-th round) where the client participates. Although the probability of the occurrence of such a long interval is usually low, when it occurs, it results in a long average interval for the first $t$ rounds when $t$ is relatively small, and using this long average interval as the value of $\omega_t^n$ may cause instability to the training process.

\textbf{Key Idea.} To overcome this challenge, we define a positive integer $K$ as a ``cutoff'' interval length. If a client has not participated for $K$ rounds, we consider $K$ to be a participation interval that we sample and start a new interval thereafter. In this way, we can limit the length of each interval by adjusting $K$. By setting $\omega_t^n$ to be the average of this possibly cutoff participation interval, we overcome the aforementioned challenge.
From a theoretical perspective, we note that $\omega_t^n$ will be a biased estimation when $K<\infty$ and the bias will be larger when $K$ is smaller. In contrast, a smaller $K$ leads to a smaller variance of $\omega_t^n$, because we collect more samples in the computation of $\omega_t^n$ with a smaller $K$. Therefore, an insight here is that $K$ controls the \textit{bias-variance tradeoff}\footnote{Note that we focus on the aggregation weights here, which is different from classical concept of the bias-variance tradeoff of the model.} of the aggregation weight $\omega_t^n$. In Section~\ref{sec:convergence_analysis}, we will formally show this property and obtain desirable convergence properties of the weight error term and the overall objective function~\eqref{eq:original_objective}, by properly choosing $K$ in the theoretical analysis. Our experimental results in Section~\ref{sec:experiments} also confirm that choosing an appropriate value of $K<\infty$ improves the performance in most cases.

\begin{wrapfigure}{R}{0.54\textwidth}
\vspace{-0.2in}
\begin{algorithm}[H]
 \caption{Weight computation in FedAU}
 \label{alg:weight-computation}
 
\nonl
\textbf{Input: }{$K$, $\{\Identity_t^n : \forall t,n\}$};\,
\textbf{Output: }{$\{\omega_t^n : \forall t,n\}$};\!\!\!

\For{$n = 1,\ldots,N$ in parallel \label{alg-weight-compute-line:loopWorkers}}{

    Initialize
    $M_n \leftarrow 0$, 
    $S_n^\diamond \leftarrow 0$, $\omega_0^n \leftarrow 1$;

    \For{$t = 1, \ldots, T-1$ \label{alg-weight-compute-line:loopRounds}}{

        $S_n^\diamond\leftarrow S_n^\diamond + 1$; \label{alg-weight-compute-line:perRoundStart}

        \uIf{$\Identity_{t-1}^n = 1$ or $S_n^\diamond = K$ \label{alg-weight-compute-line:conditionForUpdate}}{
            $S_n \leftarrow S_n^\diamond$; \tcp{final interval computed}

            $\omega_t^n \leftarrow \begin{cases}
                S_n, \!\!\!\!&\textrm{if } M_n = 0\\
                \frac{M_n \cdot\omega_{t-1}^n + S_n}{M_n + 1}, \!\!\!\!&\textrm{if } M_n \geq 1            
            \end{cases};\!\!\!$ \label{alg-weight-compute-line:weightUpdate}

            $M_n \leftarrow M_n + 1$;
            
            $S_n^\diamond\leftarrow 0$;
        }
        \Else{
            $\omega_t^n \leftarrow \omega_{t-1}^n$; 
        } \label{alg-weight-compute-line:perRoundEnd}

    }

}
\end{algorithm}
\vspace{-0.1in}
\end{wrapfigure}

\textbf{Online Algorithm.} Based on the above insight, we describe the procedure of computing the aggregation weights $\{\omega_t^n\}$, as shown in Algorithm~\ref{alg:weight-computation}. The computation is independent for each client $n$.
In this algorithm, the variable $M_n$ denotes the number of (possibly cutoff) participation intervals that have been collected, and $S_n^\diamond$ denotes the the length of the last interval that is being computed. We compute the interval by incrementing $S_n^\diamond$ by one in every round, until the condition in Line~\ref{alg-weight-compute-line:conditionForUpdate} holds. When this condition holds, $S_n = S_n^\diamond$ is the actual length of the latest participation interval with possible cutoff. As explained above, we always start a new interval when $S_n^\diamond$ reaches $K$. Also note that we consider $\Identity_{t-1}^n$ instead of $\Identity_t^n$ in this condition and start the loop from $t=1$ in Line~\ref{alg-weight-compute-line:loopRounds}, to align with the requirement in Algorithm~\ref{alg:main-alg} that the weights are computed from the participation records before (not including) the current round~$t$. For $t=0$, we always use $\omega_t^n=1$. In Line~\ref{alg-weight-compute-line:weightUpdate}, we compute the weight using an online averaging method, which is equivalent to averaging over all the participation intervals that have been observed until each round~$t$. With this method, we do not need to save all the previous participation intervals. Essentially, the computation in each round~$t$ only requires three state variables that are scalars, including $M_n$, $S_n^\diamond$, and the previous round's weight $\omega_{t-1}^n$. This makes this algorithm \textit{extremely memory efficient}.

In the full FedAU algorithm, we plug in the result of $\omega_n^t$ for each round $t$ obtained from Algorithm~\ref{alg:weight-computation} into Line~\ref{alg-line:computeWeight} of Algorithm~\ref{alg:main-alg}. In other words, \texttt{ComputeWeight} in Algorithm~\ref{alg:main-alg} calls one step of update that includes Lines~\ref{alg-weight-compute-line:perRoundStart}--\ref{alg-weight-compute-line:perRoundEnd} of Algorithm~\ref{alg:weight-computation}.

\textbf{Compatibility with Privacy-Preserving Mechanisms.}
In our FedAU algorithm, the aggregation weight computation (Algorithm~\ref{alg:weight-computation}) is done individually at each client, which only uses the client's participation states and does not use the training dataset %
or the model. 
When using these aggregation weights as part of FedAvg in Algorithm~\ref{alg:main-alg}, the weight $\omega_t^n$ can be multiplied with the parameter update $\Delta_t^n$ at each client $n$ (and in each round $t$) \textit{before} the update is transmitted to the server. In this way, methods such as secure aggregation \citep{bonawitz2017practical} can be applied directly, since the server only needs to compute a sum of the participating clients' updates. Differentially private FedAvg methods \citep{brendan2018learning,andrew2021differentially} can be applied in a similar way.

\textbf{Practical Implementation.}
We will see from our experimental results in Section~\ref{sec:experiments} that a coarsely chosen value of $K$ gives a reasonably good performance in practice, which means that we do not need to fine-tune $K$. There are also other engineering tweaks that can be made in practice, such as using an exponentially weighted average in Line~\ref{alg-weight-compute-line:weightUpdate} of Algorithm~\ref{alg:weight-computation} to put more emphasis on the recent participation characteristics of clients. 
In an extreme case where each client participates only once, a possible solution is to group clients that have similar computation power (e.g., same brand/model of devices) and are in similar geographical locations together. They may share the same state variables $M_n$, $S_n^\diamond$, and $\omega_{t-1}^n$ used for weight computation in Algorithm~\ref{alg:weight-computation}. We note that according to the lower bound derived by \cite{yang2021anarchic}, if each client participates only once, it is impossible to have an algorithm to converge to the original objective without sharing additional information.

\section{Convergence Analysis}
\label{sec:convergence_analysis}

\begin{assumption}
\label{assumption:smoothness}
The local objective functions are $L$-smooth, such that
\begin{align}
    \norm{\nabla F_n(\x) - \nabla F_n(\y)} \leq L\norm{\x - \y}, \forall \x, \y, n .
\end{align}    
\end{assumption}

\begin{assumption}
\label{assumption:stochastic_gradient}
The local stochastic gradients and unbiased with bounded variance, such that
\begin{align}
    \Expectcond{\g_n(\x)}{\x} = \nabla F_n(\x) \textrm{ and }
    \Expectcond{\normsq{\g_n(\x) - \nabla F_n(\x)}}{\x} \leq \sigma^2, \forall \x, n .
\end{align}
In addition, the stochastic gradient noise $\g_n(\x) - \nabla F_n(\x)$ is independent across different rounds (indexed by $t$), clients (indexed by $n$), and local update steps (indexed by $i$).
\end{assumption}

\begin{assumption}
\label{assumption:divergence}
The divergence between local and global gradients is bounded, such that
\begin{align}
    \normsq{\nabla F_n(\x) - \nabla f(\x)} \leq \delta^2, \forall \x, n .
\end{align}    
\end{assumption}

\begin{assumption}
\label{assumption:participation}
The client participation random variable $\Identity_t^n$ is independent across different $t$ and $n$. It is also independent of the stochastic gradient noise.
For each client $n$, we define $p_n$ such that $\Expectbracket{\Identity_t^n} = p_n$, i.e., $\Identity_t^n\sim\mathrm{Bernoulli}(p_n)$, where the value of $p_n$ is unknown to the system a priori.
\end{assumption}

Assumptions~\ref{assumption:smoothness}--\ref{assumption:divergence} are commonly used in the literature for the convergence analysis of FL algorithms \citep{yang2021achieving,wang2022unified,cho2023convergence}. 
Our consideration of independent participation across clients in Assumption~\ref{assumption:participation} is more realistic than the conventional setting of sampling among all the clients with or without replacement \citep{Li2020On,yang2021achieving}, because it is difficult to coordinate the participation across a large number of clients in practical FL systems.

\textbf{Challenge in Analyzing Time-Dependent Participation.}
Regarding the assumption on the independence of $\Identity_t^n$ across time (round) $t$ in Assumption~\ref{assumption:participation}, 
the challenge in analyzing the more general time-dependent participation is due to the complex interplay between the randomness in stochastic gradient noise, participation identities $\{\Identity_t^n\}$, and estimated aggregation weights $\{\omega_t^n\}$. In particular, the first step in our proof of the general descent lemma (see Appendix~\ref{appendix:general_descent_lemma}, the specific step is in \eqref{eq:generalDescentProofSmoothness}) would not hold if $\Identity_t^n$ is dependent on the past, because the past information is contained in $\x_t$ and $\{\omega_t^n\}$ that are conditions of the expectation.
We emphasize that this is a \textit{purely theoretical limitation}, and this time-independence of client participation has been assumed in the majority of works on FL with client sampling \citep{fraboni2021clustered,fraboni2021impact,karimireddy2020scaffold,li2020federatedOptimization,Li2020On,yang2021achieving}. The novelty in our analysis is that we consider the true values of $\{p_n\}$ to be \textit{unknown} to the system. Our experimental results in Section~\ref{sec:experiments} show that FedAU provides performance gains also for Markovian and cyclic participation patterns that are both time-dependent.

\begin{assumption}
\label{assumption:Psi_G}
We assume that \emph{\textbf{either}} of the following holds and define $\Psi_G$ accordingly. 
    \begin{itemize}
        \item \emph{\textbf{Option 1:} Nearly optimal weights.} Under the assumption that $\frac{1}{N}\sum_{n=1}^N \left(p_n \omega_t^n - 1\right)^2 \leq \frac{1}{81}$ for all $t$, we define $\Psi_G := 0$.
        \item \emph{\textbf{Option 2:} Bounded global gradient.} Under the assumption that $\normsq{\nabla f(\x)} \leq G^2$ for any $\x$, we define $\Psi_G := G^2$.
    \end{itemize}
\end{assumption}

Assumption~\ref{assumption:Psi_G} is only needed for Theorem~\ref{theorem:convergence_original} (stated below) and not for Theorem~\ref{theorem:convergence_alternative_informal}. Here, the bounded global gradient assumption is a relaxed variant of the bounded stochastic gradient assumption commonly used in adaptive gradient algorithms \citep{reddi2021adaptive,wang2022communication1,wang2022communication2}. Although focusing on very different problems, our FedAU method shares some similarities with adaptive gradient methods in the sense that we both adapt the weights used in model updates, where the adaptation is dependent on some parameters that progressively change during the training process; see Appendix~\ref{appendix:bounded_gradient_assumption} for some further discussion. For the nearly optimal weights assumption, we can see that it holds if $\sfrac{8}{9p_n} \leq \omega_t^n \leq \sfrac{10}{9p_n}$, which means a toleration of a relative error of $\sfrac{1}{9}\approx 11\%$ from the optimal weight $\sfrac{1}{p_n}$. Theorem~\ref{theorem:convergence_original} holds under \textit{either of} these two additional assumptions.

\textbf{Main Results.}
We now present our main results, starting with the convergence of Algorithm~\ref{alg:main-alg} with arbitrary (but given) weights $\{\omega_n^t\}$ with respect to (w.r.t.) the original objective function in \eqref{eq:original_objective}.
\begin{theorem}[Convergence error w.r.t. \eqref{eq:original_objective}]
\label{theorem:convergence_original}
    Let $\gamma\leq\frac{1}{4\sqrt{15}LI}$ and $\gamma\eta\leq\min\big\{\frac{1}{4LI}; \frac{N}{54LIQ}\big\}$, where $Q:= \max_{t\in\{0,\ldots,T-1\}} \frac{1}{N}\sum_{n=1}^N p_n (\omega_t^n)^2$. When Assumptions~\ref{assumption:smoothness}--\ref{assumption:Psi_G} hold, the result $\{\x_t\}$ obtained from Algorithm~\ref{alg:main-alg} satisfies:
    \begin{align}
        &\textstyle\frac{1}{T}\sum_{t=0}^{T-1}\Expectbracket{\normsq{\nabla f(\x_t)}} \label{eq:convergence_original} \\
        &\!\leq\! \mathcal{O}\!\Bigg(\!\!\frac{\mathcal{F}}{\gamma\eta IT} \!+\! \frac{\Psi_G \!+\! \delta^2 \!+\! \gamma^2L^2I\sigma^2}{NT} \sum_{t=0}^{T-1}\sum_{n=1}^N \Expectsubbracket{\!\!}{\!\left(p_n \omega_t^n \!-\! 1\right)^{\!2}} \!+\! \frac{\gamma\eta LQ\!\left(\!I \delta^2 \!+\! \sigma^2\right)\!}{N} + \gamma^2L^2I\!\left(\!I\delta^2 \!+ \!\sigma^2\right)\!\!\!\Bigg)\!, \nonumber
    \end{align}
    where $\mathcal{F}:=f(\x_0) - f^*$, and $f^* := \min_\x f(\x)$ is the truly minimum value of the objective in \eqref{eq:original_objective}.
\end{theorem}

The proof of Theorem~\ref{theorem:convergence_original} includes a novel step to obtain 
$\frac{1}{NT} \sum_{t=0}^{T-1}\sum_{n=1}^N \Expect\big[\left(p_n \omega_t^n - 1\right)^2\big]$ (ignoring the other constants), referred to as the \textit{weight error term}, that characterizes how the aggregation weights $\{\omega_n^t\}$ affect the convergence. 
Next, we focus on $\{\omega_t^n\}$ obtained from Algorithm~\ref{alg:weight-computation}.

\begin{theorem}[Bounding the weight error term]
    \label{theorem:weight_error_bound}
    For $\{\omega_t^n\}$ obtained from Algorithm~\ref{alg:weight-computation}, when $T\geq 2$, %
    \begin{align}
        \frac{1}{NT} \sum_{t=0}^{T-1}\sum_{n=1}^N \Expectbracket{\left(p_n \omega_t^n - 1\right)^2} \leq \mathcal{O}\left(\frac{K\log T}{T} + \frac{1}{N}\sum_{n=1}^N(1-p_n)^{2K}\right).
        \label{eq:weight_error_bound}
    \end{align}
\end{theorem}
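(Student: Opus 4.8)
The plan is to fix a client $n$, control $\Expectbracket{(p_n\omega_t^n - 1)^2}$ for each round $t$ via a bias--variance decomposition around the interval mean $\mu_n := \Expectbracket{S_n}$, and then average over $t$ and $n$. Writing $p_n\omega_t^n - 1 = p_n(\omega_t^n - \mu_n) + (p_n\mu_n - 1)$ and using $(a+b)^2 \le 2a^2 + 2b^2$ gives
\begin{align}
    \Expectbracket{(p_n\omega_t^n - 1)^2} \le 2p_n^2\Expectbracket{(\omega_t^n - \mu_n)^2} + 2(p_n\mu_n - 1)^2. \nonumber
\end{align}
By Lemma~\ref{lemma:interval_mean_variance} we have $p_n\mu_n = 1 - (1-p_n)^K$, so the \emph{bias} term is exactly $(p_n\mu_n-1)^2 = (1-p_n)^{2K}$; averaging it over $t$ and $n$ already produces the second term of \eqref{eq:weight_error_bound}. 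The remaining work is to bound the \emph{variance} term $p_n^2\Expectbracket{(\omega_t^n - \mu_n)^2}$.

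For the variance term I would first observe that $\omega_t^n$ equals the empirical mean $\bar S_{M_n(t)} := \frac{1}{M_n(t)}\sum_{j=1}^{M_n(t)} S_n^{(j)}$ of the $M_n(t)$ completed intervals recorded by round $t$ (and $\omega_t^n = 1$ when $M_n(t)=0$), where, by Assumption~\ref{assumption:participation} and the reset rule of Algorithm~\ref{alg:weight-computation}, the $S_n^{(j)}$ are i.i.d.\ draws from the cutoff geometric law \eqref{eq:cutoff_geometric}. The crucial observation -- and precisely the reason for introducing the cutoff $K$ -- is that every interval has length at most $K$, so the completed intervals consume all but at most $K-1$ of the $t$ elapsed rounds, giving the \emph{deterministic} lower bound $M_n(t) \ge m_0(t) := (t-K+1)/K$. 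In particular $M_n(t)\ge 1$ once $t\ge 2K-1$, and $m_0(t) \ge t/(2K)$ for $t\ge 2K$.

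Because $M_n(t)$ is itself correlated with the samples being averaged, I would avoid conditioning on its value and instead use the \emph{pathwise} bound $(\omega_t^n - \mu_n)^2 \le \sup_{m \ge m_0(t)}(\bar S_m - \mu_n)^2$, which holds on every realization with $M_n(t)\ge 1$ and whose right-hand side depends only on the i.i.d.\ sequence. Setting $Z_m := \sum_{j=1}^m(S_n^{(j)} - \mu_n)$, a mean-zero martingale with $\Expectbracket{Z_m^2} = m\,\Variancebracket{S_n}$, a dyadic decomposition of $\{m\ge m_0\}$ into blocks $[2^\ell m_0, 2^{\ell+1}m_0)$ together with Doob's $L^2$ maximal inequality on each block yields $\Expectbracket{\sup_{m\ge m_0}(\bar S_m - \mu_n)^2} \le \mathcal{O}\!\left(\Variancebracket{S_n}/m_0\right)$. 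Since Lemma~\ref{lemma:interval_mean_variance} gives $\Variancebracket{S_n}\le (1-p_n)/p_n^2$, we have $p_n^2\,\Variancebracket{S_n}\le 1$, so $p_n^2\Expectbracket{(\omega_t^n - \mu_n)^2} \le \mathcal{O}(1/m_0(t))$.

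It then remains to sum over $t$, splitting the round range at $t\approx 2K$. The $\mathcal{O}(K)$ early rounds -- including the cases $M_n(t)=0$ and $t=0$, where $\omega_t^n=1$ and the error is trivially $\mathcal{O}(1)$ -- contribute $\mathcal{O}(K/T)$ after normalization. For $t\ge 2K$ we have $1/m_0(t)\le 2K/t$, so $\frac{1}{T}\sum_{t\ge 2K}\mathcal{O}(K/t) = \mathcal{O}\!\left(\frac{K}{T}\sum_{t=1}^{T}\frac{1}{t}\right) = \mathcal{O}(K\log T / T)$, which is the first term of \eqref{eq:weight_error_bound}; averaging over $n$ is immediate since all bounds are uniform in $n$. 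I expect the main obstacle to be exactly the variance term: handling the empirical mean of a random, data-dependent number of samples $M_n(t)$. The resolution hinges on the deterministic lower bound on $M_n(t)$ furnished by the cutoff, combined with the pathwise supremum and maximal-inequality argument, which jointly decouple the randomness in $M_n(t)$ from that in the interval values.
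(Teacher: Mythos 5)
Your proposal is correct, and its skeleton matches the paper's proof: both split $\Expectbracket{\left(p_n \omega_t^n - 1\right)^2}$ into a bias part, which Lemma~\ref{lemma:interval_mean_variance} identifies as exactly $(1-p_n)^{2K}$, plus a variance part; both use the cutoff $K$ to lower-bound the number of completed intervals by roughly $t/K$ \emph{deterministically}; and both finish with the harmonic sum $\sum_{t=1}^{T-1} \sfrac{1}{t} \leq \log T + 1$ together with an $\mathcal{O}(1)$ charge for the $\mathcal{O}(K)$ early rounds. The genuine difference is in how the variance part is justified. The paper cancels the cross term exactly by asserting $\Expectbracket{\omega_t^n} = \Expectbracket{S_n}$, and then bounds $\Variancebracket{\omega_t^n} \leq \Variancebracket{S_n} / \max\left\{\lfloor t/K \rfloor, 1\right\}$ as though $\omega_t^n$ were an average of a \emph{fixed} number of i.i.d.\ samples; both steps quietly ignore that the number of completed intervals $M_n(t)$ is a function of the same underlying randomness, so conditioning on its value tilts the law of the averaged samples. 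Your argument addresses precisely this point: the Young-inequality split around the deterministic constant $\mu_n$ removes any need for unbiasedness of $\omega_t^n$ (costing only a factor $2$ absorbed into $\mathcal{O}(\cdot)$), and the pathwise bound $(\omega_t^n - \mu_n)^2 \leq \sup_{m \geq m_0(t)} (\bar{S}_m - \mu_n)^2$ followed by the dyadic-block Doob argument, which indeed yields $\Expectbracket{\sup_{m \geq m_0} (\bar{S}_m - \mu_n)^2} \leq \mathcal{O}\left(\Variancebracket{S_n}/m_0\right)$, rigorously decouples the random index from the sample values. So your route buys rigor exactly where the paper's own write-up is informal, at the price of martingale machinery and slightly worse constants, while the paper's route is shorter and gives the cleaner per-client statement $\frac{3K + 2K\log T}{T} + (1-p_n)^{2K}$. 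The details you left implicit all check out: $M_n(t) \geq (t-K+1)/K$ holds on every sample path, the completed intervals are i.i.d.\ cutoff-geometric by Assumption~\ref{assumption:participation} and the reset rule of Algorithm~\ref{alg:weight-computation}, early rounds with $M_n(t) \geq 1$ are covered by your bound with $m_0 = 1$ since $p_n^2 \Variancebracket{S_n} \leq 1 - p_n \leq 1$, and the leftover $\sfrac{K}{T}$ is absorbed by $\sfrac{K \log T}{T}$.
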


The proof of Theorem~\ref{theorem:weight_error_bound} is based on analyzing the unique statistical properties of the possibly cutoff participation interval $S_n$ obtained in Algorithm~\ref{alg:weight-computation}.
The first term of the bound in~\eqref{eq:weight_error_bound} is related to the variance of $\omega_t^n$. This term increases linearly in $K$, because when $K$ gets larger, the minimum number of samples of $S_n$ that are used for computing $\omega_t^n$ gets smaller, thus the variance upper bound becomes larger. The second term of the bound in~\eqref{eq:weight_error_bound} is related to the bias of $\omega_t^n$, which measures how far $\Expectbracket{\omega_t^n}$ departs from the desired quantity of $\sfrac{1}{p_n}$. Since $0<p_n\leq 1$, this term decreases exponentially in $K$. This result \textit{confirms the bias-variance tradeoff} of $\omega_t^n$ that we mentioned earlier.

\begin{corollary}[Convergence of FedAU]
    \label{corollary:linear_speedup}
    Let $K = \left\lceil\log_c T\right\rceil$ with $c:=\sfrac{1}{(1-\min_n p_n)^2}$,
    $\gamma=\min\left\{\frac{1}{LI\sqrt{T}}; \frac{1}{4\sqrt{15}LI}\right\}$, and choose $\eta$ such that $\gamma\eta=\min\left\{\sqrt{\frac{\mathcal{F}N}{Q\left(I \delta^2 + \sigma^2\right)LIT}}; \frac{1}{4LI}; \frac{N}{54LIQ}\right\}$. When $T \geq 2$, the result $\{\x_t\}$ obtained from Algorithm~\ref{alg:main-alg} that uses $\{\omega_t^n\}$ obtained from Algorithm~\ref{alg:weight-computation} satisfies
    \begin{align}
        &\textstyle \frac{1}{T}\sum_{t=0}^{T-1}\Expectbracket{\normsq{\nabla f(\x_t)}} \nonumber \\
        &\leq \mathcal{O}\Bigg(\frac{\sigma\sqrt{L\mathcal{F}Q}}{\sqrt{NIT}} + \frac{\delta\sqrt{L\mathcal{F}Q}}{\sqrt{NT}} +  \frac{\big(\Psi_G + \delta^2 + \frac{\sigma^2}{IT}\big)R\log^2T}{T} +  \frac{L\mathcal{F}\big(1+\frac{Q}{N}\big) + \delta^2 + \frac{\sigma^2}{I}}{T}\Bigg),
        \label{eq:linear_speedup}
    \end{align}
    where $Q$ and $\Psi_G$ are defined in Theorem~\ref{theorem:convergence_original} and $R:=\sfrac{1}{\log c}$.
\end{corollary}

The result in Corollary~\ref{corollary:linear_speedup} is the convergence upper bound of the full FedAU algorithm. Its proof involves further bounding \eqref{eq:weight_error_bound} in Theorem~\ref{theorem:weight_error_bound}, when choosing $K = \log_c T$, and plugging back the result along with the values of $\gamma$ and $\eta$ into Theorem~\ref{theorem:convergence_original}. It shows that, with properly estimated aggregation weights $\{\omega_t^n\}$ using Algorithm~\ref{alg:weight-computation}, the error approaches zero as $T\rightarrow\infty$, although the actual participation statistics are unknown.
The first two terms of the bound in~\eqref{eq:linear_speedup} dominate when $T$ is large enough, which are related to the stochastic gradient variance $\sigma^2$ and gradient divergence $\delta^2$. The error caused by the fact that $\{p_n\}$ is unknown is captured by the third term of the bound in~\eqref{eq:linear_speedup}, which has an order of $\mathcal{O}(\sfrac{\log^2T}{T})$.
We also see that, as long as we maintain $T$ to be large enough so that the first two terms of the bound in~\eqref{eq:linear_speedup} dominate, we can achieve the desirable property of \textit{linear speedup} in $N$. This means that we can keep the same convergence error by increasing the number of clients ($N$) and decreasing the number of rounds ($T$), to the extent that $T$ remains large enough. Our result also recovers existing convergence bounds for FedAvg in the case of known participation probabilities \citep{karimireddy2020scaffold,yang2021achieving}; see Appendix~\ref{appendix:recover_existing_results} for details.

\section{Experiments}
\label{sec:experiments}

\begin{table}[!b]
\vspace{-1em}
\caption{Accuracy results (in \%) on training and test data \vspace{-0.5em}}
\label{table:results}
\scriptsize
{
\centering
\renewcommand{\arraystretch}{1.1}
\begin{tabular}{>{\centering\arraybackslash}p{0.08\linewidth} | >{\centering\arraybackslash}p{0.22\linewidth} | >{\centering\arraybackslash\!\!}p{0.05\linewidth} | >{\centering\arraybackslash\!\!}p{0.05\linewidth} | >{\centering\arraybackslash\!\!}p{0.05\linewidth} | >{\centering\arraybackslash\!\!}p{0.05\linewidth} | >{\centering\arraybackslash\!\!}p{0.05\linewidth} | >{\centering\arraybackslash\!\!}p{0.05\linewidth} | >{\centering\arraybackslash\!\!}p{0.05\linewidth} | >{\centering\arraybackslash\!\!}p{0.05\linewidth} } 
\thickhline
\multirow{2}{*}{\!\!\!\makecell{\textbf{Participation}\\\textbf{pattern}}} & \textbf{Dataset} & \multicolumn{2}{c|}{SVHN} & \multicolumn{2}{c|}{CIFAR-10} & \multicolumn{2}{c|}{CIFAR-100} & \multicolumn{2}{c}{CINIC-10}\\
\cline{2-10}
 & \textbf{Method / Metric} & \, Train & \, Test & \, Train & \, Test & \, Train & \, Test & \, Train & \, Test \\
\thickhline
\multirow{7}{*}{Bernoulli} & FedAU (ours, $K\rightarrow\infty$) & 90.4{\tiny $\pm$0.5} & 89.3{\tiny $\pm$0.5} & 85.4{\tiny $\pm$0.4} & 77.1{\tiny $\pm$0.4} & 63.4{\tiny $\pm$0.6} & \textbf{52.3}{\tiny $\pm$0.4} & 65.2{\tiny $\pm$0.5} & 61.5{\tiny $\pm$0.4} \\
 & FedAU (ours, $K=50$) & \textbf{90.6}{\tiny $\pm$0.4} & \textbf{89.6}{\tiny $\pm$0.4} & \textbf{86.0}{\tiny $\pm$0.5} & \textbf{77.3}{\tiny $\pm$0.3} & \textbf{63.8}{\tiny $\pm$0.3} & 52.1{\tiny $\pm$0.6} & \textbf{66.7}{\tiny $\pm$0.3} & \textbf{62.7}{\tiny $\pm$0.2} \\
 & Average participating & 89.1{\tiny $\pm$0.3} & 87.2{\tiny $\pm$0.3} & 83.5{\tiny $\pm$0.9} & 74.1{\tiny $\pm$0.8} & 59.3{\tiny $\pm$0.4} & 48.8{\tiny $\pm$0.7} & 61.1{\tiny $\pm$2.3} & 56.6{\tiny $\pm$2.0}  \\
 & Average all & 88.5{\tiny $\pm$0.5} & 87.0{\tiny $\pm$0.3} & 81.0{\tiny $\pm$0.9} & 72.7{\tiny $\pm$0.9} & 58.2{\tiny $\pm$0.4} & 47.9{\tiny $\pm$0.5} & 60.5{\tiny $\pm$2.3} & 56.2{\tiny $\pm$2.0}  \\
\clineB{2-10}{2}
 & FedVarp ($250\times$ memory) & \underline{89.6}{\tiny $\pm$0.5} & \underline{88.9}{\tiny $\pm$0.5} & 84.2{\tiny $\pm$0.3} & \underline{77.9}{\tiny $\pm$0.2} & 57.2{\tiny $\pm$0.9} & 49.2{\tiny $\pm$0.8} & \underline{64.4}{\tiny $\pm$0.6} & \underline{62.0}{\tiny $\pm$0.5}  \\
 & MIFA ($250\times$ memory) & 89.4{\tiny $\pm$0.3} & 88.7{\tiny $\pm$0.2} & 83.5{\tiny $\pm$0.6} & 77.5{\tiny $\pm$0.3} & 55.8{\tiny $\pm$1.1} & 48.4{\tiny $\pm$0.7} & 63.8{\tiny $\pm$0.7} & 61.5{\tiny $\pm$0.5} \\
 & \!\!\!Known participation statistics\!\!\! & 89.2{\tiny $\pm$0.5} & 88.4{\tiny $\pm$0.5} & \underline{84.3}{\tiny $\pm$0.5} & 77.0{\tiny $\pm$0.5} & \underline{59.4}{\tiny $\pm$0.7} & \underline{50.6}{\tiny $\pm$0.4} & 63.2{\tiny $\pm$0.6} & 60.5{\tiny $\pm$0.5} \\
\thickhline
\multirow{7}{*}{Markovian} & FedAU (ours, $K\rightarrow\infty$) & 90.5{\tiny $\pm$0.4} & 89.3{\tiny $\pm$0.4} & 85.3{\tiny $\pm$0.3} & 77.1{\tiny $\pm$0.3} & 63.2{\tiny $\pm$0.5} & \textbf{51.8}{\tiny $\pm$0.3} & 64.9{\tiny $\pm$0.3} & 61.2{\tiny $\pm$0.2} \\
 & FedAU (ours, $K=50$) & \textbf{90.6}{\tiny $\pm$0.3} & \textbf{89.5}{\tiny $\pm$0.3} & \textbf{85.9}{\tiny $\pm$0.5} & \textbf{77.2}{\tiny $\pm$0.3} & \textbf{63.5}{\tiny $\pm$0.4} & 51.7{\tiny $\pm$0.3} & \textbf{66.3}{\tiny $\pm$0.4} & \textbf{62.3}{\tiny $\pm$0.2}  \\
 & Average participating & 89.0{\tiny $\pm$0.3} & 87.1{\tiny $\pm$0.2} & 83.4{\tiny $\pm$0.9} & 74.2{\tiny $\pm$0.7} & 59.2{\tiny $\pm$0.4} & 48.6{\tiny $\pm$0.4} & 61.5{\tiny $\pm$2.3} & 56.9{\tiny $\pm$1.9} \\
 & Average all & 88.4{\tiny $\pm$0.6} & 86.8{\tiny $\pm$0.7} & 80.8{\tiny $\pm$1.0} & 72.5{\tiny $\pm$0.5} & 57.8{\tiny $\pm$0.9} & 47.7{\tiny $\pm$0.5} & 59.9{\tiny $\pm$2.8} & 55.7{\tiny $\pm$2.2} \\
\clineB{2-10}{2}
 & FedVarp ($250\times$ memory) & \underline{89.6}{\tiny $\pm$0.3} & \underline{88.6}{\tiny $\pm$0.2} & 84.0{\tiny $\pm$0.3} & \underline{77.8}{\tiny $\pm$0.2} & 56.4{\tiny $\pm$1.1} & 48.8{\tiny $\pm$0.5} & \underline{64.6}{\tiny $\pm$0.4} & \underline{62.1}{\tiny $\pm$0.4} \\
 & MIFA ($250\times$ memory) & 89.1{\tiny $\pm$0.3} & 88.4{\tiny $\pm$0.2} & 83.0{\tiny $\pm$0.4} & 77.2{\tiny $\pm$0.4} & 55.1{\tiny $\pm$1.2} & 48.1{\tiny $\pm$0.6} & 63.5{\tiny $\pm$0.7} & 61.2{\tiny $\pm$0.6} \\
 & \!\!\!Known participation statistics\!\!\! & 89.5{\tiny $\pm$0.2} & \underline{88.6}{\tiny $\pm$0.2} & \underline{84.5}{\tiny $\pm$0.4} & 76.9{\tiny $\pm$0.3} & \underline{59.7}{\tiny $\pm$0.5} & \underline{50.3}{\tiny $\pm$0.5} & 63.5{\tiny $\pm$0.9} & 60.7{\tiny $\pm$0.6} \\
\thickhline
\multirow{7}{*}{Cyclic} & FedAU (ours, $K\rightarrow\infty$) & 89.8{\tiny$\pm$0.6} & 88.7{\tiny$\pm$0.6} & 84.2{\tiny$\pm$0.8} & 76.3{\tiny$\pm$0.7} & 60.9{\tiny$\pm$0.6} & 50.6{\tiny$\pm$0.3} & 63.5{\tiny$\pm$1.0} & 60.0{\tiny$\pm$0.8} \\
 & FedAU (ours, $K=50$) & \textbf{89.9}{\tiny$\pm$0.6} & \textbf{88.8}{\tiny$\pm$0.6} & \textbf{84.8}{\tiny$\pm$0.6} & \textbf{76.6}{\tiny$\pm$0.4} & \textbf{61.3}{\tiny$\pm$0.8} & \textbf{51.0}{\tiny$\pm$0.5} & \textbf{64.5}{\tiny$\pm$0.9} & \textbf{60.9}{\tiny$\pm$0.7} \\
 & Average participating & 87.4{\tiny$\pm$0.5} & 85.5{\tiny$\pm$0.7} & 81.6{\tiny$\pm$1.2} & 73.3{\tiny$\pm$0.8} & 58.1{\tiny$\pm$1.0} & 48.3{\tiny$\pm$0.8} & 58.9{\tiny$\pm$2.1} & 55.0{\tiny$\pm$1.6} \\
 & Average all & 89.1{\tiny$\pm$0.8} & 87.4{\tiny$\pm$0.8} & 83.1{\tiny$\pm$1.0} & 73.8{\tiny$\pm$0.8} & 59.7{\tiny$\pm$0.3} & 48.8{\tiny$\pm$0.4} & 62.9{\tiny$\pm$1.7} & 57.6{\tiny$\pm$1.5} \\
\clineB{2-10}{2}
 & FedVarp ($250\times$ memory) & 84.8{\tiny$\pm$0.5} & 83.9{\tiny$\pm$0.6} & 79.7{\tiny$\pm$0.9} & 75.3{\tiny$\pm$0.7} & 50.9{\tiny$\pm$0.5} & 45.9{\tiny$\pm$0.4} & 60.4{\tiny$\pm$0.7} & 58.5{\tiny$\pm$0.6} \\
 & MIFA ($250\times$ memory) & 78.6{\tiny$\pm$1.2} & 77.4{\tiny$\pm$1.1} & 73.0{\tiny$\pm$1.3} & 70.6{\tiny$\pm$1.1} & 44.8{\tiny$\pm$0.6} & 41.1{\tiny$\pm$0.6} & 51.2{\tiny$\pm$1.0} & 50.2{\tiny$\pm$0.9} \\
 & \!\!\!Known participation statistics\!\!\! & \underline{89.9}{\tiny$\pm$0.7} & \underline{88.7}{\tiny$\pm$0.6} & \underline{83.6}{\tiny$\pm$0.7} & \underline{76.1}{\tiny$\pm$0.5} & \underline{60.2}{\tiny$\pm$0.4} & \underline{50.8}{\tiny$\pm$0.4} & \underline{62.6}{\tiny$\pm$0.8} & \underline{59.8}{\tiny$\pm$0.7} \\
\thickhline
\end{tabular}
}

\textit{Note to the table.} The top part of the sub-table for each participation pattern includes our method and baselines in the same setting. The bottom part of each sub-table includes baselines that require either additional memory or known participation statistics. For each column, the best values in the top and bottom parts are highlighted with \textbf{bold} and \underline{underline}, respectively. The total number of rounds is $2,000$ for SVHN; $10,000$ for CIFAR-10 and CINIC-10; $20,000$ for CIFAR-100.
The mean and standard deviation values shown in the table are computed over experiments with $5$ different random seeds, for the average accuracy over the last $200$ rounds (measured at an interval of $10$ rounds).
\vspace{-0.6em}
\end{table}

We evaluate the performance of FedAU in experiments.
\textit{More experimental setup details, including the link to the code, and results are in Appendices~\ref{appendix:experiments_setup} and \ref{appendix:experiments_results}, respectively. 
} 

\textbf{Datasets, Models, and System.}
We consider four image classification tasks, with datasets including SVHN \citep{SVHN}, CIFAR-10 \citep{CIFAR10}, CIFAR-100 \citep{CIFAR10}, and CINIC-10 \citep{CINIC10}, where CIFAR-100 has $100$ classes (labels) while the other datasets have $10$ classes. We use FL train convolutional neural network (CNN) models of slightly different architectures for these tasks. We simulate an FL system that includes a total of $N=250$ clients, where each $n$ has its own participation pattern.

\textbf{Heterogeneity.}
Similar to existing works \citep{hsu2019measuring,reddi2021adaptive}, we use a Dirichlet distribution with parameter $\alpha_d = 0.1$ to generate the class distribution of each client's data, for a setup with non-IID data across clients. Here, $\alpha_d$ specifies the degree of \textit{data heterogeneity}, where a smaller $\alpha_d$ indicates a more heterogeneous data distribution. In addition, to simulate the correlation between data distribution and client participation frequency as motivated in Section~\ref{sec:introduction}, we generate a class-wide participation probability distribution that follows a Dirichlet distribution with parameter $\alpha_p = 0.1$. Here, $\alpha_p$ specifies the degree of \textit{participation heterogeneity}, where a smaller $\alpha_p$ indicates more heterogeneous participation across clients. 
We generate client participation patterns following a random process that is either Bernoulli (independent), Markovian, or cyclic, and study the performance of these types of participation patterns in different experiments. The participation patterns have a stationary probability $p_n$, for each client $n$, that is generated according to a combination of the two aforementioned Dirichlet distributions, and the details are explained in Appendix~\ref{appendix:generating_heterogeneous_participation}. We enforce the minimum $p_n$, $\forall n$, to be $0.02$ in the main experiments, which is relaxed later.
This generative approach creates an experimental scenario with non-IID client participation, while our FedAU algorithm and most baselines still do not know the actual participation statistics.

\textbf{Baselines.}
We compare our FedAU algorithm with several baselines. The first set of baselines includes algorithms that compute an average of parameters over either all the participating clients (\textit{average participating}) or all the clients (\textit{average all}) in the aggregation stage of each round, where the latter case includes updates of non-participating clients that are equal to zero as part of averaging. These two baselines encompass most existing FedAvg implementations (e.g., \cite{yang2021achieving,mcmahan2017communication,patel2022towards}) that do not address the bias caused by heterogeneous participation statistics. They do not require additional memory or knowledge, thus they work under the same system assumptions as FedAU. The second set of baselines has algorithms that require \textit{extra} resources or information, including \textit{FedVarp} \citep{jhunjhunwala2022fedvarp} and \textit{MIFA} \citep{gu2021fast}, which require $N=250$ times of memory, and an idealized baseline that assumes \textit{known participation statistics} and weighs the clients' contributions using the reciprocal of the stationary participation probability.
For each baseline, we performed a separate grid search to find the best $\gamma$ and $\eta$.

\textbf{Results.}
The main results are shown in Table~\ref{table:results}, where we choose $K=50$ for FedAU with finite $K$ based on a simple rule-of-thumb without detailed search.
Our general observation is that \textit{FedAU provides the highest accuracy compared to almost all the baselines}, including those that require additional memory and known participation statistics, except for the test accuracy on the CIFAR-10 dataset where FedVarp performs the best. Choosing $K=50$ generally gives a better performance than choosing $K\rightarrow\infty$ for FedAU, which aligns with our discussion in Section~\ref{sec:FedAU}.

The reason that FedAU can perform better than FedVarp and MIFA is that these baselines keep historical local updates, which may be outdated when some clients participate infrequently. Updating the global model parameter with outdated local updates can lead to slow convergence, which is similar to the consequence of having stale updates in asynchronous SGD \citep{recht2011hogwild}. In contrast, at the beginning of each round, participating clients in FedAU always start with the latest global parameter obtained from the server. This avoids stale updates, and we compensate heterogeneous participation statistics by adapting the aggregation weights, which is a fundamentally different and more efficient method compared to tracking historical updates as in FedVarp and MIFA.

It is surprising that FedAU even performs better than the case with known participation statistics. 
To understand this phenomenon, we point out that in the case of Bernoulli-distributed participation with very low probability (e.g., $p_n=0.02$), the empirical probability of a sample path of a client's participation can diverge significantly from $p_n$. For $T=10,000$ rounds, the standard deviation of the total number of participated rounds is $\sigma':=\sqrt{Tp_n(1-p_n)}=0.0196=14$ while the mean is $\mu':=Tp_n=200$. Considering the range within $2\sigma'$, we know that the optimal participation weight when seen on the \textit{empirical} probability ranges from $\sfrac{T}{(\mu'+2\sigma')}\approx 43.9$ to $\sfrac{T}{(\mu'-2\sigma')}\approx 58.1$, while the optimal weight computed on the \textit{model-based} probability is $\sfrac{1}{p_n}=50$.
Our FedAU algorithm computes the aggregation weights from the \textit{actual participation sample path} of each client, which captures the actual client behavior and empirically performs better than using $\sfrac{1}{p_n}$ even if $p_n$ is known. Some experimental results that further explain this phenomenon are in Appendix~\ref{appendix:aggregation_weight_results}.

\begin{wrapfigure}{R}{0.3\textwidth}
    \vspace{-0.2in}
     \centering
     \!\!\!\includegraphics[width=0.32\textwidth]{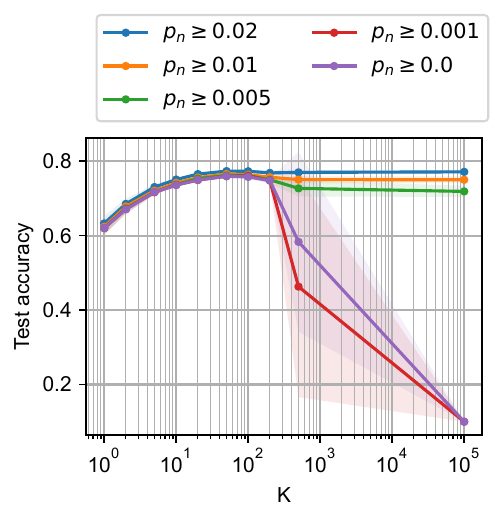}\vspace{-0.8em}
     \caption{FedAU with different $K$ (CIFAR-10 with Bernoulli participation).\vspace{-2em}}
    \label{fig:different_K_main}
\end{wrapfigure}
As mentioned earlier, we lower-bounded $p_n$, $\forall n$, by $0.02$ for the main results. Next, we consider different lower bounds of $p_n$, where a smaller lower bound of $p_n$ means that there exist clients that participate less frequently. 
The performance of FedAU with different choices of $K$ and different lower bounds of $p_n$ is shown in Figure~\ref{fig:different_K_main}.
We observe that choosing $K=50$ always gives the best performance; the performance remains similar even when the lower bound of $p_n$ is small and there exist some clients that participate very infrequently. However, choosing a large $K$ (e.g., $K\geq 500$) significantly deteriorates the performance when the lower bound of $p_n$ is small. This means that having a finite cutoff interval $K$ of an intermediate value (i.e., $K=50$ in our experiments) for aggregation weight estimation, which is a uniqueness of FedAU, is essential especially when very infrequently participating clients exist.

\section{Conclusion}

In this paper, we have studied the challenging practical FL scenario of having unknown participation statistics of clients. To address this problem, we have considered the adaptation of aggregation weights based on the participation history observed at each individual client. Using a new consideration of the bias-variance tradeoff of the aggregation weight, we have obtained the FedAU algorithm.
Our analytical methodology includes a unique decomposition which yields a separate weight error term that is further bounded to obtain the convergence upper bound of FedAU. Experimental results have confirmed the advantage of FedAU with several client participation patterns.
Future work can study the convergence analysis of FedAU with more general participation processes and the incorporation of aggregation weight adaptation into other types of FL algorithms.

\clearpage

\section*{Acknowledgment}
The work of M. Ji was supported by the National Science Foundation (NSF) CAREER Award 2145835.

\bibliography{references}
\bibliographystyle{iclr2024_conference}

\clearpage

\appendix

\begin{center}
    {\bf\Large Appendix}
\end{center}

\startcontents[sections]
\printcontents[sections]{l}{1}{\setcounter{tocdepth}{2}}

\setcounter{section}{0}
\renewcommand\thesection{\Alph{section}}
\numberwithin{equation}{subsection}
\counterwithin{figure}{subsection}
\counterwithin{table}{subsection}
\counterwithin{algocf}{subsection}
\counterwithin{theorem}{subsection}
\counterwithin{definition}{subsection}
\numberwithin{equation}{subsection}

\clearpage

\section{Additional Discussion}

\subsection{Extending Objective (\ref{eq:original_objective}) to Weighted Average}
\label{appendix:objective_extension}

We note that our objective \eqref{eq:original_objective} can be easily extended to a weighted average of per-client empirical risk (i.e., average of sample losses), with arbitrary weights $\{q_n:\forall n\}$. To see this, let $\hat{F}_n(\mathbf{x})$ denote the (local) empirical risk of client $n$, and let $\Gamma:=\sum_{n=1}^Nq_n$. We can define the local objective of client $n$ as $F_n(\mathbf{x})=\frac{q_nN}{\Gamma}\hat{F}_n(\mathbf{x})$, which gives us the global objective of 
\begin{align}
    f(\mathbf{x}) = \frac{1}{N}\sum_{n=1}^N F_n(\mathbf{x})=\frac{1}{\Gamma}\sum_{n=1}^Nq_n\hat{F}_n(\mathbf{x}).    
\end{align}
This objective is in a standard form seen in most FL papers. The extension allows us to give different importance to different clients, if needed. For simplicity, we do not write out the weights $\{q_n\}$ in the main paper, because this extension to arbitrary weights $\{q_n\}$ is straightforward, and such a simplification has also been made in various other works such as \cite{jhunjhunwala2022fedvarp,karimireddy2020scaffold,reddi2021adaptive,wang2022unified}.

\subsection{Assumption on Bounded Global Gradient}
\label{appendix:bounded_gradient_assumption}

As stated in Theorem~\ref{theorem:convergence_original}, our convergence result holds when \textit{either} of the ``bounded global gradient'' assumption or the ``nearly optimal weights'' assumption holds. When the aggregation weights $\{\omega_t^n\}$ are nearly optimal satisfying $\frac{1}{N}\sum_{n=1}^N \left(p_n \omega_t^n - 1\right)^2 \leq \frac{1}{81}$, we do not need the bounded gradient assumption.

For the bounded gradient assumption itself, a stronger assumption of bounded stochastic gradient is used in related works on adaptive gradient algorithms \citep{reddi2021adaptive,wang2022communication1,wang2022communication2}, which implies an upper bound on the per-sample gradient. Compared to these works, we only require an upper bound on the global gradient, i.e., average of per-sample gradients, in our work. Although focusing on very different problems, our FedAU method shares some similarities with adaptive gradient methods in the sense that we both adapt the weights used in model updates, where the adaptation is dependent on some parameters that progressively change during the training process. The difference, however, is that our weight adaptation is based on each client's participation history, while adaptive gradient methods adapt the element-wise weights based on the historical model update vector. Nevertheless, the similarity in both methods leads to a technical (mathematical) step of bounding a ``weight error'' in the proofs, which is where the bounded gradient assumption is needed especially when the ``weight error'' itself cannot be bounded. In our work, this step is done in the proof of Theorem~\ref{theorem:convergence_original} (in Appendix~\ref{appendix:proof_of_convergence_original}). In adaptive gradient methods, as an example, this step is on page 14 until Equation (4) in \citet{reddi2021adaptive}.

Again, we note that the bounded gradient assumption is only needed when the aggregation weights are estimated and the estimation error is large. This is seen in the two choices in Assumption~\ref{assumption:Psi_G}; the convergence bound holds when either of these two conditions hold. Intuitively, this aligns with the reasoning of the need for bounding the ``weight error''.

\subsection{Comparison with Existing Convergence Bounds for FedAvg}
\label{appendix:recover_existing_results}

We compare our result in Corollary~\ref{corollary:linear_speedup} with existing FedAvg convergence results, where the latter assumes known participation probabilities.
Since most existing results consider equiprobable sampling of a certain number (denoted by $S$ here) of clients out of all the $N$ clients, we first convert our bound to the same setting so that it is comparable with existing results. We note that our convergence bound includes the parameter $Q$ that is defined as $Q:= \max_{t\in\{0,\ldots,T-1\}} \frac{1}{N}\sum_{n=1}^N p_n (\omega_t^n)^2$ in Theorem~\ref{theorem:convergence_original}. When we know the participation probabilities and choose $\omega_t^n=\frac{1}{p_n}$ for all $t$, we have $Q=\frac{1}{N}\sum_{n=1}^N \frac{1}{p_n}$. Further, for equiprobable sampling of $S$ clients out of a total of $N$ clients, we have $p_n = \frac{S}{N}$ and thus $Q=\frac{N}{S}$. Therefore, when $T$ is large and ignoring the other constants, our upper bound in Corollary~\ref{corollary:linear_speedup} becomes $\mathcal{O}\left(\frac{\sqrt{Q}}{\sqrt{NT}}\right)=\mathcal{O}\left(\frac{1}{\sqrt{ST}}\right)$.

Considering existing results of FedAvg with partial participation where the probabilities are both homogeneous and known, Theorem 1 in \cite{karimireddy2020scaffold} gives the same convergence bound of $\mathcal{O}\left(\frac{1}{\sqrt{ST}}\right)$ for non-convex objectives, and Corollary 2 in \cite{yang2021achieving} gives a covergence bound of $\mathcal{O}\left(\frac{\sqrt{I}}{\sqrt{ST}}\right)$. Here, we note that \cite{karimireddy2020scaffold} express the bound on communication rounds while we give the bound on the square of gradient norm, but the two types of bounds are directly convertible to each other. Our bound of $\mathcal{O}\left(\frac{1}{\sqrt{ST}}\right)$ matches with Theorem 1 in \cite{karimireddy2020scaffold} and improves over Corollary 2 in \cite{yang2021achieving}. We also note that, in this special case, our result shows a \textit{linear speedup} with respect to the number of participating clients, i.e., $S$, which is the same as the existing results in \cite{karimireddy2020scaffold,yang2021achieving}.

The uniqueness of our work compared to \cite{karimireddy2020scaffold,yang2021achieving} and most other existing works is that we consider heterogeneous and unknown participation statistics (probabilities), where each client $n$ has its own participation probability $p_n$ that can be different from other clients. In contrast, \cite{karimireddy2020scaffold,yang2021achieving} assume uniformly sampled clients where a fixed (and known) number of $S$ clients participate in each round. Our setup is more general where the number of clients that participate in each round can vary over time. Because of this generality, we cannot define a fixed value of $S$ in our convergence bound that holds for this general setup, so we use $Q$ to capture the statistical characteristics of client participation. When the overall probability distribution of client participation remains the same, increasing the total number of clients ($N$) has the same effect as increasing the number of participating clients ($S$), as we have shown above.

As a side note, when choosing $\omega_t^n=\frac{1}{p_n}$, the weight error term $\mathbb{E}[\left(p_n \omega_t^n - 1\right)^2]$ becomes zero and the third term in \eqref{eq:linear_speedup} in Corollary~\ref{corollary:linear_speedup} will not exist, i.e., it becomes zero. See the proof in Appendix~\ref{appendix:proof_of_corrollary_linear_speedup} for why the third term in \eqref{eq:linear_speedup} is related to the weight error.

\clearpage

\section{Proofs}
\label{appendix:proofs}

\subsection{Preliminaries}
\label{appendix:preliminaries}

We first note the following preliminary inequalities that we will use in the proofs without explaining them further.

We have
\begin{align}
    \normsq{ \frac{1}{M}\sum_{m=1}^M \z_m } \leq \frac{1}{M}\sum_{m=1}^M \normsq{\z_m } \textrm{ and } 
    \normsq{ \sum_{m=1}^M \z_m } \leq M \sum_{m=1}^M \normsq{ \z_m }
    \label{eq:Jensen}
\end{align}
for any $\z_m \in \mathbb{R}^d$ with $m\in\{1,2,\ldots,M\}$, which is a direct consequence of Jensen's inequality.

We also have
\begin{equation}
    \langle \z_1, \z_2 \rangle \leq \frac{\rho \normsq{ \z_1 }}{2} + \frac{\normsq{ \z_2 }}{2\rho},
    \label{eq:PeterPaul}
\end{equation}
for any $\z_1, \z_2  \in \mathbb{R}^d$ and $\rho > 0$, which is known as (the generalized version of) Young's inequality and also Peter-Paul inequality. A direct consequence of \eqref{eq:PeterPaul} is
\begin{equation}
    \normsq{ \z_1 + \z_2 } \leq (1+b) \normsq{ \z_1 } + \left(1 +\frac{1}{b}\right)\normsq{ \z_2 },
\end{equation}
for some constant $b>0$.

We also use the variance relation as follows:
\begin{equation}
    \Expectbracket{\normsq{\z}} = \normsq{\Expectbracket{\z}} + \Expectbracket{\normsq{\z - \Expectbracket{\z}}},
    \label{eq:varianceRelation}
\end{equation}
for any $\z \in \mathbb{R}^d$, while noting that \eqref{eq:varianceRelation} also holds when all the expectations are conditioned on the same variable(s).

In addition, we use $\Expectsubbracket{t}{\cdot}$ to denote $\Expectbracket{\cdot|\x_t, \{\omega_t^n\}}$ in short. We also assume that Assumptions~\ref{assumption:smoothness}--\ref{assumption:participation} hold throughout our analysis.

\subsection{Equivalent Formulation of Algorithm~\ref{alg:main-alg}}

For the purpose of analysis, similar to~\cite{wang2022unified}, we consider an equivalent formulation of the original Algorithm~\ref{alg:main-alg}, as shown in Algorithm~\ref{alg:main-equivalent}.
In this algorithm, we assume that all the clients compute their local updates in Lines~\ref{alg-equivalent-line:assignGlobalParameter}--\ref{alg-equivalent-line:updateVector}. This is logically equivalent to the practical setting where the clients that do not participate have no computation, because their computed update $\Delta_t^n$ has no effect in Line~\ref{alg-equivalent-line:globalUpdate} if $\Identity_t^n = 0$, thus Algorithm~\ref{alg:main-alg} and Algorithm~\ref{alg:main-equivalent} give the same output sequence $\{\x_t : \forall t\}$. Our proofs in the following sections consider the logically equivalent Algorithm~\ref{alg:main-equivalent} for analysis and also use the notations defined in this algorithm.

\begin{algorithm}[H]
 \caption{A logically equivalent version of Algorithm~\ref{alg:main-alg}}
 \label{alg:main-equivalent}

\KwIn{$\gamma$, $\eta$, $\x_0$, $I$}

\KwOut{$\{\x_t : \forall t\}$}

Initialize
$t_0 \leftarrow 0$,
$\bu \leftarrow \mathbf{0}$;

\For{$t = 0, \ldots, T-1$}{

    \For{$n = 1,\ldots,N$ in parallel \label{alg-equivalent-line:loopWorkers}}{

            Sample $\Identity_t^n$ from an \textit{unknown} stochastic process; \label{alg-equivalent-line:sampling}%
            
            $\y^n_{t,0} \leftarrow \x_t$;  \label{alg-equivalent-line:assignGlobalParameter}
            
            \For{$i = 0, \ldots, I-1$ \label{alg-equivalent-line:loopLocal}}{
                $\y^n_{t,i+1} \leftarrow \y^n_{t,i} - \gamma \g_n(\y^n_{t,i})$;\,\, \tcp{In practice, no computation if $\Identity_t^n=0$} \label{alg-equivalent-line:localUpdate}
            }
            
            $\Delta_t^n \leftarrow \y^n_{t,I} - \x_t$;\, \label{alg-equivalent-line:updateVector} 

            $\omega_t^n \leftarrow \texttt{ComputeWeight}(\{\Identity_\tau^n : \tau < t\})$; \label{alg-equivalent-line:computeWeight}

    }

    $\x_{t+1} \leftarrow \x_t + \frac{\eta}{N}\sum_{n=1}^N \Identity_t^n \omega_t^n \Delta_t^n$; \label{alg-equivalent-line:globalUpdate}\, 
        
}
\end{algorithm}

\subsection{General Descent Lemma}
\label{appendix:general_descent_lemma}

To prove the general descent lemma that is used to derive both Theorems~\ref{theorem:convergence_alternative_informal} and \ref{theorem:convergence_original}, we first define the following generally weighted loss function.

\begin{definition}
    Define
    \begin{equation}
        \tilde{f}(\x) := \sum_{n=1}^N \varphi_n F_n(\x) \label{eq:generalWeightObj}
    \end{equation}
    where $\varphi_n\geq 0$ for all $n$ and $\sum_{n=1}^N \varphi_n = 1$. 
\end{definition}
In (\ref{eq:generalWeightObj}), choosing $\varphi_n = \frac{1}{N}$  gives our original objective of $f(\x)$.    
Note that we consider the updates in Algorithm~\ref{alg:main-equivalent} to be still without weighting by $\varphi_n$, which allows us to quantify the convergence to a different objective when the aggregation weights are not properly chosen.

\begin{lemma}
    \label{lemma:divergenceDeltaTilde}
    Define $\tilde{\delta} := 2\delta$, we have
    \begin{align}
    \normsq{\nabla F_n(\x) - \nabla \tilde{f}(\x)} \leq \tilde{\delta}^2, \forall \x, n.
\end{align}
\end{lemma}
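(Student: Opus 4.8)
The plan is to bound the desired divergence by routing through the original global objective $f$ and invoking Assumption~\ref{assumption:divergence} twice. First I would split the difference additively,
\[
\nabla F_n(\x) - \nabla \tilde{f}(\x) = \big(\nabla F_n(\x) - \nabla f(\x)\big) + \big(\nabla f(\x) - \nabla \tilde{f}(\x)\big),
\]
and apply the second inequality in \eqref{eq:Jensen} with $M=2$ to obtain $\normsq{\nabla F_n(\x) - \nabla \tilde{f}(\x)} \leq 2\normsq{\nabla F_n(\x) - \nabla f(\x)} + 2\normsq{\nabla f(\x) - \nabla \tilde{f}(\x)}$. The first term is at most $\delta^2$ directly by Assumption~\ref{assumption:divergence}.

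The crux is the second term, and here I would exploit that $\{\varphi_n\}$ is a probability vector. Since $\sum_{m=1}^N \varphi_m = 1$, I can write $\nabla \tilde{f}(\x) - \nabla f(\x) = \sum_{m=1}^N \varphi_m \nabla F_m(\x) - \nabla f(\x) = \sum_{m=1}^N \varphi_m \big(\nabla F_m(\x) - \nabla f(\x)\big)$, which recasts the gap between the two weighted objectives as a convex combination of the per-client divergences. Then the weighted analogue of the first relation in \eqref{eq:Jensen} (i.e.\ Jensen's inequality with weights $\varphi_m$) gives $\normsq{\nabla f(\x) - \nabla \tilde{f}(\x)} \leq \sum_{m=1}^N \varphi_m \normsq{\nabla F_m(\x) - \nabla f(\x)} \leq \delta^2$, applying Assumption~\ref{assumption:divergence} to each summand and using $\sum_m \varphi_m = 1$.

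Combining the two bounds yields $\normsq{\nabla F_n(\x) - \nabla \tilde{f}(\x)} \leq 2\delta^2 + 2\delta^2 = 4\delta^2 = \tilde{\delta}^2$, as claimed. The only nontrivial step — and thus the main obstacle — is recognizing that the gap $\nabla f - \nabla \tilde{f}$ must be reorganized into a $\varphi$-weighted average of the quantities already controlled by Assumption~\ref{assumption:divergence}; a naive expansion into $\sum_m (\tfrac{1}{N} - \varphi_m)\nabla F_m(\x)$ would not admit a clean $\delta$-bound without first using $\sum_m \varphi_m = 1$ to center each term at $\nabla f(\x)$. I expect no further difficulty, since everything reduces to the single divergence assumption applied componentwise.
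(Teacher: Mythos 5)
Your proof is correct and follows essentially the same route as the paper's: the same split through $\nabla f(\x)$, the same application of $\normsq{\z_1+\z_2}\leq 2\normsq{\z_1}+2\normsq{\z_2}$, and the same rewriting of $\nabla f(\x)-\nabla\tilde{f}(\x)$ as the $\varphi$-weighted average $\sum_{m}\varphi_m\left(\nabla f(\x)-\nabla F_m(\x)\right)$ followed by weighted Jensen's inequality and Assumption~\ref{assumption:divergence}, yielding $4\delta^2=\tilde{\delta}^2$.
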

\begin{proof}
    From Assumption~\ref{assumption:divergence}, we have
    \begin{align*}
    \normsq{\nabla F_n(\x) - \nabla \tilde{f}(\x)} &= \normsq{\nabla F_n(\x) - \nabla f(\x) + \nabla f(\x) - \nabla \tilde{f}(\x)} \\
    &\leq 2\normsq{\nabla F_n(\x) - \nabla f(\x)} + 2\normsq{\nabla f(\x) - \nabla \tilde{f}(\x)}\\
    &= 2\normsq{\nabla F_n(\x) - \nabla f(\x)} + 2\normsq{\sum_{n=1}^N \varphi_n (\nabla f(\x) - \nabla F_n(\x))}\\
    &\overset{(a)}{\leq} 2\normsq{\nabla F_n(\x) - \nabla f(\x)} + 2\sum_{n=1}^N \varphi_n\normsq{\nabla f(\x) - \nabla F_n(\x)}\\
    &\leq 4\delta^2,
    \end{align*}
    where we use the Jensen's inequality in (a). The final result follows due to $\tilde{\delta}^2 := 4\delta^2$.
\end{proof}

\begin{lemma}
\label{lemma:clientDrift}
When $\gamma \leq \frac{1}{\sqrt{30}LI}$,
\begin{align}
    \Expectsubbracket{t}{\normsq{\y_{t,i}^n - \x_t}} \leq 5I\gamma^2(\sigma^2 + 6I\tilde{\delta}^2) + 30I^2\gamma^2\normsq{\nabla \tilde{f}(\x_t)}
\end{align}
\end{lemma}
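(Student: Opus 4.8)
The plan is to treat this as the standard ``client drift'' estimate for the inner SGD loop, reducing it to a scalar recursion that is closed using the step-size restriction. First I would unroll the local update in Line~\ref{alg-equivalent-line:localUpdate}: since $\y_{t,0}^n = \x_t$, the one-step recursion is $\y_{t,i+1}^n - \x_t = (\y_{t,i}^n - \x_t) - \gamma\g_n(\y_{t,i}^n)$, and I would peel off the stochastic-gradient noise one step at a time (rather than splitting the whole sum $\sum_{j<i}\g_n(\y_{t,j}^n)$ at once, which would create cross terms between gradients and later noise). Conditioning on the history up to step $i$ so that $\y_{t,i}^n$ is fixed, Assumption~\ref{assumption:stochastic_gradient} says $\g_n(\y_{t,i}^n) - \nabla F_n(\y_{t,i}^n)$ is zero-mean with second moment at most $\sigma^2$ and independent of the past. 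Applying the variance relation~\eqref{eq:varianceRelation} conditionally and then taking $\Expectsubbracket{t}{\cdot}$ removes the cross term and contributes an additive $\gamma^2\sigma^2$, leaving only the deterministic quantity $\Expectsubbracket{t}{\normsq{(\y_{t,i}^n-\x_t) - \gamma\nabla F_n(\y_{t,i}^n)}}$ to handle.

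Next I would open this square with Young's inequality~\eqref{eq:PeterPaul}, using a splitting parameter of order $1/I$, which produces a factor $(1+\mathcal{O}(1/I))$ on $\normsq{\y_{t,i}^n-\x_t}$ and a factor of order $I\gamma^2$ on $\normsq{\nabla F_n(\y_{t,i}^n)}$. To control the gradient norm I would write $\nabla F_n(\y_{t,i}^n) = \big(\nabla F_n(\y_{t,i}^n) - \nabla F_n(\x_t)\big) + \big(\nabla F_n(\x_t) - \nabla\tilde{f}(\x_t)\big) + \nabla\tilde{f}(\x_t)$ and apply~\eqref{eq:Jensen} with three terms: the first is bounded by $L^2\normsq{\y_{t,i}^n-\x_t}$ via Assumption~\ref{assumption:smoothness}, the second by $\tilde{\delta}^2$ via Lemma~\ref{lemma:divergenceDeltaTilde}, and the third is exactly $\normsq{\nabla\tilde{f}(\x_t)}$. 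Writing $e_i := \Expectsubbracket{t}{\normsq{\y_{t,i}^n-\x_t}}$ with $e_0 = 0$, substituting back yields a linear recursion $e_{i+1}\le A\,e_i + B$ with multiplier $A = 1 + \mathcal{O}(\gamma^2 L^2 I + 1/I)$ and injection $B = \gamma^2\sigma^2 + \mathcal{O}(I\gamma^2)\big(\tilde{\delta}^2 + \normsq{\nabla\tilde{f}(\x_t)}\big)$.

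The crux is closing this recursion, and this is exactly where the hypothesis $\gamma\le\frac{1}{\sqrt{30}LI}$ enters: since it forces $\gamma^2 L^2 I^2 \le \tfrac{1}{30}$, the multiplier satisfies $A = 1 + \mathcal{O}(1/I)$, so the geometric accumulation over $I$ steps stays controlled, $A^I = \mathcal{O}(1)$, and unrolling gives $e_i \le \frac{A^I-1}{A-1}\,B = \mathcal{O}(I)\cdot B$. Tracking this through, the $\sigma^2$ part of $B$ picks up one factor of $I$ (yielding the $\mathcal{O}(I\gamma^2\sigma^2)$ term) while the $\tilde{\delta}^2$ and $\normsq{\nabla\tilde{f}(\x_t)}$ parts pick up a factor $I^2$, and a careful choice of the Young parameter together with the step-size bound keeps every absolute constant below the generous targets $5$, $30$, $30$ in the claimed inequality, which is uniform in $i$ and hence proves the lemma. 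The main obstacle is really just this bookkeeping: one must verify that the per-step noise terms are genuinely independent across local steps (Assumption~\ref{assumption:stochastic_gradient}) so that the variance relation applies with no residual cross terms, and that the $(1+\mathcal{O}(1/I))^I$ accumulation is bounded by an absolute constant rather than blowing up — both secured by the $\gamma\le\frac{1}{\sqrt{30}LI}$ condition — rather than in any conceptual difficulty, and the loose target constants leave ample slack.
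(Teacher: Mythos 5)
Your proposal is correct and follows essentially the same route as the paper's proof: one-step expansion of the local update, elimination of the stochastic-gradient cross term via unbiasedness, a Young/Peter--Paul split with parameter of order $1/I$ (the paper uses $\frac{1}{2I-1}$), the three-way decomposition of $\nabla F_n(\y_{t,i}^n)$ bounded by smoothness, Lemma~\ref{lemma:divergenceDeltaTilde}, and $\normsq{\nabla\tilde f(\x_t)}$, and then closing the resulting linear recursion with a geometric sum controlled by $(1+\tfrac{1}{z})^z \le e$ under the step-size condition. The constant bookkeeping you defer also works out exactly as in the paper (the accumulated factor is $[\sqrt{3}e-1](I-\tfrac12) \le 5I$), so there is no gap.
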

\begin{proof}
This lemma has the same form as in \citet[Lemma 2]{yang2021achieving} and \citet[Lemma 3]{reddi2021adaptive}, but we present it here for a single client $n$ instead of average over multiple clients. 

For $i \in \{0,1,2,\ldots,I-1\}$, %
we have
\begin{align}
    & \Expectsubbracket{t}{\normsq{\y^n_{t,i+1} - \x_t}} \nonumber \\
    &= \Expectsubbracket{t}{\normsq{\y^n_{t,i} - \x_t - \gamma \g_n(\y^n_{t,i})}} 
    \nonumber\\
    &= \Expect_{t}\left[\left\|\y^n_{t,i} - \x_t - \gamma\left( \g_n(\y^n_{t,i}) - \nabla F_n(\y^n_{t,i}) + \nabla F_n(\y^n_{t,i}) - \nabla F_n(\x_t) + \nabla F_n(\x_t) \right.\right.\right.\nonumber\\
    &\quad\quad\quad\quad\quad\quad\quad \left.\left.\left. - \nabla \tilde{f}(\x_t) + \nabla \tilde{f}(\x_t)\right) \right\|^2\right]
    \nonumber\\
    &\overset{(a)}{=} \Expectsubbracket{t}{\normsq{\gamma\left(\g_n(\y^n_{t,i}) - \nabla F_n(\y^n_{t,i})\right)}} \nonumber\\
    &\quad +\! 2\Expect_{t}\left[\Expect_{t}\left[\! \left\langle\gamma\!\left(\g_n(\y^n_{t,i}) \!-\! \nabla F_n(\y^n_{t,i})\right), \y^n_{t,i} \!-\!\x_t \!-\! \gamma\Big(\nabla F_n(\y^n_{t,i}) \!-\! \nabla F_n(\x_t) \!+\! \nabla F_n(\x_t) \! \right.\right.\right. \nonumber\\
    &\quad\quad\quad\quad\quad\quad\quad \left.\left.\left.\left.\left.- \nabla \tilde{f}(\x_t) + \nabla \tilde{f}(\x_t)\right)\right\rangle \right| \y^n_{t,i}, \x_t \right] \right] \nonumber\\
    &\quad + \Expectsubbracket{t}{\normsq{\y^n_{t,i} - \x_t - \gamma\left( \nabla F_n(\y^n_{t,i}) - \nabla F_n(\x_t) + \nabla F_n(\x_t) - \nabla \tilde{f}(\x_t) + \nabla \tilde{f}(\x_t)\right)}} 
    \nonumber\\
    &\overset{(b)}{=} \Expectsubbracket{t}{\normsq{\gamma\left(\g_n(\y^n_{t,i}) - \nabla F_n(\y^n_{t,i})\right)}} \nonumber\\
    &\quad +\! 2\Expect_{t}\left[ \left\langle \Expectsubbracket{t}{\left.\!\gamma\!\left(\g_n(\y^n_{t,i}) \!-\! \nabla F_n(\y^n_{t,i})\right) \right| \!\y^n_{t,i}, \x_t \!}, \right.\right.\nonumber\\
    &\quad\quad\quad\quad\quad\quad\quad \left.\left.
    \y^n_{t,i} \!-\! \x_t \!- \!\gamma\!\left(\!\nabla F_n(\y^n_{t,i}) \!-\! \nabla F_n(\x_t) \!+\! \nabla F_n(\x_t) \!-\! \nabla \tilde{f}(\x_t)\! +\! \nabla \tilde{f}(\x_t)\!\right)\!\right\rangle  \right] \nonumber\\
    &\quad + \Expectsubbracket{t}{\normsq{\y^n_{t,i} - \x_t - \gamma\left( \nabla F_n(\y^n_{t,i}) - \nabla F_n(\x_t) + \nabla F_n(\x_t) - \nabla \tilde{f}(\x_t) + \nabla \tilde{f}(\x_t)\right)}} 
    \nonumber\\
    &\overset{(c)}{=} \Expectsubbracket{t}{\normsq{\gamma\left(\g_n(\y^n_{t,i}) - \nabla F_n(\y^n_{t,i})\right)}} \nonumber\\
    &\quad + \Expectsubbracket{t}{\normsq{\y^n_{t,i} - \x_t - \gamma\left( \nabla F_n(\y^n_{t,i}) - \nabla F_n(\x_t) + \nabla F_n(\x_t) - \nabla \tilde{f}(\x_t) + \nabla \tilde{f}(\x_t)\right)}} 
    \nonumber\\
    &\overset{(d)}{\leq}  \Expectsubbracket{t}{\normsq{\gamma\left(\g_n(\y^n_{t,i}) - \nabla F_n(\y^n_{t,i})\right)}} + \left(1 + \frac{1}{2I-1}\right)\Expectsubbracket{t}{\normsq{\y^n_{t,i} - \x_t}}\nonumber\\
    &\quad + 2I \Expectsubbracket{t}{\normsq{\gamma\left( \nabla F_n(\y^n_{t,i}) - \nabla F_n(\x_t) + \nabla F_n(\x_t) - \nabla \tilde{f}(\x_t) + \nabla \tilde{f}(\x_t)\right)}} 
    \nonumber\\
    &\leq \Expectsubbracket{t}{\normsq{\gamma\left(\g_n(\y^n_{t,i}) - \nabla F_n(\y^n_{t,i})\right)}} + \left(1 + \frac{1}{2I-1}\right)\Expectsubbracket{t}{\normsq{\y^n_{t,i} - \x_t}}\nonumber\\
    &\quad + 6I \Expectsubbracket{t}{\normsq{\gamma\left(\nabla F_n(\y^n_{t,i}) - \nabla F_n(\x_t)\right)}} + 6I \Expectsubbracket{t}{\normsq{\gamma\left(\nabla F_n(\x_t) - \nabla \tilde{f}(\x_t)\right)}} \nonumber\\
    &\quad\quad\quad\quad\quad\quad\quad + 6I \normsq{\gamma \nabla \tilde{f}(\x_t)}
    \nonumber\\
    &\overset{(e)}{\leq}  \gamma^2 \sigma^2 + \left(1 + \frac{1}{2I-1}\right)\Expectsubbracket{t}{\normsq{\y^n_{t,i} - \x_t}} + 6I\gamma^2 L^2 \Expectsubbracket{t}{\normsq{\y^n_{t,i} - \x_t}} + 6I \gamma^2 \tilde{\delta}^2 \nonumber\\
    &\quad\quad\quad\quad\quad\quad\quad + 6I\gamma^2 \normsq{ \nabla \tilde{f}(\x_t)}
    \nonumber\\
    &= \left(1 + \frac{1}{2I-1} + 6I\gamma^2 L^2\right)\Expectsubbracket{t}{\normsq{\y^n_{t,i} - \x_t}} + \gamma^2 \sigma^2 + 6I \gamma^2 \tilde{\delta}^2 + 6I\gamma^2 \normsq{ \nabla \tilde{f}(\x_t)}, 
    \label{eq:proofMultiIterationsRecursion}
\end{align}
where $(a)$ follows from expanding the squared norm above and applying the law of total expectation on the second term, $(b)$ is because the second part of the inner product has no randomness when $\y^n_{t,i}$ and $\x_t$ are given, $(c)$ is because the inner product is zero due to the unbiasedness of stochastic gradient, $(d)$ follows from expanding the second term and applying the Peter-Paul inequality, %
$(e)$ uses gradient variance bound, Lipschitz gradient, and gradient divergence bound.

Because $\gamma \leq \frac{1}{\sqrt{30}LI}$, we have
\begin{align*}
    \frac{1}{2I-1} + 6I\gamma^2 L^2 \leq \frac{1}{2I-1} + \frac{1}{5I} \leq \frac{2}{2I-1} = \frac{1}{I-\frac{1}{2}}. %
\end{align*}
Continuing from (\ref{eq:proofMultiIterationsRecursion}), we have
\begin{align*}
    & \Expectsubbracket{t}{\normsq{\y^n_{t,i+1} - \x_t}} \leq \left(1 + \frac{1}{I-\frac{1}{2}} \right)\Expectsubbracket{t}{\normsq{\y^n_{t,i} - \x_t}} + \gamma^2 \sigma^2 + 6I \gamma^2 \tilde{\delta}^2 + 6I\gamma^2 \normsq{\nabla \tilde{f}(\x_t)}
\end{align*}

By unrolling the recursion, we obtain
\begin{align*}
    &\Expectsubbracket{t}{\normsq{\y^n_{t,i+1} - \x_t}} \\
    &\leq \sum_{i'=0}^i \left(1 + \frac{1}{I-\frac{1}{2}} \right)^{i'} \left( \gamma^2 \sigma^2 + 6I \gamma^2 \tilde{\delta}^2 + 6I\gamma^2 \normsq{\nabla \tilde{f}(\x_t)}\right)\\
    &\leq \sum_{i'=0}^{I-1} \left(1 + \frac{1}{I-\frac{1}{2}} \right)^{i'} \left( \gamma^2 \sigma^2 + 6I \gamma^2 \tilde{\delta}^2 + 6I\gamma^2 \normsq{\nabla \tilde{f}(\x_t)}\right)
    \\
    &= \left[\left(1 + \frac{1}{I-\frac{1}{2}} \right)^I - 1\right]\left(I-\frac{1}{2}\right)\cdot \left( \gamma^2 \sigma^2 + 6I \gamma^2 \tilde{\delta}^2 + 6I\gamma^2 \normsq{\nabla \tilde{f}(\x_t)}\right)
    \\
    &= \left[\left(1 + \frac{1}{I-\frac{1}{2}} \right)^{I-\frac{1}{2}} \left(1 + \frac{1}{I-\frac{1}{2}} \right)^\frac{1}{2} - 1\right]\left(I-\frac{1}{2}\right)\cdot \left( \gamma^2 \sigma^2 + 6I \gamma^2 \tilde{\delta}^2 + 6I\gamma^2 \normsq{\nabla \tilde{f}(\x_t)}\right)
    \\
    &\overset{(a)}{\leq} \left[\sqrt{3} e  - 1\right]\left(I-\frac{1}{2}\right)\cdot \left( \gamma^2 \sigma^2 + 6I \gamma^2 \tilde{\delta}^2 + 6I\gamma^2 \normsq{\nabla \tilde{f}(\x_t)}\right)
    \\
    &\leq 5I\gamma^2\left( \sigma^2 + 6I \tilde{\delta}^2\right) + 30I^2\gamma^2 \normsq{ \nabla \tilde{f}(\x_t)}
\end{align*}
where $(a)$ uses $\left(1+\frac{1}{z}\right)^z \leq e$ for any $z>0$ and $1 + \frac{1}{I-\frac{1}{2}} \leq 3$.
\end{proof}

\begin{lemma}[General descent lemma]
\label{lemma:descent}
When $\gamma\leq\frac{1}{4\sqrt{15}LI}$ and $\gamma\eta\leq \frac{1}{4LI}$, we have
\begin{align}
     &\Expectsubbracket{t}{\tilde{f}(\x_{t+1})} \nonumber\\
     &\leq \tilde{f}(\x_{t}) + \frac{3\gamma\eta I N}{2} \left(15L^2I\gamma^2\sigma^2 + \frac{27\tilde{\delta}^2}{8}  \right) \sum_{n=1}^N \left(\frac{p_n \omega_t^n}{N} - \varphi_n\right)^2 \nonumber \\ 
     &\quad + \frac{5\gamma^3\eta L^2I^2}{2} (\sigma^2 + 6I\tilde{\delta}^2)   + \frac{\gamma^2\eta^2 LI}{N^2}\left(\frac{17\sigma^2}{16} + \frac{27I\tilde{\delta}^2}{8}  \right)\sum_{n=1}^N p_n (\omega_t^n)^2 \nonumber \\
    &\quad+ \gamma\eta I  \left[\frac{81N}{16}\sum_{n=1}^N \left(\frac{p_n \omega_t^n}{N} - \varphi_n\right)^2 + 15L^2I^2\gamma^2 + \frac{27\gamma\eta LI}{8N^2}\sum_{n=1}^N p_n (\omega_t^n)^2  - \frac{1}{4} \right] \cdot\normsq{\nabla \tilde{f}(\x_{t})}.
\end{align}
\end{lemma}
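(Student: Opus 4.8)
The plan is to start from the $L$-smoothness of $\tilde{f}$ (inherited from Assumption~\ref{assumption:smoothness}, since $\tilde{f}$ is a convex combination of the $F_n$) and apply the descent inequality $\tilde{f}(\x_{t+1}) \leq \tilde{f}(\x_t) + \innerprod{\nabla\tilde{f}(\x_t), \x_{t+1}-\x_t} + \frac{L}{2}\normsq{\x_{t+1}-\x_t}$. I would then take $\Expectsubbracket{t}{\cdot}$ and, recalling from Algorithm~\ref{alg:main-equivalent} that $\x_{t+1} - \x_t = -\frac{\gamma\eta}{N}\sum_{n=1}^N \Identity_t^n \omega_t^n \sum_{i=0}^{I-1}\g_n(\y_{t,i}^n)$, bound the first-order and second-order terms separately. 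It is convenient to write $a_n := \frac{p_n\omega_t^n}{N}$, so that the target weight-error quantity $\sum_n (a_n - \varphi_n)^2$ is exactly what I aim to surface, and to split $a_n = \varphi_n + (a_n - \varphi_n)$ at the outset.

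For the inner-product term, Assumption~\ref{assumption:participation} ($\Identity_t^n$ independent of the noise, $\Expectsubbracket{t}{\Identity_t^n}=p_n$) and the unbiasedness in Assumption~\ref{assumption:stochastic_gradient} give an expected direction $-\gamma\eta\sum_n a_n\sum_i\Expectsubbracket{t}{\nabla F_n(\y_{t,i}^n)}$. The key (novel) step is to substitute $\nabla F_n(\y_{t,i}^n) = \nabla F_n(\x_t) + (\nabla F_n(\y_{t,i}^n)-\nabla F_n(\x_t))$ and use $\sum_n\varphi_n\nabla F_n(\x_t)=\nabla\tilde{f}(\x_t)$: the $\varphi_n$ part yields the leading descent term $-\gamma\eta I\normsq{\nabla\tilde{f}(\x_t)}$ together with a client-drift cross term, while the $(a_n-\varphi_n)$ part is the genuine weight-error contribution. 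I would bound the weight-error cross term with the Peter-Paul inequality~\eqref{eq:PeterPaul}, splitting once more via $\nabla F_n(\x_t)=\nabla\tilde{f}(\x_t)+(\nabla F_n(\x_t)-\nabla\tilde{f}(\x_t))$: the first piece, after Cauchy-Schwarz on $\sum_n(a_n-\varphi_n)$ (Jensen~\eqref{eq:Jensen}), deposits the $\tfrac{81N}{16}\sum_n(a_n-\varphi_n)^2$ entry into the $\normsq{\nabla\tilde{f}(\x_t)}$ bracket, and the second piece, bounded by Lemma~\ref{lemma:divergenceDeltaTilde} (and, through its interaction with the drift, Lemma~\ref{lemma:clientDrift}), produces the standalone term carrying the $\sigma^2$ and $\tilde{\delta}^2$ factors. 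The drift cross terms everywhere are controlled by $L$-smoothness followed by Lemma~\ref{lemma:clientDrift}, which supplies both the $\tfrac{5\gamma^3\eta L^2I^2}{2}(\sigma^2+6I\tilde{\delta}^2)$ term and the $15L^2I^2\gamma^2$ entry in the bracket.

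For the second-order term $\frac{L}{2}\Expectsubbracket{t}{\normsq{\x_{t+1}-\x_t}}$, I would invoke the variance decomposition~\eqref{eq:varianceRelation} together with the cross-client independence of both the participation variables (Assumption~\ref{assumption:participation}) and the stochastic-gradient noise (Assumption~\ref{assumption:stochastic_gradient}). Since the $\Identity_t^n$ are independent with mean $p_n$ and $(\Identity_t^n)^2=\Identity_t^n$, the squared-norm expectation separates into a mean piece (reproducing $\normsq{\nabla\tilde{f}(\x_t)}$ plus drift and weight-error corrections, and contributing the $\tfrac{27\gamma\eta LI}{16N^2}\sum_n p_n(\omega_t^n)^2$ bracket entry) and a variance piece that, after the bounded-variance bound, Lemma~\ref{lemma:divergenceDeltaTilde}, and Lemma~\ref{lemma:clientDrift}, yields the $\tfrac{\gamma^2\eta^2 LI}{2N^2}(\cdots)\sum_n p_n(\omega_t^n)^2$ term. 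Finally I would collect all contributions and apply $\gamma\leq\frac{1}{4\sqrt{15}LI}$ and $\gamma\eta\leq\frac{1}{2LI}$ to absorb the higher-order-in-$\gamma$ factors into clean constants, leaving the $-\tfrac14$ as the net leading coefficient of $\normsq{\nabla\tilde{f}(\x_t)}$.

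The main obstacle I anticipate is the bookkeeping in the second-order term: cleanly separating participation randomness from gradient noise across the $N\times I$ summands, keeping the diagonal ($n=m$, $i=i'$) and off-diagonal contributions straight, and then matching the resulting numerical constants (the $\tfrac{17}{16}$, $\tfrac{27}{8}$, $\tfrac{81}{16}$ factors) to the exact statement. A secondary delicacy is the consistent choice of the Peter-Paul parameters $\rho$: each cross term must deposit a precisely controlled share into the $\normsq{\nabla\tilde{f}(\x_t)}$ bracket versus the standalone error terms, so that the bracket remains dominated by $-\tfrac14$ once the weight, drift, and variance pieces are added under the stated step-size constraints.
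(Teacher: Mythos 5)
Your skeleton matches the paper's proof closely: the same smoothness-based descent step, the same decomposition that surfaces $\sum_{n}\bigl(\frac{p_n\omega_t^n}{N}-\varphi_n\bigr)^2$, the same use of Lemma~\ref{lemma:clientDrift} and Lemma~\ref{lemma:divergenceDeltaTilde}, and the same independence arguments for the second-order term. However, there is a genuine gap, located exactly where your last paragraph gestures: the handling of the $\varphi_n$-weighted part of the second-order term. You propose to control every cross term with Peter--Paul and then to bound the ``mean piece'' of $\frac{L}{2}\Expectsubbracket{t}{\normsq{\x_{t+1}-\x_t}}$ by re-expanding it into $\normsq{\nabla\tilde{f}(\x_t)}$ plus drift corrections. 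Under the stated step sizes this cannot close. That mean piece contains $\gamma^2\eta^2L\,\Expectsubbracket{t}{\normsq{\sum_{n}\varphi_n\sum_{i}\nabla F_n(\y_{t,i}^n)}}$, whose $\normsq{\nabla\tilde{f}(\x_t)}$ content is of order $\gamma^2\eta^2LI^2\normsq{\nabla\tilde{f}(\x_t)}$; at the boundary $\gamma\eta=\frac{1}{2LI}$ this is as large as $\gamma\eta I\normsq{\nabla\tilde{f}(\x_t)}$ (e.g., $2\gamma^2\eta^2LI^2=\gamma\eta I$ there, after the factor-$2$ split that isolates the weight error). Adding the $+\frac{\gamma\eta I}{4}$ share from the weight-error Peter--Paul step and the at-least-$\approx 0.35\,\gamma\eta I$ share that any Peter--Paul treatment of the drift cross term must deposit (optimizing its parameter $\rho$ against the $15L^2I^2\gamma^2\leq\sfrac{1}{16}$ drift bound), the bracket coefficient exceeds the single $-\gamma\eta I$ descent term and comes out positive, not $-\frac14$. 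A much stronger condition such as $\gamma\eta\lesssim\frac{1}{50LI}$ would rescue your route, but the lemma is stated with $\gamma\eta\leq\frac{1}{2LI}$.

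The paper's device that closes this hole is to \emph{not} use Peter--Paul on the drift cross term, but to expand it exactly via the polarization identity $\innerprod{\mathbf{a},\mathbf{b}}=\frac12\left(\normsq{\mathbf{a}+\mathbf{b}}-\normsq{\mathbf{a}}-\normsq{\mathbf{b}}\right)$ with $\mathbf{a}=I\nabla\tilde{f}(\x_t)$ and $\mathbf{b}=\sum_{n}\varphi_n\sum_{i}\left(\nabla F_n(\y_{t,i}^n)-\nabla F_n(\x_t)\right)$, so that $\mathbf{a}+\mathbf{b}=\sum_{n}\varphi_n\sum_{i}\nabla F_n(\y_{t,i}^n)$. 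Besides the $+\frac{\gamma\eta I}{2}\normsq{\nabla\tilde{f}(\x_t)}$ and the drift-norm term you anticipated, this retains a \emph{negative} term $-\frac{\gamma\eta}{2I}\Expectsubbracket{t}{\normsq{\sum_{n}\varphi_n\sum_{i}\nabla F_n(\y_{t,i}^n)}}$, which is carried along and combined with the second-order term's $+\gamma^2\eta^2L\,\Expectsubbracket{t}{\normsq{\sum_{n}\varphi_n\sum_{i}\nabla F_n(\y_{t,i}^n)}}$: their sum $-\bigl(\frac{\gamma\eta}{2I}-\gamma^2\eta^2L\bigr)\Expectsubbracket{t}{\normsq{\cdot}}$ is nonpositive precisely when $\gamma\eta\leq\frac{1}{2LI}$. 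This cancellation is what makes the weak step-size condition suffice and yields the net $-1+\frac14+\frac12=-\frac14$ in the bracket. Once it is in place, the rest of your outline (Jensen, the three-way bound on $\Expectsubbracket{t}{\normsq{\nabla F_n(\y_{t,i}^n)}}$, Lemma~\ref{lemma:clientDrift}, and the constant bookkeeping for $\frac{17}{16}$, $\frac{27}{8}$, $\frac{81}{16}$) goes through essentially as in the paper.
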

\begin{proof}
Due to Assumption~\ref{assumption:smoothness} ($L$-smoothness), we have
\begin{align}
    \Expectsubbracket{t}{\tilde{f}(\x_{t+1})} &\leq \tilde{f}(\x_{t}) - \gamma\eta \Expectsubbracket{t}{\innerprod{\nabla \tilde{f}(\x_{t}), \frac{1}{N}\sum_{n=1}^N \Identity_t^n \omega_t^n \sum_{i=0}^{I-1} \g_n(\y_{t,i}^n)}} \nonumber
    \\
    &\quad\quad\quad\quad\quad\quad\quad + \frac{\gamma^2\eta^2 L}{2}\Expectsubbracket{t}{\normsq{\frac{1}{N}\sum_{n=1}^N \Identity_t^n \omega_t^n\sum_{i=0}^{I-1} \g_n(\y_{t,i}^n)}} \nonumber
    \\
    &= \tilde{f}(\x_{t}) - \gamma\eta \Expectsubbracket{t}{\innerprod{\nabla \tilde{f}(\x_{t}), \frac{1}{N}\sum_{n=1}^N p_n \omega_t^n \sum_{i=0}^{I-1} \nabla F_n(\y_{t,i}^n)}} \nonumber
    \\
    &\quad\quad\quad\quad\quad\quad\quad + \frac{\gamma^2\eta^2 L}{2}\Expectsubbracket{t}{\normsq{\frac{1}{N}\sum_{n=1}^N \Identity_t^n \omega_t^n \sum_{i=0}^{I-1} \g_n(\y_{t,i}^n)}} , \label{eq:generalDescentProofSmoothness}
\end{align}
where the last equality is due to $\Expectsubbracket{t}{\Identity_t^n} = p_n$ and the unbiasedness of the stochastic gradient giving $\Expectsubbracket{t}{\Expectcond{\g_n(\y_{t,i}^n)}{\y_{t,i}^n}} = \Expectsubbracket{t}{\nabla F_n(\y_{t,i}^n)}$ (for simplicity, we will not write out this total expectation in subsequent steps of this proof).

Expanding the second term of \eqref{eq:generalDescentProofSmoothness}, we have 
\begin{align}
    &- \gamma\eta \Expectsubbracket{t}{\innerprod{\nabla \tilde{f}(\x_{t}), \frac{1}{N}\sum_{n=1}^N p_n \omega_t^n \sum_{i=0}^{I-1} \nabla F_n(\y_{t,i}^n)}}\nonumber\\
    &= - \gamma\eta \Expect_{t}\left[\left\langle\nabla \tilde{f}(\x_{t}), \frac{1}{N}\sum_{n=1}^N p_n \omega_t^n \sum_{i=0}^{I-1} \nabla F_n(\y_{t,i}^n) - \sum_{n=1}^N \varphi_n \sum_{i=0}^{I-1} \nabla F_n(\y_{t,i}^n) \right.\right.\nonumber\\
    &\quad\quad\quad\quad\quad\quad\quad\quad\quad\quad \left.\left.+ \sum_{n=1}^N \varphi_n \sum_{i=0}^{I-1} \nabla F_n(\y_{t,i}^n) - I\nabla \tilde{f}(\x_{t}) + I\nabla \tilde{f}(\x_{t})\right\rangle\right]
    \nonumber\\
    &= - \gamma\eta \Expectsubbracket{t}{\innerprod{\nabla \tilde{f}(\x_{t}), \frac{1}{N}\sum_{n=1}^N p_n \omega_t^n \sum_{i=0}^{I-1} \nabla F_n(\y_{t,i}^n) - \sum_{n=1}^N \varphi_n \sum_{i=0}^{I-1} \nabla F_n(\y_{t,i}^n)}}\nonumber\\
    &\quad - \gamma\eta \Expectsubbracket{t}{\innerprod{\nabla \tilde{f}(\x_{t}), \sum_{n=1}^N \varphi_n \sum_{i=0}^{I-1} \nabla F_n(\y_{t,i}^n) - I\nabla \tilde{f}(\x_{t})}} - \gamma\eta I \normsq{\nabla \tilde{f}(\x_{t})}
    \nonumber\\
    &= - \frac{\gamma\eta}{I} \Expectsubbracket{t}{\innerprod{I\nabla \tilde{f}(\x_{t}), \sum_{n=1}^N \left(\frac{p_n \omega_t^n}{N} - \varphi_n\right) \sum_{i=0}^{I-1} \nabla F_n(\y_{t,i}^n) }}\nonumber\\
    &\quad - \frac{\gamma\eta}{I} \Expectsubbracket{t}{\innerprod{I\nabla \tilde{f}(\x_{t}), \sum_{n=1}^N \varphi_n \sum_{i=0}^{I-1} (\nabla F_n(\y_{t,i}^n) - \nabla F_n(\x_{t}))}} - \gamma\eta I \normsq{\nabla \tilde{f}(\x_{t})}
    \nonumber\\
    &\overset{(a)}{\leq} \frac{\gamma\eta I}{4}\normsq{\nabla \tilde{f}(\x_{t})} +\frac{\gamma\eta}{I} \Expectsubbracket{t}{\normsq{\sum_{n=1}^N \left(\frac{p_n \omega_t^n}{N} - \varphi_n\right) \sum_{i=0}^{I-1} \nabla F_n(\y_{t,i}^n) }}\nonumber\\
    &\quad + \frac{\gamma\eta I}{2}\normsq{\nabla \tilde{f}(\x_{t})} +\frac{\gamma\eta}{2I} \Expectsubbracket{t}{\normsq{\sum_{n=1}^N \varphi_n \sum_{i=0}^{I-1} (\nabla F_n(\y_{t,i}^n)  - \nabla F_n(\x_{t}))}} \nonumber\\
    &\quad - \frac{\gamma\eta}{2I} \Expectsubbracket{t}{\normsq{\sum_{n=1}^N \varphi_n \sum_{i=0}^{I-1} \nabla F_n(\y_{t,i}^n) }} - \gamma\eta I \normsq{\nabla \tilde{f}(\x_{t})}
    \nonumber\\
    &\leq \gamma\eta N \sum_{n=1}^N \left(\frac{p_n \omega_t^n}{N} - \varphi_n\right)^2 \sum_{i=0}^{I-1} \Expectsubbracket{t}{\normsq{\nabla F_n(\y_{t,i}^n) }} \nonumber\\
    &\quad+ \frac{\gamma\eta}{2} \sum_{n=1}^N \varphi_n  \sum_{i=0}^{I-1}  \Expectsubbracket{t}{\normsq{ \nabla F_n(\y_{t,i}^n) - \nabla F_n(\x_{t})}} \nonumber\\
    &\quad - \frac{\gamma\eta}{2I} \Expectsubbracket{t}{\normsq{\sum_{n=1}^N \varphi_n \sum_{i=0}^{I-1} \nabla F_n(\y_{t,i}^n) }} - \frac{\gamma\eta I}{4} \normsq{\nabla \tilde{f}(\x_{t})}
    \nonumber\\
    &\leq \gamma\eta N \sum_{n=1}^N \left(\frac{p_n \omega_t^n}{N} - \varphi_n\right)^2 \sum_{i=0}^{I-1} \Expectsubbracket{t}{\normsq{\nabla F_n(\y_{t,i}^n) }}  + \frac{\gamma\eta L^2}{2} \sum_{n=1}^N \varphi_n  \sum_{i=0}^{I-1}  \Expectsubbracket{t}{\normsq{\y_{t,i}^n - \x_{t}}} \nonumber\\
    &\quad - \frac{\gamma\eta}{2I} \Expectsubbracket{t}{\normsq{\sum_{n=1}^N \varphi_n \sum_{i=0}^{I-1} \nabla F_n(\y_{t,i}^n) }} - \frac{\gamma\eta I}{4} \normsq{\nabla \tilde{f}(\x_{t})} , \label{eq:generalDescentProofExpandSecondTerm}
\end{align}
where we use $\innerprod{\mathbf{a}, \mathbf{b}} = \frac{1}{2}(\normsq{\mathbf{a}+\mathbf{b}} - \normsq{\mathbf{a}} - \normsq{\mathbf{b}})$ to expand the second term in $(a)$.

Expanding the third term of \eqref{eq:generalDescentProofSmoothness}, we have
\begin{align}
    &\frac{\gamma^2\eta^2 L}{2}\Expectsubbracket{t}{\normsq{\frac{1}{N}\sum_{n=1}^N \Identity_t^n \omega_t^n \sum_{i=0}^{I-1} \g_n(\y_{t,i}^n)}}\nonumber\\
    &= \frac{\gamma^2\eta^2 L}{2}\Expectsubbracket{t}{\normsq{\frac{1}{N}\sum_{n=1}^N \Identity_t^n \omega_t^n \sum_{i=0}^{I-1} \left(\g_n(\y_{t,i}^n) - \nabla F_n(\y_{t,i}^n)\right) + \frac{1}{N}\sum_{n=1}^N \Identity_t^n \omega_t^n \sum_{i=0}^{I-1} \nabla F_n(\y_{t,i}^n)}}\nonumber\\
    &\leq \gamma^2\eta^2 L\Expectsubbracket{t}{\normsq{\frac{1}{N}\sum_{n=1}^N \Identity_t^n \omega_t^n \sum_{i=0}^{I-1} \left(\g_n(\y_{t,i}^n) - \nabla F_n(\y_{t,i}^n)\right)}} \nonumber\\
    &\quad+ \gamma^2\eta^2 L\Expectsubbracket{t}{\normsq{\frac{1}{N}\sum_{n=1}^N \Identity_t^n \omega_t^n \sum_{i=0}^{I-1} \nabla F_n(\y_{t,i}^n)}}\nonumber\\
    &\overset{(a)}{=} \frac{\gamma^2\eta^2 L}{N^2}\sum_{n=1}^N\Expectsubbracket{t}{\normsq{ \Identity_t^n \omega_t^n \sum_{i=0}^{I-1} \left(\g_n(\y_{t,i}^n) - \nabla F_n(\y_{t,i}^n)\right)}} \nonumber\\
    &\quad+ \gamma^2\eta^2 L\Expectsubbracket{t}{\normsq{\frac{1}{N}\sum_{n=1}^N (\Identity_t^n - p_n + p_n) \omega_t^n \sum_{i=0}^{I-1} \nabla F_n(\y_{t,i}^n)}}\nonumber\\
    &\overset{(b)}{=} \frac{\gamma^2\eta^2 L I \sigma^2}{N^2}\sum_{n=1}^N p_n (\omega_t^n)^2  + \gamma^2\eta^2 L\Expectsubbracket{t}{\normsq{\frac{1}{N}\sum_{n=1}^N (\Identity_t^n - p_n) \omega_t^n \sum_{i=0}^{I-1} \nabla F_n(\y_{t,i}^n)}} \nonumber\\
    &\quad+ \gamma^2\eta^2 L\Expectsubbracket{t}{\normsq{\frac{1}{N}\sum_{n=1}^N p_n \omega_t^n \sum_{i=0}^{I-1} \nabla F_n(\y_{t,i}^n)}}\nonumber\\
    &\overset{(c)}{=} \frac{\gamma^2\eta^2 L I \sigma^2}{N^2}\sum_{n=1}^N p_n (\omega_t^n)^2  + \frac{\gamma^2\eta^2 L}{N^2}\sum_{n=1}^N\Expectsubbracket{t}{\normsq{ (\Identity_t^n - p_n) \omega_t^n \sum_{i=0}^{I-1} \nabla F_n(\y_{t,i}^n)}} \nonumber\\
    &\quad + \gamma^2\eta^2 L\Expectsubbracket{t}{\normsq{\sum_{n=1}^N \left(\frac{p_n \omega_t^n}{N} - \varphi_n + \varphi_n \right)\sum_{i=0}^{I-1} \nabla F_n(\y_{t,i}^n)}}
    \nonumber\\
    &\overset{(d)}{\leq} \frac{\gamma^2\eta^2 L I \sigma^2}{N^2}\sum_{n=1}^N p_n (\omega_t^n)^2  + \frac{\gamma^2\eta^2 L}{N^2}\sum_{n=1}^N p_n (1 - p_n) (\omega_t^n)^2 \Expectsubbracket{t}{\normsq{ \sum_{i=0}^{I-1} \nabla F_n(\y_{t,i}^n)}} \nonumber\\
    &\quad + 2\gamma^2\eta^2 L\Expectsubbracket{t}{\normsq{\sum_{n=1}^N \left(\frac{p_n \omega_t^n}{N} - \varphi_n \right)\sum_{i=0}^{I-1} \nabla F_n(\y_{t,i}^n)}} + 2\gamma^2\eta^2 L\Expectsubbracket{t}{\normsq{\sum_{n=1}^N  \varphi_n \sum_{i=0}^{I-1} \nabla F_n(\y_{t,i}^n)}}
    \nonumber\\
    &\leq \frac{\gamma^2\eta^2 L I \sigma^2}{N^2}\sum_{n=1}^N p_n (\omega_t^n)^2  + \frac{\gamma^2\eta^2 LI}{N^2}\sum_{n=1}^N p_n (\omega_t^n)^2  \sum_{i=0}^{I-1} \Expectsubbracket{t}{\normsq{ \nabla F_n(\y_{t,i}^n)}} \nonumber\\
    &\quad + 2\gamma^2\eta^2 LIN \sum_{n=1}^N \left(\frac{p_n \omega_t^n}{N} - \varphi_n \right)^2 \sum_{i=0}^{I-1} \Expectsubbracket{t}{\normsq{ \nabla F_n(\y_{t,i}^n)}} \nonumber\\
    &\quad + 2\gamma^2\eta^2 L\Expectsubbracket{t}{\normsq{\sum_{n=1}^N  \varphi_n \sum_{i=0}^{I-1} \nabla F_n(\y_{t,i}^n)}},
    \label{eq:generalDescentProofExpandThirdTerm}
\end{align}
where we note that $\Identity_t^n$ follows Bernoulli distribution with probability $p_n$, thus $\Expectbracket{\Identity_t^n} = p_n$ and $\Variancebracket{\Identity_t^n} = p_n(1-p_n)$, yielding the relation in $(d)$. We also use the independence across different $n$ and $i$ for the stochastic gradients and the independence across $n$ for the client participation random variable $\Identity_t^n$, as well as the fact that $\Identity_t^n$ and the stochastic gradients are independent of each other, so the local updates (progression of $\y_{t,i}^n$) are independent of $\Identity_t^n$ according to the logically equivalent algorithm formulation in Algorithm~\ref{alg:main-equivalent}. 
The independence yields some inner product terms to be zero, giving the results in $(a)$, $(b)$, and $(c)$. 

For $\Expectsubbracket{t}{\normsq{ \nabla F_n(\y_{t,i}^n)}}$ that exists in both \eqref{eq:generalDescentProofExpandSecondTerm} and \eqref{eq:generalDescentProofExpandThirdTerm}, we note that
\begin{align}
    &\Expectsubbracket{t}{\normsq{ \nabla F_n(\y_{t,i}^n)}} \nonumber\\
    &= \Expectsubbracket{t}{\normsq{ \nabla F_n(\y_{t,i}^n) - \nabla F_n(\x_t) + \nabla F_n(\x_t) - \nabla \tilde{f}(\x_t) + \nabla \tilde{f}(\x_t)}}\nonumber\\
    &\leq 3\Expectsubbracket{t}{\normsq{ \nabla F_n(\y_{t,i}^n) - \nabla F_n(\x_t)}} + 3\normsq{\nabla F_n(\x_t) - \nabla \tilde{f}(\x_t)} + 3\normsq{\nabla \tilde{f}(\x_t)}\nonumber\\
    &\leq 3L^2\Expectsubbracket{t}{\normsq{ \y_{t,i}^n - \x_t}} + 3\tilde{\delta}^2 + 3\normsq{\nabla \tilde{f}(\x_t)}\nonumber\\
    &\leq 15L^2I\gamma^2(\sigma^2 + 6I\tilde{\delta}^2) + 3\tilde{\delta}^2 + \left(90L^2I^2\gamma^2 + 3\right)\normsq{\nabla \tilde{f}(\x_t)} .
    \label{eq:generalDescentProofExpandFnTerm}
\end{align}

Combining \eqref{eq:generalDescentProofExpandSecondTerm}, \eqref{eq:generalDescentProofExpandThirdTerm}, \eqref{eq:generalDescentProofExpandFnTerm} gives
\begin{align}
    &- \gamma\eta \Expectsubbracket{t}{\innerprod{\nabla \tilde{f}(\x_{t}), \frac{1}{N}\sum_{n=1}^N p_n \omega_t^n \sum_{i=0}^{I-1} \nabla F_n(\y_{t,i}^n)}} + \frac{\gamma^2\eta^2 L}{2}\Expectsubbracket{t}{\normsq{\frac{1}{N}\sum_{n=1}^N \Identity_t^n \omega_t^n \sum_{i=0}^{I-1} \g_n(\y_{t,i}^n)}}
    \nonumber\\
    &\leq \gamma\eta N \sum_{n=1}^N \left(\frac{p_n \omega_t^n}{N} - \varphi_n\right)^2 \sum_{i=0}^{I-1} \Expectsubbracket{t}{\normsq{\nabla F_n(\y_{t,i}^n) }}  + \frac{\gamma\eta L^2}{2} \sum_{n=1}^N \varphi_n  \sum_{i=0}^{I-1}  \Expectsubbracket{t}{\normsq{\y_{t,i}^n - \x_{t}}}\nonumber\\
    &\quad- \frac{\gamma\eta}{2I} \Expectsubbracket{t}{\normsq{\sum_{n=1}^N \varphi_n \sum_{i=0}^{I-1} \nabla F_n(\y_{t,i}^n) }} - \frac{\gamma\eta I}{4} \normsq{\nabla \tilde{f}(\x_{t})}\nonumber\\
    &\quad+ \frac{\gamma^2\eta^2 L I \sigma^2}{N^2}\sum_{n=1}^N p_n (\omega_t^n)^2  + \frac{\gamma^2\eta^2 LI}{N^2}\sum_{n=1}^N p_n (\omega_t^n)^2  \sum_{i=0}^{I-1} \Expectsubbracket{t}{\normsq{ \nabla F_n(\y_{t,i}^n)}} \nonumber\\
    &\quad + 2\gamma^2\eta^2 LIN \sum_{n=1}^N \left(\frac{p_n \omega_t^n}{N} - \varphi_n \right)^2 \sum_{i=0}^{I-1} \Expectsubbracket{t}{\normsq{ \nabla F_n(\y_{t,i}^n)}} \nonumber\\
    &\quad + 2\gamma^2\eta^2 L\Expectsubbracket{t}{\normsq{\sum_{n=1}^N  \varphi_n \sum_{i=0}^{I-1} \nabla F_n(\y_{t,i}^n)}}
    \nonumber\\
    &\leq \gamma\eta\left(1+2\gamma\eta LI\right) N \sum_{n=1}^N \left(\frac{p_n \omega_t^n}{N} - \varphi_n\right)^2 \sum_{i=0}^{I-1} \Expectsubbracket{t}{\normsq{\nabla F_n(\y_{t,i}^n) }}  \nonumber\\
    &\quad + \frac{\gamma\eta L^2}{2} \sum_{n=1}^N \varphi_n  \sum_{i=0}^{I-1}  \Expectsubbracket{t}{\normsq{\y_{t,i}^n - \x_{t}}}\nonumber\\
    &\quad+ \frac{\gamma^2\eta^2 L I \sigma^2}{N^2}\sum_{n=1}^N p_n (\omega_t^n)^2  + \frac{\gamma^2\eta^2 LI}{N^2}\sum_{n=1}^N p_n (\omega_t^n)^2  \sum_{i=0}^{I-1} \Expectsubbracket{t}{\normsq{ \nabla F_n(\y_{t,i}^n)}}\nonumber\\
    &\quad- \left(\frac{\gamma\eta}{2I} - 2\gamma^2\eta^2 L \right)\Expectsubbracket{t}{\normsq{\sum_{n=1}^N \varphi_n \sum_{i=0}^{I-1} \nabla F_n(\y_{t,i}^n) }} - \frac{\gamma\eta I}{4} \normsq{\nabla \tilde{f}(\x_{t})}
    \nonumber\\
    &\overset{(a)}{\leq} \gamma\eta\left(1+2\gamma\eta LI\right) N \sum_{n=1}^N \left(\frac{p_n \omega_t^n}{N} - \varphi_n\right)^2 \sum_{i=0}^{I-1} \Expectsubbracket{t}{\normsq{\nabla F_n(\y_{t,i}^n) }}  \nonumber\\
    &\quad + \frac{\gamma\eta L^2}{2} \sum_{n=1}^N \varphi_n  \sum_{i=0}^{I-1}  \Expectsubbracket{t}{\normsq{\y_{t,i}^n - \x_{t}}}\nonumber\\
    &\quad+ \frac{\gamma^2\eta^2 L I \sigma^2}{N^2}\sum_{n=1}^N p_n (\omega_t^n)^2  + \frac{\gamma^2\eta^2 LI}{N^2}\sum_{n=1}^N p_n (\omega_t^n)^2  \sum_{i=0}^{I-1} \Expectsubbracket{t}{\normsq{ \nabla F_n(\y_{t,i}^n)}} - \frac{\gamma\eta I}{4} \normsq{\nabla \tilde{f}(\x_{t})}
    \nonumber\\
    &\overset{(b)}{\leq} \gamma\eta\left(1+2\gamma\eta LI\right) N  \sum_{n=1}^N \left(\frac{p_n \omega_t^n}{N} - \varphi_n\right)^2 \left[15L^2I\gamma^2(\sigma^2 + 6I\tilde{\delta}^2) + 3\tilde{\delta}^2 + \left(90L^2I^2\gamma^2 + 3\right)\normsq{\nabla \tilde{f}(\x_{t})} \right]  \nonumber\\
    &\quad+ \frac{\gamma\eta L^2I}{2} \left[5I\gamma^2(\sigma^2 + 6I\tilde{\delta}^2) + 30I^2\gamma^2\normsq{\nabla \tilde{f}(\x_{t})}\right]\nonumber\\
    &\quad+ \frac{\gamma^2\eta^2 L I \sigma^2}{N^2}\sum_{n=1}^N p_n (\omega_t^n)^2 \nonumber\\
    &\quad  + \frac{\gamma^2\eta^2 LI^2}{N^2}\sum_{n=1}^N p_n (\omega_t^n)^2  \left[15L^2I\gamma^2(\sigma^2 + 6I\tilde{\delta}^2) + 3\tilde{\delta}^2 + \left(90L^2I^2\gamma^2 + 3\right)\normsq{\nabla \tilde{f}(\x_{t})} \right] \nonumber\\
    &\quad- \frac{\gamma\eta I}{4} \normsq{\nabla \tilde{f}(\x_{t})}
    \nonumber\\
    &\overset{(c)}{\leq} \frac{3\gamma\eta I N}{2} \sum_{n=1}^N \left(\frac{p_n \omega_t^n}{N} - \varphi_n\right)^2 \left[15L^2I\gamma^2\sigma^2 + \frac{27\tilde{\delta}^2}{8} + \frac{27}{8}\normsq{\nabla \tilde{f}(\x_{t})} \right]  \nonumber\\
    &\quad+ \frac{\gamma\eta L^2I}{2} \left[5I\gamma^2(\sigma^2 + 6I\tilde{\delta}^2) + 30I^2\gamma^2\normsq{\nabla \tilde{f}(\x_{t})}\right]\nonumber\\
    &\quad+ \frac{\gamma^2\eta^2 L I \sigma^2}{N^2}\sum_{n=1}^N p_n (\omega_t^n)^2  + \frac{\gamma^2\eta^2 LI^2}{N^2}\sum_{n=1}^N p_n (\omega_t^n)^2  \left[\frac{\sigma^2}{16I} + \frac{27\tilde{\delta}^2}{8} + \frac{27}{8}\normsq{\nabla \tilde{f}(\x_{t})} \right] \nonumber\\
    &\quad- \frac{\gamma\eta I}{4} \normsq{\nabla \tilde{f}(\x_{t})}
    \nonumber\\
    &\leq \frac{3\gamma\eta I N}{2} \left(15L^2I\gamma^2\sigma^2 + \frac{27\tilde{\delta}^2}{8}  \right) \sum_{n=1}^N \left(\frac{p_n \omega_t^n}{N} - \varphi_n\right)^2 \nonumber\\
    &\quad + \frac{5\gamma^3\eta L^2I^2}{2} (\sigma^2 + 6I\tilde{\delta}^2) + \frac{\gamma^2\eta^2 LI}{N^2}\left(\frac{17\sigma^2}{16} + \frac{27I\tilde{\delta}^2}{8}  \right)\sum_{n=1}^N p_n (\omega_t^n)^2  \nonumber\\
    &\quad+ \gamma\eta I  \left[\frac{81N}{16}\sum_{n=1}^N \left(\frac{p_n \omega_t^n}{N} - \varphi_n\right)^2 + 15L^2I^2\gamma^2 + \frac{27\gamma\eta LI}{8N^2}\sum_{n=1}^N p_n (\omega_t^n)^2  - \frac{1}{4} \right] \cdot\normsq{\nabla \tilde{f}(\x_{t})},
    \label{eq:generalDescentProofCombineTwoTerms}
\end{align}
where $(a)$ uses $\gamma\eta\leq \frac{1}{4LI}$, $(b)$ uses Lemma~\ref{lemma:clientDrift} and (\ref{eq:generalDescentProofExpandFnTerm}), and $(c)$ uses $\gamma\eta\leq \frac{1}{4LI}$ and $\gamma\leq\frac{1}{4\sqrt{15}LI}$.
The final result is obtained by plugging \eqref{eq:generalDescentProofCombineTwoTerms} into \eqref{eq:generalDescentProofSmoothness}.
\end{proof}

\subsection{Formal Version and Proof of Theorem~\ref{theorem:convergence_alternative_informal}}
\label{appendix:proof_convergence_alternative}

We first state the formal version of Theorem~\ref{theorem:convergence_alternative_informal} as follows.

\begin{theorem}[Objective minimized at convergence, formal]
\label{theorem:convergence_alternative}
Define an alternative objective function as
\begin{align}
    h(\x) := \frac{1}{P} \sum_{n=1}^N \omega_n p_n F_n(\x),
\end{align}
where $P := \sum_{n=1}^N \omega_n p_n$.
Under Assumptions~\ref{assumption:smoothness}--\ref{assumption:participation}, when $\omega_t^n = \omega_n$ with some $\omega_n > 0$ for all $t$ and $n$, choosing $\gamma \leq \frac{c}{\sqrt{T}}$ and $\eta > 0$ so that $\gamma\eta = \frac{c'}{\sqrt{T}}$ for some constants $c,c'>0$, there exists a sufficiently large $T$ so that the result $\{\x_t\}$ obtained from Algorithm~\ref{alg:main-alg} satisfies 
$
    \frac{1}{T}\sum_{t=0}^{T-1}\Expectbracket{\normsq{\nabla h(\x_t)}} \leq \epsilon^2
$
for any $\epsilon > 0$.
\end{theorem}

\begin{proof}
    According to Algorithm~\ref{alg:main-equivalent}, the result remains the same when we replace $\eta$ and $\omega_n$ (thus $\omega_t^n$) with $\tilde{\eta}$ and $\tilde{\alpha}_n$, respectively, while keeping the product $\tilde{\eta}\tilde{\alpha}_n = \eta\omega_n$.  We choose $\tilde{\alpha}_n=\frac{\omega_n N}{P}$ and $\tilde{\eta}=\frac{\eta P}{N}$. Then, we choose $\varphi_n=\frac{\omega_n p_n}{P} = \frac{p_n \tilde{\alpha}_n}{N}$ in Lemma~\ref{lemma:descent}. We can see that this choice satisfies $\sum_{n=1}^N \varphi_n = 1$, so Lemma~\ref{lemma:descent} holds after replacing $\eta$ and $\omega_t^n$ in the lemma with $\tilde{\eta}$ and $\tilde{\alpha}_n$, respectively, and $\tilde{f}(\x)$ in Lemma~\ref{lemma:descent} is equal to $h(\x)$ defined in Theorem~\ref{theorem:convergence_alternative} with this choice of $\varphi_n$. 
    Therefore,
    \begin{align}
        \Expectsubbracket{t}{h(\x_{t+1})} 
        &\leq h(\x_{t}) + \frac{5\gamma^3\tilde{\eta} L^2I^2}{2} (\sigma^2 + 6I\tilde{\delta}^2) + \frac{\gamma^2\tilde{\eta}^2 LI}{N^2}\left(\frac{17\sigma^2}{16} + \frac{27I\tilde{\delta}^2}{8}  \right)\sum_{n=1}^N p_n \tilde{\alpha}_n^2  \nonumber\\
        &\quad+ \gamma\tilde{\eta} I  \left[15L^2I^2\gamma^2 + \frac{27\gamma\tilde{\eta} LI}{8N^2}\sum_{n=1}^N p_n \tilde{\alpha}_n^2  - \frac{1}{4} \right] \cdot\normsq{\nabla h(\x_{t})} .
        \label{eq:Theorem1Proof1}
    \end{align}
    
    Because $\gamma \leq \frac{c}{\sqrt{T}}$ and $\gamma\tilde{\eta} = \frac{c''}{\sqrt{T}}$, there exists a sufficiently large $T$ so that $ 15L^2I^2\gamma^2 + \frac{27\gamma\tilde{\eta} LI}{8N^2}\sum_{n=1}^N p_n \tilde{\alpha}_n^2 \leq \frac{1}{8}$. In this case, after taking the total expectation of \eqref{eq:Theorem1Proof1} and rearranging, we have
    \begin{align}
        &\Expectbracket{\normsq{\nabla h(\x_t)} } \nonumber\\
        &\leq \frac{8\left(\Expectbracket{h(\x_{t})} - \Expectbracket{h(\x_{t+1})}\right)}{\gamma\tilde{\eta} I}  + 20\gamma^2L^2I (\sigma^2 + 6I\tilde{\delta}^2) + \frac{8\gamma\tilde{\eta} L}{N^2}\left(\frac{17\sigma^2}{16} + \frac{27I\tilde{\delta}^2}{8}  \right)\sum_{n=1}^N p_n \tilde{\alpha}_n^2.
    \end{align}
    Then, summing up over $T$ rounds and dividing by $T$, we have
    \begin{align}
        \frac{1}{T}\sum_{t=0}^{T-1}\Expectbracket{\normsq{\nabla h(\x_t)} }
        &\leq \frac{8\mathcal{H}}{\gamma\tilde{\eta} I T}  + 20\gamma^2L^2I (\sigma^2 + 6I\tilde{\delta}^2) + \frac{8\gamma\tilde{\eta} L}{N^2}\left(\frac{17\sigma^2}{16} + \frac{27I\tilde{\delta}^2}{8}  \right)\sum_{n=1}^N p_n \tilde{\alpha}_n^2,
    \end{align}
    where $\mathcal{H}:=h(\x_0) - h^*$ with $h^* := \min_\x h(\x)$ as the truly minimum value.
    
    Since $\gamma \leq \frac{c}{\sqrt{T}}$ and $\gamma\tilde{\eta} = \frac{c''}{\sqrt{T}}$, we can see that the upper bound above converges to zero as $T\rightarrow\infty$. Thus, there exists a sufficiently large $T$ to achieve an upper bound of an arbitrarily positive value of $\epsilon^2$.
\end{proof}

\subsection{Proof of Theorem~\ref{theorem:convergence_original}}
\label{appendix:proof_of_convergence_original}

We first present the following variant of the descent lemma for the original objective defined in \eqref{eq:original_objective}.

\begin{lemma}[Descent lemma for original objective]
\label{lemma:descent_original}
Under the same conditions as in Lemma~\ref{lemma:descent},
\begin{align}
    &\Expectsubbracket{t}{f(\x_{t+1})} \nonumber\\
    &\leq f(\x_{t}) + \frac{3\gamma\eta I}{2N} \left(15L^2I\gamma^2\sigma^2 + \frac{27\delta^2}{8}  \right) \sum_{n=1}^N \left(p_n \omega_t^n - 1\right)^2 + \frac{5\gamma^3\eta L^2I^2}{2} (\sigma^2 + 6I\delta^2) \nonumber\\
    &\quad + \frac{\gamma^2\eta^2 LI}{N^2}\left(\frac{17\sigma^2}{16} + \frac{27I\delta^2}{8}  \right)\sum_{n=1}^N p_n (\omega_t^n)^2  \nonumber\\
    &\quad+ \gamma\eta I  \left[\frac{81}{16N}\sum_{n=1}^N \left(p_n \omega_t^n - 1\right)^2 + 15L^2I^2\gamma^2 + \frac{27\gamma\eta LI}{8N^2}\sum_{n=1}^N p_n (\omega_t^n)^2  - \frac{1}{4} \right] \cdot\normsq{\nabla f(\x_{t})}.
\end{align}
\end{lemma}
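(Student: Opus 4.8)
The plan is to obtain Lemma~\ref{lemma:descent_original} as a direct specialization of the general descent lemma (Lemma~\ref{lemma:descent}) to the weight choice $\varphi_n = \frac{1}{N}$ for all $n$. Under this choice the generally weighted objective collapses to the original one, $\tilde{f}(\x) = \sum_{n=1}^N \frac{1}{N} F_n(\x) = f(\x)$, as already noted after \eqref{eq:generalWeightObj}, so Lemma~\ref{lemma:descent} immediately produces an upper bound on $\Expectsubbracket{t}{f(\x_{t+1})}$ in terms of $f(\x_t)$ and $\normsq{\nabla f(\x_t)}$ under the same step-size conditions $\gamma\leq\frac{1}{4\sqrt{15}LI}$ and $\gamma\eta\leq\frac{1}{2LI}$.

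The one point requiring care is the divergence constant. In the proof of Lemma~\ref{lemma:descent}, the bound on $\normsq{\nabla F_n(\x) - \nabla \tilde{f}(\x)}$ entered through Lemma~\ref{lemma:divergenceDeltaTilde}, which only guarantees the looser constant $\tilde{\delta}^2 = 4\delta^2$ because it must accommodate an arbitrary $\tilde{f}$. When $\tilde{f} = f$, however, Assumption~\ref{assumption:divergence} applies directly and gives the tighter $\normsq{\nabla F_n(\x) - \nabla f(\x)} \leq \delta^2$. Re-tracing the two places where this divergence bound is invoked---Lemma~\ref{lemma:clientDrift} for the client-drift estimate and \eqref{eq:generalDescentProofExpandFnTerm} for bounding $\Expectsubbracket{t}{\normsq{\nabla F_n(\y_{t,i}^n)}}$---I would therefore replace every occurrence of $\tilde{\delta}$ by $\delta$. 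This substitution is exactly what produces the constants $\frac{27\delta^2}{8}$, $6I\delta^2$, and $\frac{27I\delta^2}{8}$ in the stated bound; a naive use of $\tilde{\delta} = 2\delta$ would instead inflate these by a factor of four, and avoiding that is the main (admittedly modest) pitfall.

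Finally, the two weight-error terms simplify. With $\varphi_n = \frac{1}{N}$ we have $\big(\frac{p_n \omega_t^n}{N} - \varphi_n\big)^2 = \frac{1}{N^2}\left(p_n \omega_t^n - 1\right)^2$, so the prefactors rescale as $\frac{3\gamma\eta I N}{2}\cdot\frac{1}{N^2} = \frac{3\gamma\eta I}{2N}$ and $\frac{81 N}{16}\cdot\frac{1}{N^2} = \frac{81}{16N}$. Combining this rescaling with the $\tilde{\delta}\to\delta$ substitution above (which updates every divergence-dependent constant), and leaving the purely $\sigma^2$-dependent pieces, the $p_n(\omega_t^n)^2$ sums, and the leading $-\frac{1}{4}$ inside the gradient coefficient unchanged, collecting the terms yields precisely the claimed inequality. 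The proof is thus pure bookkeeping on top of Lemma~\ref{lemma:descent}, requiring no new estimate.
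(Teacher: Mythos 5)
Your proposal is correct and matches the paper's own proof exactly: the paper likewise obtains Lemma~\ref{lemma:descent_original} by specializing Lemma~\ref{lemma:descent} to $\varphi_n = \frac{1}{N}$ and noting that, with $\tilde{f}=f$, Assumption~\ref{assumption:divergence} supplies the divergence bound with $\delta$ directly rather than the inflated $\tilde{\delta}=2\delta$ from Lemma~\ref{lemma:divergenceDeltaTilde}. Your identification of the two places where the divergence constant enters, and the $\frac{1}{N^2}$ rescaling of the weight-error terms, is precisely the bookkeeping the paper's one-line proof leaves implicit.
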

\begin{proof}
    The result can be immediately obtained from Lemma~\ref{lemma:descent} by choosing $\varphi_n = \frac{1}{N}$ in \eqref{eq:generalWeightObj} and noting that gradient divergence bound holds for~$\delta$ with this choice of $\varphi_n$ according to Assumption~\ref{assumption:divergence}.
\end{proof}

\begin{proof}[Proof of Theorem~\ref{theorem:convergence_original}]

    Consider the last term in Lemma~\ref{lemma:descent_original}.
    Due to $\gamma\leq\frac{1}{4\sqrt{15}LI}$, and $\gamma\eta\leq\min\left\{\frac{1}{4LI}; \frac{N}{54LIQ}\right\}$ as specified in the theorem, we have $15L^2I^2\gamma^2 \leq \frac{1}{16}$ and $\frac{27\gamma\eta LI}{8N^2}\sum_{n=1}^N p_n (\omega_t^n)^2 \leq \frac{1}{16}$.

    \textit{Case 1:} When assuming $\normsq{\nabla f(\x)} \leq G^2$, we have 
    \begin{align}
        &\gamma\eta I  \left[\frac{81}{16N}\sum_{n=1}^N \left(p_n \omega_t^n - 1\right)^2 + 15L^2I^2\gamma^2 + \frac{27\gamma\eta LI}{8N^2}\sum_{n=1}^N p_n (\omega_t^n)^2  - \frac{1}{4} \right] \cdot\normsq{\nabla f(\x_{t})} \nonumber\\
        &\leq \frac{81\gamma\eta I G^2}{16N}\sum_{n=1}^N \left(p_n \omega_t^n - 1\right)^2  - \frac{\gamma\eta I}{8} \normsq{\nabla f(\x_{t})}.
    \end{align}
    Plugging back into Lemma~\ref{lemma:descent_original}, after taking total expectation and rearranging, we obtain
    \begin{align}
    &\Expectbracket{\normsq{\nabla f(\x_{t})}} \nonumber\\
    &\leq \frac{8(\Expectbracket{f(\x_{t})} - \Expectbracket{f(\x_{t+1})})}{\gamma\eta I} + \frac{12}{N} \left(15L^2I\gamma^2\sigma^2 + \frac{27(\delta^2 + G^2)}{8} \right) \sum_{n=1}^N \left(p_n \omega_t^n - 1\right)^2 \nonumber\\
    &\quad + 20\gamma^2L^2I (\sigma^2 + 6I\delta^2) + \frac{\gamma\eta L}{N^2}\left(\frac{17\sigma^2}{2} + 27I\delta^2  \right)\sum_{n=1}^N p_n (\omega_t^n)^2  .
    \label{eq:Theorem2ProofCase1}
    \end{align}    

    \textit{Case 2:} When assuming $\frac{1}{N}\sum_{n=1}^N \left(p_n \omega_t^n - 1\right)^2 \leq \frac{1}{81}$, we have 
    \begin{align}
        &\gamma\eta I  \left[\frac{81}{16N}\sum_{n=1}^N \left(p_n \omega_t^n - 1\right)^2 + 15L^2I^2\gamma^2 + \frac{27\gamma\eta LI}{8N^2}\sum_{n=1}^N p_n (\omega_t^n)^2  - \frac{1}{4} \right] \cdot\normsq{\nabla f(\x_{t})} \nonumber\\
        &\leq  - \frac{\gamma\eta I}{16} \normsq{\nabla f(\x_{t})}.
    \end{align}    
    Plugging back into Lemma~\ref{lemma:descent_original}, after taking total expectation and rearranging, we obtain
    \begin{align}
    &\Expectbracket{\normsq{\nabla f(\x_{t})}} \nonumber\\
    &\leq \frac{16(\Expectbracket{f(\x_{t})} - \Expectbracket{f(\x_{t+1})})}{\gamma\eta I} + \frac{24}{N} \left(15L^2I\gamma^2\sigma^2 + \frac{27\delta^2}{8} \right) \sum_{n=1}^N \left(p_n \omega_t^n - 1\right)^2 \nonumber\\
    &\quad + 40\gamma^2L^2I (\sigma^2 + 6I\delta^2) + \frac{\gamma\eta L}{N^2}\left(17\sigma^2 + 54I\delta^2  \right)\sum_{n=1}^N p_n (\omega_t^n)^2  .
    \label{eq:Theorem2ProofCase2}
    \end{align}    

    The final result is obtained by summing up either \eqref{eq:Theorem2ProofCase1} or \eqref{eq:Theorem2ProofCase2} over $T$ rounds and dividing by $T$, choosing $\Psi_G$ accordingly for each case, and absorbing the constants in $\mathcal{O}(\cdot)$ notation.
\end{proof}

\subsection{Proof of Theorem~\ref{theorem:weight_error_bound}}
\label{appendix:proof_weight_error_bound}

We start by analyzing the statistical properties of the possibly cutoff participation interval $S_n$.
Because in every round, each client $n$ participates according to a Bernoulli distribution with probability $p_n$, the random variable $S_n$ has the following probability distribution:
\begin{align}
    \Pr\{S_n = k\} = 
    \begin{cases}
        p_n(1-p_n)^{k-1}, & \textrm{if } 1 \leq k < K \\
        (1-p_n)^{k-1}, & \textrm{if } k = K
    \end{cases},
    \label{eq:cutoff_geometric}
\end{align}
which is a \textit{``cutoff'' geometric distribution} with a maximum value of $K$. We will refer to this probability distribution as $K$-cutoff geometric distribution. We can see that when $K\rightarrow\infty$, this distribution becomes the same as the geometric distribution, but we consider the general case with an arbitrary $K$ that is specified later.
We also recall that the actual value of $p_n$ is unknown to the system, which is why we need to compute $\{\omega_t^n\}$ using the estimation procedure in Algorithm~\ref{alg:weight-computation}. 
\begin{lemma}
    \label{lemma:interval_mean_variance}
    Equation \eqref{eq:cutoff_geometric} defines a probability distribution, and the mean and variance of $S_n$ are
    \begin{align}
        \Expectbracket{S_n} = \frac{1}{p_n} -\frac{(1-p_n)^K}{p_n};\quad
        \Variancebracket{S_n} = \frac{1-p_n}{p_n^2} -\frac{(2K-1)(1-p_n)^K}{p_n} -\frac{(1-p_n)^{2K}}{p_n^2}.
        \label{eq:interval_mean_variance}
    \end{align}
\end{lemma}
\begin{proof}
    We first show that \eqref{eq:cutoff_geometric} defines a probability distribution.
    According to the definition in \eqref{eq:cutoff_geometric}, we have $\sum_{k=1}^K\Pr\{S_n=k\}=1$ for any $K$. We prove this by induction. Let $S_n$ and $S'_n$ denote the random variables following $K$-cutoff and $(K+1)$-cutoff geometric distributions, respectively. For $K=2$, we have $\sum_{k=1}^K\Pr\{S_n=k\}=p_n+(1-p_n)=1$. Therefore, we can assume that $\sum_{k=1}^K\Pr\{S_n=k\}=1$ holds for a certain value of $K$. For $(K+1)$-cutoff distribution, we first note that according to \eqref{eq:cutoff_geometric},
    $$
    \Pr\{S'_n=k\}=\begin{cases}\Pr\{S_n=k\},&1\leq k<K\\p_n\cdot\Pr\{S_n=K\},&k=K\\(1-p_n)\cdot\Pr\{S_n=K\},&k=K+1\end{cases}.
    $$
    Therefore,
    \begin{align*}
        \sum_{k=1}^{K+1}\Pr\{S'_n=k\}
        &=\sum_{k=1}^{K-1}\Pr\{S_n=k\}+p_n\cdot\Pr\{S_n=K\}+(1-p_n)\cdot\Pr\{S_n=K\}\\
        &=\sum_{k=1}^K\Pr\{S_n=k\}\\
        &=1
    \end{align*}
    This shows that $\Pr\{S_n=k\}$ defined in \eqref{eq:cutoff_geometric} is a probability distribution.

    In the following, we derive the mean and variance of $S_n$, where we use $\frac{dy}{dx}$ to denote the derivative of $y$ with respect to $x$.
    
    We have
    \begin{align}
        \Expectbracket{S_n} &= \sum_{k=1}^{K-1} k p_n(1-p_n)^{k-1} + K (1-p_n)^{K-1}\nonumber\\
        &= p_n \left[\frac{d}{dp_n} \left(-\sum_{k=1}^{K-1} (1-p_n)^k\right)\right] + K (1-p_n)^{K-1}\nonumber\\
        &= p_n \left[\frac{d}{dp_n}\left(\frac{(1-p_n)\left((1-p_n)^{K-1}-1\right)}{p_n}\right)\right] + K (1-p_n)^{K-1}\nonumber\\
        &= p_n \left[\frac{d}{dp_n}\left(\frac{(1-p_n)^K-1+p_n}{p_n}\right)\right] + K (1-p_n)^{K-1}\nonumber\\
        &= p_n \left[\frac{d}{dp_n}\left(\frac{(1-p_n)^K}{p_n}-\frac{1}{p_n}+1\right)\right] + K (1-p_n)^{K-1}\nonumber\\
        &= p_n \left[-\frac{K(1-p_n)^{K-1}p_n + (1-p_n)^K}{p_n^2}+\frac{1}{p_n^2}\right] + K (1-p_n)^{K-1}\nonumber\\
        &= \frac{1}{p_n} -\frac{(1-p_n)^K}{p_n},
    \end{align}
    which gives the expression for the expected value.
    
    To compute the variance, we note that
    \begin{align}
        &\Expectbracket{S_n(S_n-1)} \nonumber\\
        &= \sum_{k=1}^{K-1} k(k-1) p_n(1-p_n)^{k-1} + K(K-1) (1-p_n)^{K-1}\nonumber\\
        &= p_n \left[\frac{d}{dp_n} \left(-\sum_{k=1}^{K-1} (k-1)(1-p_n)^k\right)\right] + K(K-1) (1-p_n)^{K-1}\nonumber\\
        &= p_n \left[\frac{d}{dp_n} \left(-(1-p_n)^2 \sum_{k=1}^{K-1} (k-1)(1-p_n)^{k-2}\right)\right] + K(K-1) (1-p_n)^{K-1}\nonumber\\
        &= p_n \left[\frac{d}{dp_n} \left(-(1-p_n)^2 \frac{d}{dp_n} \left(-\sum_{k=1}^{K-1} (1-p_n)^{k-1}\right)\right)\right] + K(K-1) (1-p_n)^{K-1}\nonumber\\
        &= p_n \left[\frac{d}{dp_n} \left((1-p_n)^2 \frac{d}{dp_n} \left(\frac{1-(1-p_n)^{K-1}}{p_n}\right)\right)\right] + K(K-1) (1-p_n)^{K-1}\nonumber\\
        &= p_n \left[\frac{d}{dp_n} \left((1-p_n)^2 \cdot\frac{(K-1)(1-p_n)^{K-2}p_n - 1+(1-p_n)^{K-1}}{p_n^2}\right)\right] + K(K-1) (1-p_n)^{K-1}\nonumber\\
        &= p_n \left[\frac{d}{dp_n} \left(\frac{(K-1)(1-p_n)^K}{p_n} + \frac{ (1-p_n)^{K+1}}{p_n^2}- \frac{(1-p_n)^2}{p_n^2}\right)\right] + K(K-1) (1-p_n)^{K-1}
        \nonumber\\
        &= p_n \Bigg[-\frac{K(K-1)(1-p_n)^{K-1}p_n + (K-1)(1-p_n)^K}{p_n^2} \nonumber\\
        &\quad\quad\quad - \frac{ (K+1)(1-p_n)^K p_n^2 + 2(1-p_n)^{K+1}p_n}{p_n^4}\nonumber\\
        &\quad\quad\quad + \frac{2(1-p_n)p_n^2 + 2(1-p_n)^2p_n}{p_n^4}\Bigg] + K(K-1) (1-p_n)^{K-1}
        \nonumber\\
        &= -\frac{(K-1)(1-p_n)^K}{p_n} - \frac{ (K+1)(1-p_n)^K}{p_n} - \frac{ 2(1-p_n)^{K+1}}{p_n^2} + \frac{2(1-p_n)}{p_n} + \frac{2(1-p_n)^2}{p_n^2}
        \nonumber\\
        &= -\frac{2K(1-p_n)^K}{p_n} - \frac{ 2(1-p_n)(1-p_n)^K}{p_n^2} + \frac{2(1-p_n)}{p_n^2}\nonumber\\
        &= -\frac{2K(1-p_n)^K}{p_n} - \frac{ 2(1-p_n)^K}{p_n^2} + \frac{ 2(1-p_n)^K}{p_n} + \frac{2(1-p_n)}{p_n^2}\nonumber\\
        &= -\frac{2(K-1)(1-p_n)^K}{p_n} - \frac{ 2(1-p_n)^K}{p_n^2} + \frac{2(1-p_n)}{p_n^2}.
    \end{align}
    
    Thus,
    \begin{align}
        \Expectbracket{S_n^2} &= \Expectbracket{S_n(S_n-1)} + \Expectbracket{S_n}\nonumber\\
        &= -\frac{2(K-1)(1-p_n)^K}{p_n} - \frac{ 2(1-p_n)^K}{p_n^2} + \frac{2(1-p_n)}{p_n^2} + \frac{1}{p_n} -\frac{(1-p_n)^K}{p_n}\nonumber\\
        &= -\frac{(2K-1)(1-p_n)^K}{p_n} - \frac{ 2(1-p_n)^K}{p_n^2} + \frac{2-p_n}{p_n^2} .
    \end{align}
    
    Therefore,
    \begin{align}
        \Variancebracket{S_n} &= \Expectbracket{S_n^2} - \Expectbracket{S_n}^2\nonumber\\
        &= -\frac{(2K-1)(1-p_n)^K}{p_n} - \frac{ 2(1-p_n)^K}{p_n^2} + \frac{2-p_n}{p_n^2} - \left(\frac{1}{p_n} -\frac{(1-p_n)^K}{p_n}\right)^2\nonumber\\
        &= -\frac{(2K-1)(1-p_n)^K}{p_n} - \frac{ 2(1-p_n)^K}{p_n^2} + \frac{2-p_n}{p_n^2} - \frac{1}{p_n^2} +\frac{2(1-p_n)^K}{p_n^2} -\frac{(1-p_n)^{2K}}{p_n^2}\nonumber\\
        &= -\frac{(2K-1)(1-p_n)^K}{p_n} + \frac{1-p_n}{p_n^2} -\frac{(1-p_n)^{2K}}{p_n^2},
    \end{align}
    which gives the final variance result.    
\end{proof}

Now, we are ready to obtain an upper bound of the weight error term.

\begin{proof}[Proof of Theorem~\ref{theorem:weight_error_bound}]
~ %

    \textit{Case 1:} According to Algorithm~\ref{alg:weight-computation}, 
    we have $\omega_t^n = 1$ in the initial rounds before the first participation has occurred. This includes at least one round ($t=0$) and at most $K$ rounds. In these initial rounds, we have
    \begin{align}
        \Expectbracket{\left(p_n \omega_t^n - 1\right)^2} \leq 1.
        \label{eq:Theorem4Proof1}
    \end{align}
    
    \textit{Case 2:}
    For all the other rounds, $\omega_t^n$ is estimated based on at least one sample of $S_n$. Therefore, using the mean and variance expressions from Lemma~\ref{lemma:interval_mean_variance}, we have the following for these rounds:
    \begin{align}
        \Expectbracket{\left(p_n \omega_t^n - 1\right)^2} 
        &= (p_n)^2 \Expectbracket{\left(\omega_t^n - \left(\frac{1}{p_n} -\frac{(1-p_n)^K}{p_n}\right) -\frac{(1-p_n)^K}{p_n} \right)^2}\nonumber\\
        &\overset{(a)}{=} (p_n)^2 \Expectbracket{\left(\omega_t^n - \left(\frac{1}{p_n} -\frac{(1-p_n)^K}{p_n}\right)\right)^2} + (1-p_n)^{2K}\nonumber\\
        &\overset{(b)}{\leq} (p_n)^2 \cdot \frac{\Variancebracket{S_n}}{\max\left\{\lfloor\frac{t}{K}\rfloor, 1\right\}} + (1-p_n)^{2K}\nonumber\\
        &\leq (p_n)^2 \cdot \frac{2K\Variancebracket{S_n}}{t} + (1-p_n)^{2K}\nonumber\\
        &\overset{(c)}{\leq} \frac{2K(1-p_n)}{t} + (1-p_n)^{2K},
        \label{eq:Theorem4Proof2}
    \end{align}
    where $(a)$ is because the inner product term is zero since the mean of $\omega_t^n$ is equal to $\frac{1}{p_n} -\frac{(1-p_n)^K}{p_n}$; $(b)$ is due to the definition of variance, the fact that we consider the computation of $\omega_t^n$ to be based on at least one sample of $S_n$, and for any round $t$ there are at least $\lfloor\frac{t}{K}\rfloor$ samples of $S_n$ due to the cutoff interval of length $K$; $(c)$ uses the upper bound of $\Variancebracket{S_n}\leq \frac{1-p_n}{p_n^2}$.

    We note that the bound \eqref{eq:Theorem4Proof1} in Case 1 always applies for $t=0$, because we always have $\omega_t^n=1$ for $t=0$ according to Algorithm~\ref{alg:weight-computation}. For rounds $0 < t < K$, either the bound \eqref{eq:Theorem4Proof1} in Case 1 or the bound \eqref{eq:Theorem4Proof2} in Case 2 applies, thus $\Expectbracket{\left(p_n \omega_t^n - 1\right)^2} $ is upper bounded by the sum of both bounds in these rounds. Then, for $t\geq K$, the bound \eqref{eq:Theorem4Proof2} in Case 2 applies. According to this fact, summing up the bounds for each round and dividing by $T$ gives
    \begin{align}
        \frac{1}{T} \sum_{t=0}^{T-1} \Expectbracket{\left(p_n \omega_t^n - 1\right)^2} 
        &\leq \frac{1}{T}\left[K + (T-1)(1-p_n)^{2K} + 2K(1-p_n)\sum_{t=1}^{T-1} \frac{1}{t} \right]\nonumber\\
        &\leq \frac{K + 2K(1-p_n)\left(\log T + 1\right)}{T} + (1-p_n)^{2K}\nonumber\\
        &\leq \frac{3K + 2K\log T}{T} + (1-p_n)^{2K},
        \label{eq:Theorem4Proof3}
    \end{align}
    where we use the relation that $\sum_{t=1}^{T-1} \frac{1}{t} \leq \log T + 1$ for $T \geq 2$, and the logarithm is based on $e$.
    
    The final result is obtained by averaging \eqref{eq:Theorem4Proof3} over all $n$.
\end{proof}

\subsection{Proof of Corollary~\ref{corollary:linear_speedup}}
\label{appendix:proof_of_corrollary_linear_speedup}

We first prove the upper bound of the weight error term in the following lemma.

\begin{lemma}
    \label{lemma:weight_error_bound}
    Choosing $K = \left\lceil\log_c T\right\rceil$, where $c:=\left(\frac{1}{1-p}\right)^2$ and $p:=\min_{n}p_n$. Define $R:=\frac{1}{\log c}$. When $T \geq 2$, the aggregation weights $\{\omega_t^n\}$ obtained from Algorithm~\ref{alg:weight-computation} satisfies
    \begin{align}
        \frac{1}{NT} \sum_{t=0}^{T-1}\sum_{n=1}^N  \Expectbracket{\left(p_n \omega_t^n - 1\right)^2} 
        &\leq \mathcal{O}\left(\frac{R\log^2T}{T}\right).
    \end{align}
\end{lemma}

\begin{proof}
    Let $K = \left\lceil\log_c T\right\rceil$, we have
    \begin{align}
    \frac{1}{N}\sum_{n=1}^N(1-p_n)^{2K} &\leq (1-p)^{2K} 
    =(1-p)^{2\left\lceil\log_c T\right\rceil} 
    \leq (1-p)^{2\log_c T} 
    = \left(\frac{1}{\left(\frac{1}{1-p}\right)^2}\right)^{\log_c T} \notag\\
    &= \frac{1}{c^{\log_c T}} 
    = \frac{1}{T} 
    \leq \frac{\log_2^2 T}{T}
    \end{align}
    This shows that by choosing $K=\left\lceil\log_c T\right\rceil$ where $T \geq 2$, the RHS in \eqref{eq:weight_error_bound} of Theorem~\ref{theorem:weight_error_bound} is  upper bounded by $O\left(\frac{\log^2T}{T\log c}+\frac{\log^2T}{T}\right) = O\left(\frac{R\log^2T}{T}\right)$, which proves the result.
\end{proof}

\begin{proof}[Proof of Corollary~\ref{corollary:linear_speedup}]
    We note that 
    \begin{align*}
    \gamma&=\min\left\{\frac{1}{LI\sqrt{T}}; \frac{1}{4\sqrt{15}LI}\right\}\leq \frac{1}{LI\sqrt{T}} \\ \gamma\eta&=\min\left\{\sqrt{\frac{\mathcal{F}N}{Q\left(I \delta^2 + \sigma^2\right)LIT}}; \frac{1}{4LI}; \frac{N}{54LIQ}\right\} \leq \sqrt{\frac{\mathcal{F}N}{Q\left(I \delta^2 + \sigma^2\right)LIT}}    \\
    \frac{1}{\gamma\eta}&=\max\left\{\sqrt{\frac{Q\left(I \delta^2 + \sigma^2\right)LIT}{\mathcal{F}N}}; 4LI; \frac{54LIQ}{N}\right\} \leq \sqrt{\frac{Q\left(I \delta^2 + \sigma^2\right)LIT}{\mathcal{F}N}}+ 4LI+ \frac{54LIQ}{N}
    \end{align*}
    The result follows by plugging these upper bounds of $\gamma$, $\gamma\eta$, and $\frac{1}{\gamma\eta}$ and the result in Lemma~\ref{lemma:weight_error_bound} into Theorem~\ref{theorem:convergence_original}, where we note that $\sqrt{I\delta^2 + \sigma^2} \leq \sqrt{I}\delta + \sigma$ since $I\delta^2 + \sigma^2 \leq I\delta^2 + 2\sqrt{I}\delta\sigma + \sigma^2 = \left(\sqrt{I}\delta + \sigma\right)^2$. 
\end{proof}

\clearpage

\section{Additional Setup Details of Experiments}
\label{appendix:experiments_setup}

\subsection{Code}
The code for reproducing our experiments is available via the following link: \\
\url{https://shiqiang.wang/code/fedau}

\subsection{Datasets}

The SVHN dataset has a citation requirement~\cite{SVHN}. Its license is for non-commercial use only. It includes $32\times 32$ color images with real-world house numbers of $10$ different digits, containing $73,257$ training data samples and $26,032$ test data samples.

The CIFAR-10 dataset only has a citation requirement~\cite{CIFAR10}. It includes $32\times 32$ color images of $10$ different types of real-world objects, containing $50,000$ training data samples and $10,000$ test data samples.

The CIFAR-100 dataset only has a citation requirement~\cite{CIFAR10}. It includes $32\times 32$ color images of $100$ different types of real-world objects, containing $50,000$ training data samples and $10,000$ test data samples.

The CINIC-10 dataset~\cite{CINIC10} has MIT license. It includes $32\times 32$ color images of $10$ different types of real-world objects, containing $90,000$ training data samples and $90,000$ test data samples.

We have cited all the references in the main paper and conformed to all the license terms.

We applied some basic data augmentation techniques to these datasets during the training stage. For SVHN, we applied random cropping. For CIFAR-10 and CINIC-10, we applied both random cropping and random horizontal flipping. For CIFAR-100, we applied a combination of random sharpness adjustment, color jitter, random posterization, random equalization, random cropping, and random horizontal flipping.

\subsection{Models}

All the models include two convolutional layers with a kernel size of $3$, filter size of $32$, and ReLU activation, where each convolutional layer is followed by a max-pool layer. The model for the SVHN dataset has two fully connected layers, while the models for the CIFAR-10/100 and CINIC-10 datasets have three fully connected layers. All the fully connected layers use ReLU activation, except for the last layer that is connected to softmax output. For CIFAR-100 and CINIC-10 datasets, a dropout layer (with dropout probability $p=0.2$) is applied before each fully connected layer. We use Kaiming initialization for the weights. See the code for further details on model definition (the model class files are located inside the ``\texttt{model/}'' subfolder).

\subsection{Hyperparameters}
\label{sec:hyperparameter}

For each dataset and algorithm, we conducted a grid search on the learning rates $\gamma$ and $\eta$ separately. The grid for the local step size $\gamma$ is $\{10^{-2}, 10^{-1.75}, 10^{-1.5}, 10^{-1.25}, 10^{-1}, 10^{-0.75}, 10^{-0.5}\}$ and the grid for the global step size $\eta$ is $\{10^0, 10^{0.25}, 10^{0.5}, 10^{0.75}, 10^1, 10^{1.25}, 10^{1.5}\}$. To reduce the complexity of the search, we first search for the value of $\gamma$ with $\eta=1$, and then search for $\eta$ while fixing $\gamma$ to the value found in the first search. We consider the training loss at $500$ rounds for determining the best $\gamma$ and $\eta$. The hyperparameters found from this search and used in our experiments are shown in Table~\ref{table:hyperparameters}.

\textit{Learning Rate Decay for CIFAR-100 Dataset.}
Only for the CIFAR-100 dataset, we decay the local learning rate $\gamma$ by half every $1,000$ rounds, starting from the $10,000$-th round.

\begin{table}[H]
\caption{Values of hyperparameters $\gamma$ and $\eta$, where we use $10^{-1.25}\approx 0.0562, 10^{-1.5}\approx 0.0316, 10^{0.25}\approx 1.78$ \vspace{0.2em}}
\label{table:hyperparameters}
\small
{
\centering
\renewcommand{\arraystretch}{1.3}
\begin{tabular}{>{\centering\arraybackslash}p{0.28\linewidth} | >{\centering\arraybackslash\!\!}p{0.055\linewidth} | >{\centering\arraybackslash\!\!}p{0.055\linewidth} | >{\centering\arraybackslash\!\!}p{0.055\linewidth} | >{\centering\arraybackslash\!\!}p{0.055\linewidth} | >{\centering\arraybackslash\!\!}p{0.055\linewidth} | >{\centering\arraybackslash\!\!}p{0.055\linewidth} | >{\centering\arraybackslash\!\!}p{0.055\linewidth} | >{\centering\arraybackslash\!\!}p{0.055\linewidth} } 
\thickhline
\textbf{Dataset} & \multicolumn{2}{c|}{SVHN} & \multicolumn{2}{c|}{CIFAR-10} & \multicolumn{2}{c|}{CIFAR-100} & \multicolumn{2}{c}{CINIC-10}\\
\hline
\textbf{Method / Hyperparameter} & \, $\gamma$ & \, $\eta$ & \, $\gamma$ & \, $\eta$ & \, $\gamma$ & \, $\eta$ & \, $\gamma$ & \, $\eta$ \\
\thickhline
FedAU (ours, $K\rightarrow\infty$) & $0.1$ & $1.0$ & $0.1$ & $1.0$ & $0.0562$ & $1.78$ & $0.1$ & $1.0$ \\
FedAU (ours, $K=50$) & $0.1$ & $1.0$ & $0.1$ & $1.0$ & $0.0562$ & $1.78$ & $0.1$ & $1.0$ \\
Average participating & $0.0562$ & $1.78$ & $0.0562$ & $1.78$ & $0.0316$ & $1.78$ & $0.0562$ & $1.78$  \\
Average all & $0.1$ & $10.0$ & $0.1$ & $10.0$ & $0.0562$ & $10.0$ & $0.1$ & $10.0$ \\
\thickhline
FedVarp ($250\times$ memory) & $0.1$ & $1.0$ & $0.0562$ & $1.0$ & $0.0316$ & $1.78$ & $0.0562$ & $1.0$ \\
MIFA ($250\times$ memory) & $0.1$ & $1.0$ & $0.0562$ & $1.0$ & $0.0316$ & $1.78$ & $0.0562$ & $1.0$ \\
Known participation statistics & $0.1$ & $1.0$ & $0.0562$ & $1.0$ & $0.0316$ & $1.78$ & $0.0562$ & $1.0$ \\
\thickhline
\end{tabular}
}
\end{table}

\subsection{Computation Resources}
The experiments were split between a desktop machine with RTX 3070 GPU and an internal GPU cluster. In our experiments, the total number of rounds is $2,000$ for SVHN, $10,000$ for CIFAR-10 and CINIC-10, and $20,000$ for CIFAR-100. Each experiment with $10,000$ rounds took approximately $4$ hours to complete, for one random seed on RTX 3070 GPU. The time taken for experiments with other number of rounds scales accordingly. We ran experiments with $5$ different random seeds for each dataset and algorithm. It was possible to run multiple experiments simultaneously on the same GPU while not exceeding the GPU memory.

\subsection{Heterogeneous Participation Across Clients}
\label{appendix:generating_heterogeneous_participation}

\subsubsection{Generating Participation Patterns}
\label{appendix:participation_pattern_generation}

In each experiment with a specific simulation seed, we take only one sample of this Dirichlet distribution with parameter $\alpha_p$, which gives a probability vector $\mathbf{q}\sim\mathrm{Dir}(\alpha_p)$ that has a dimension equal to the total number of classes in the dataset.\footnote{We use $\mathrm{Dir}(\alpha_p)$ to denote $\mathrm{Dir}(\boldsymbol{\alpha}_p)$ with all the elements in the vector $\boldsymbol{\alpha}_p$ equal to~$\alpha_p$.} The participation probability $p_n$ for each client $n$ is obtained by computing an inner product between $\mathbf{q}$ and the class distribution vector of the data at client $n$, and then dividing by a normalization factor. The rationale behind this approach is that the elements in $\mathbf{q}$ indicate how different classes contribute to the participation probability. For example, if the first element of $\mathbf{q}$ is large, it means that clients with a lot of data samples in the first class will have a high participation probability, and vice versa. Since the participation probabilities $\{p_n\}$ generated using this approach are random variables, the normalization ensures a certain mean participation probability, i.e., $\Expectbracket{p_n}$, of any client $n$, which is set to $0.1$ in our experiments. We further cap the minimum value of any $p_n$ to be $0.02$.

Among the three participation patterns in our experiments, i.e., Bernoulli, Markovian, and cyclic, we maintain the same \textit{stationary} participation probabilities $\{p_n\}$ for the clients, so the difference is in the temporal distribution of when a client participates, which is summarized as follows.
\begin{itemize}
    \item For Bernoulli participation, in every round $t$, each client $n$ decides whether or not to participate according to a Bernoulli distribution with probability $p_n$. This decision is independent across time, i.e., independent across different rounds.
    \item For Markovian participation, each client participates according a two-state Markov chain, where the motivation is similar to cyclic participation (see next item below) but includes more randomness. We set the maximum transition probability of a client transitioning from not participating to participating to $0.05$. The initial state of the Markov chain is determined by a random sampling according to the stationary probability $p_n$, and the transition probabilities are determined in a way so that the same stationary probability is maintained across all the subsequent rounds.
    \item For cyclic participation, each client participates cyclically, i.e., it participates for a certain number of rounds and does not participate in the other rounds of a cycle. This setup has been used in existing works to simulate periodic behavior of client devices being charged (e.g., at night) \citep{eichner2019semi,ding2020distributed,cho2023convergence,wang2022unified}. We set each cycle to be $100$ rounds. We apply a random initial offset to the cycle for each client, to simulate a stationary random process for each client's participation pattern.
\end{itemize}

Figure~\ref{fig:participation_patterns} shows examples of these three types of participation patterns.

\begin{figure}[H]
     \centering
     \begin{subfigure}[b]{0.32\textwidth}
         \centering
         \includegraphics[width=\textwidth]{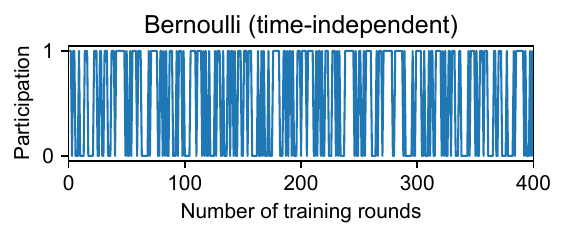}
     \end{subfigure}
    ~
     \begin{subfigure}[b]{0.32\textwidth}
         \centering
         \includegraphics[width=\textwidth]{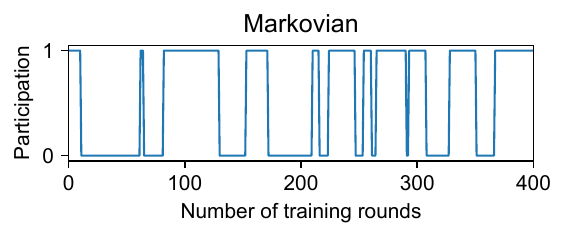}
     \end{subfigure}
     ~
     \begin{subfigure}[b]{0.32\textwidth}
         \centering
         \includegraphics[width=\textwidth]{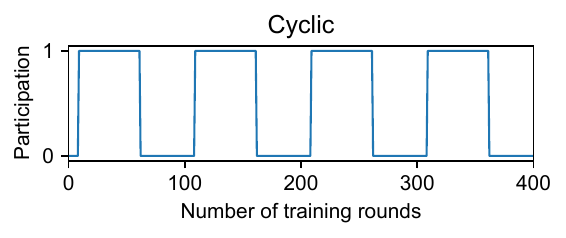}
     \end{subfigure}
     \caption{Illustration of different participation patterns, in the first $400$ rounds of a single client with $52.7\%$ mean participation rate.
     }
    \label{fig:participation_patterns}
\end{figure}

\subsubsection{Illustration of Data and Participation Heterogeneity}

As described in Section~\ref{sec:experiments} and Appendix~\ref{appendix:participation_pattern_generation}, we generate the data and participation heterogeneity with two separate Dirichlet distributions with parameters $\alpha_d$ and $\alpha_p$, respectively. In the following, we illustrate the result of this generation for a specific random instance. In Figure~\ref{fig:heterogeneity}, the class-wise data distribution of each client is drawn from $\mathrm{Dir}(\alpha_d)$. For computing the participation probability, we draw a vector $\mathbf{q}$ from $\mathrm{Dir}(\alpha_p)$, which gave the following result in our random trial:
\begin{equation*}
    \mathbf{q} = [0.02,\, 0.05,\, 0.12,\, 0.00,\, 0.00,\, 0.78,\, 0.00,\, 0.00,\, 0.02,\, 0.00].
\end{equation*}
Then, the participation probability is set as the inner product of $\mathbf{q}$ and the class distribution of each client's data, divided by a normalization factor. For the above $\mathbf{q}$, the $6$-th element has the highest value, which means that clients with a larger proportion of data in the $6$-th class (label) will have a higher participation probability. This is confirmed by comparing the class distributions and the participation probabilities in  Figure~\ref{fig:heterogeneity}.

\begin{figure}[H]
    \centering
    \includegraphics[width=0.9\linewidth]{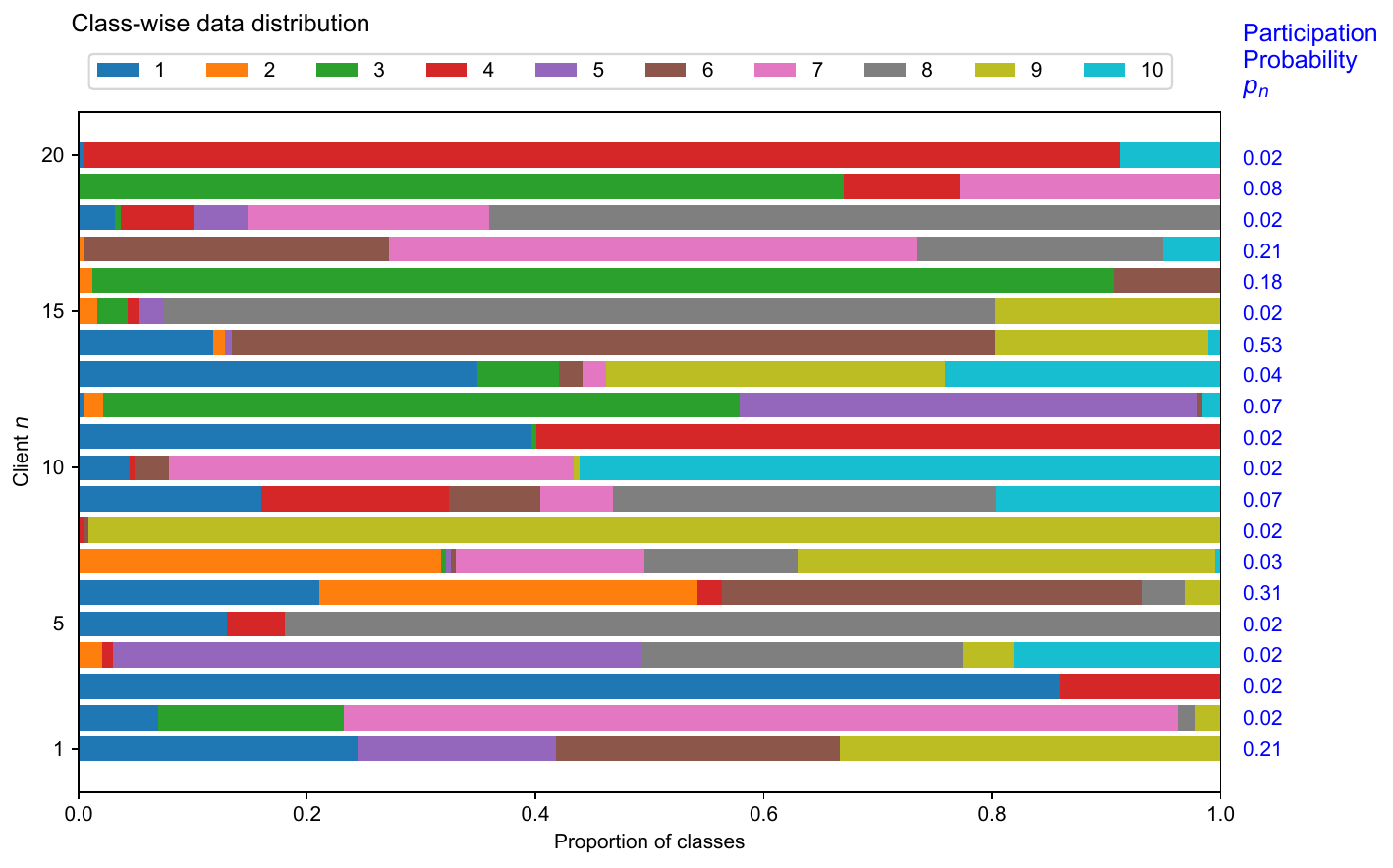}
    \caption{Illustration of data and participation heterogeneity, for an example with $20$ clients.}
    \label{fig:heterogeneity}
\end{figure}

In this procedure, $\mathbf{q}$ is kept the same for all the clients, to simulate a consistent correlation between participation probability and class distribution across all the clients. However, the value of $\mathbf{q}$ changes with the random seed, which means that we have different $\mathbf{q}$ for different experiments. We ran experiments with $5$ different random seeds for each setting, which allows us to observe the general behavior.

More precisely, let $\boldsymbol{\kappa}_n \sim \mathrm{Dir}(\alpha_d)$ denote the class distribution of client~$n$'s data. The participation probability $p_n$ of each client $n$ is computed as
\begin{equation}
    p_n = \frac{1}{\lambda}\innerprod{\boldsymbol{\kappa}_n, \mathbf{q}},
    \label{eq:experimentPnCompute}
\end{equation}
where $\lambda$ is the normalization factor to ensure that $\Expectbracket{p_n}$ is equal to some target $\mu$, because $p_n$ is a random quantity when using this randomized generation procedure. In our experiments, we set $\mu = 0.1$. Let $C$ denote the total number of classes (labels). From the mean of Dirichlet distribution and the fact that $\boldsymbol{\kappa}_n$ and $\mathbf{q}$ are independent, we know that $\Expectbracket{\innerprod{\boldsymbol{\kappa}_n, \mathbf{q}}} = \innerprod{\Expectbracket{\boldsymbol{\kappa}_n},\Expectbracket{\mathbf{q}}} = \frac{1}{C}$. Therefore, to ensure that $\Expectbracket{p_n} = \mu$, according to \eqref{eq:experimentPnCompute}, the normalization factor is chosen as $\lambda=\frac{1}{C\mu}$.

We emphasize again that this procedure is only used for simulating an experimental setup with both data and participation heterogeneity. Our FedAU algorithm still \textit{does not} know the actual values of $\{p_n\}$.

\clearpage

\section{Additional Results from Experiments}
\label{appendix:experiments_results}

\subsection{Results with Different Participation Heterogeneity}
\label{appendix:results_participation_heterogeneity}

We present the results with different participation heterogeneity (characterized by the Dirichlet parameter $\alpha_p$) on the CIFAR-10 dataset in Table~\ref{table:resultsDifferentParticipationHet}, for the case of Bernoulli participation. The main observations remain consistent with those in Section~\ref{sec:experiments}. We also see that the difference between different methods becomes larger when the heterogeneity is higher (i.e., smaller $\alpha_p$), which aligns with intuition. For all degrees of heterogeneity, our FedAU algorithm performs the best among the algorithms that work under the same setting, i.e., the top part of Table~\ref{table:resultsDifferentParticipationHet}.

\begin{table}[H]
\caption{Accuracy results (in \%) on training and test data of CIFAR-10, with different participation heterogeneity (Bernoulli participation)\vspace{0.2em}}
\label{table:resultsDifferentParticipationHet}
\scriptsize
{
\centering
\renewcommand{\arraystretch}{1.3}
\begin{tabular}{>{\centering\arraybackslash}p{0.22\linewidth} | >{\centering\arraybackslash\!\!}p{0.045\linewidth} | >{\centering\arraybackslash\!\!}p{0.045\linewidth} | >{\centering\arraybackslash\!\!}p{0.045\linewidth} | >{\centering\arraybackslash\!\!}p{0.045\linewidth} | >{\centering\arraybackslash\!\!}p{0.045\linewidth} | >{\centering\arraybackslash\!\!}p{0.045\linewidth} | >{\centering\arraybackslash\!\!}p{0.045\linewidth} | >{\centering\arraybackslash\!\!}p{0.045\linewidth} | >
{\centering\arraybackslash\!\!}p{0.045\linewidth} | >{\centering\arraybackslash\!\!}p{0.045\linewidth} } 
\thickhline
\textbf{Participation heterogeneity} & \multicolumn{2}{c|}{$\alpha_p=0.01$} & \multicolumn{2}{c|}{$\alpha_p=0.05$} & \multicolumn{2}{c|}{$\alpha_p=0.1$} & \multicolumn{2}{c|}{$\alpha_p=0.5$} & \multicolumn{2}{c}{$\alpha_p=1.0$}\\
\hline
\textbf{Method / Metric} & \, Train & \, Test & \, Train & \, Test & \, Train & \, Test & \, Train & \, Test & \, Train & \, Test \\
\thickhline
FedAU (ours, $K\rightarrow\infty$) & 83.7{\tiny $\pm$0.8} & 76.2{\tiny $\pm$0.7} & 84.3{\tiny $\pm$0.8} & 76.6{\tiny $\pm$0.5} & 85.4{\tiny $\pm$0.4} & 77.1{\tiny $\pm$0.4} & 87.3{\tiny $\pm$0.5} & \textbf{77.8}{\tiny $\pm$0.2} & 88.1{\tiny $\pm$0.7} & \textbf{78.1}{\tiny $\pm$0.2} \\
FedAU (ours, $K=50$) &  \textbf{84.7}{\tiny $\pm$0.6} & \textbf{76.9}{\tiny $\pm$0.6} & \textbf{85.1}{\tiny $\pm$0.5} & \textbf{77.1}{\tiny $\pm$0.3}  & \textbf{86.0}{\tiny $\pm$0.5} & \textbf{77.3}{\tiny $\pm$0.3} & \textbf{87.6}{\tiny $\pm$0.4} & 77.8{\tiny $\pm$0.4} & \textbf{88.2}{\tiny $\pm$0.7} & 78.0{\tiny $\pm$0.2} \\
Average participating &  80.6{\tiny $\pm$1.2} & 72.3{\tiny $\pm$1.7} & 81.5{\tiny $\pm$1.1} & 72.6{\tiny $\pm$1.4} & 83.5{\tiny $\pm$0.9} & 74.1{\tiny $\pm$0.8} & 85.9{\tiny $\pm$0.7} & 75.7{\tiny $\pm$0.9} & 87.0{\tiny $\pm$1.0} & 76.8{\tiny $\pm$0.6}\\
Average all &  76.9{\tiny $\pm$2.7} & 69.5{\tiny $\pm$2.7} & 78.5{\tiny $\pm$1.7} & 70.6{\tiny $\pm$1.8} & 81.0{\tiny $\pm$0.9} & 72.7{\tiny $\pm$0.9} & 83.6{\tiny $\pm$1.4} & 74.6{\tiny $\pm$0.8} & 84.9{\tiny $\pm$1.2} & 75.9{\tiny $\pm$1.0} \\
\thickhline
FedVarp ($250\times$ memory) &  82.5{\tiny $\pm$0.8} & \underline{77.3}{\tiny $\pm$0.3} & 83.0{\tiny $\pm$0.4} & \underline{77.5}{\tiny $\pm$0.4} & 84.2{\tiny $\pm$0.3} & \underline{77.9}{\tiny $\pm$0.2} & 85.4{\tiny $\pm$0.5} & \underline{78.1}{\tiny $\pm$0.2} & 86.4{\tiny $\pm$0.7} & \underline{78.5}{\tiny $\pm$0.3}  \\
MIFA ($250\times$ memory) &  82.1{\tiny $\pm$0.8} & 77.0{\tiny $\pm$0.7} & 82.6{\tiny $\pm$0.3} & 77.3{\tiny $\pm$0.4} & 83.5{\tiny $\pm$0.6} & 77.5{\tiny $\pm$0.3} & 84.9{\tiny $\pm$0.5} & 77.9{\tiny $\pm$0.3} & 85.4{\tiny $\pm$0.4} & 78.0{\tiny $\pm$0.4} \\
\!\!\!Known participation statistics\!\!\!&  \underline{83.1}{\tiny $\pm$0.8} & 76.3{\tiny $\pm$0.6} & \underline{83.6}{\tiny $\pm$0.6} & 76.7{\tiny $\pm$0.5}  & \underline{84.3}{\tiny $\pm$0.5} & 77.0{\tiny $\pm$0.5} & \underline{86.1}{\tiny $\pm$0.6} & 77.7{\tiny $\pm$0.4} & \underline{86.8}{\tiny $\pm$0.9} & 77.9{\tiny $\pm$0.7}\\
\thickhline
\end{tabular}
}
\vspace{0.3em}

\textit{Note to the table.} The same note in Table~\ref{table:results} also applies to this table.
\end{table}

\subsection{Loss and Accuracy Plots}

For Bernoulli participation, we plot the loss and accuracy results in different rounds for the four datasets, as shown in Figures~\ref{fig:plotSVHN}--\ref{fig:plotCINIC10}. In these plots, the curves show the mean values and the shaded areas show the standard deviation. We applied moving average with a window size equal to $3\%$ of the total number of rounds, and the mean and standard deviation are computed across samples from all experiments (with $5$ different random seeds) within each moving average window.

The main conclusions from Figures~\ref{fig:plotSVHN}--\ref{fig:plotCINIC10} are similar to what we have seen from the final-round results shown in Table~\ref{table:results} in the main paper. We can see that our FedAU algorithm performs the best in the vast majority of cases and across most rounds. Only for the CIFAR-10 dataset, FedAU gives a slightly worse test accuracy compared to FedVarp and MIFA, which aligns with the results in Table~\ref{table:results}. However, FedAU still gives the highest training accuracy on CIFAR-10. This implies that FedVarp/MIFA gives a slightly better generalization on the CIFAR-10 dataset, where the reasons are worth further investigation. We emphasize again that FedVarp and MIFA both require a substantial amount of additional memory than FedAU, thus they do not work under the same system assumptions as FedAU. For the CIFAR-100 dataset, there is a jump around the $10,000$-th round due to the learning rate decay schedule, as mentioned in Section~\ref{sec:hyperparameter}.

\begin{figure}[H]
    \centering
    \includegraphics[width=\linewidth]{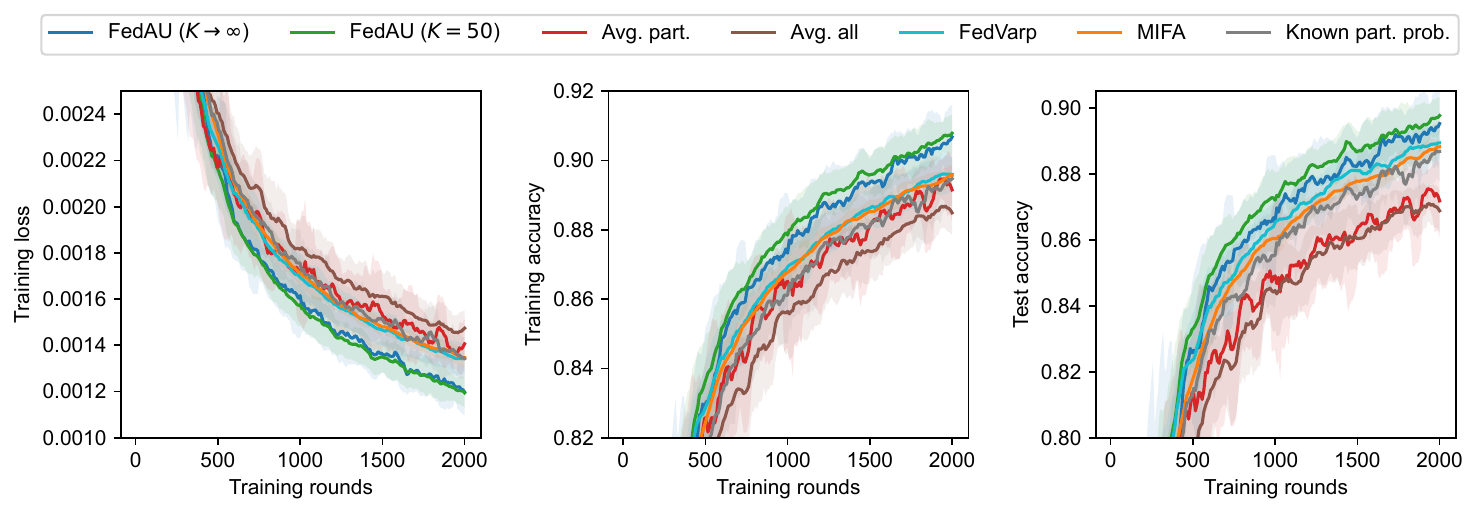}
    \caption{Results on SVHN dataset (Bernoulli participation).}
    \label{fig:plotSVHN}
\end{figure}

\begin{figure}[H]
    \centering
    \includegraphics[width=\linewidth]{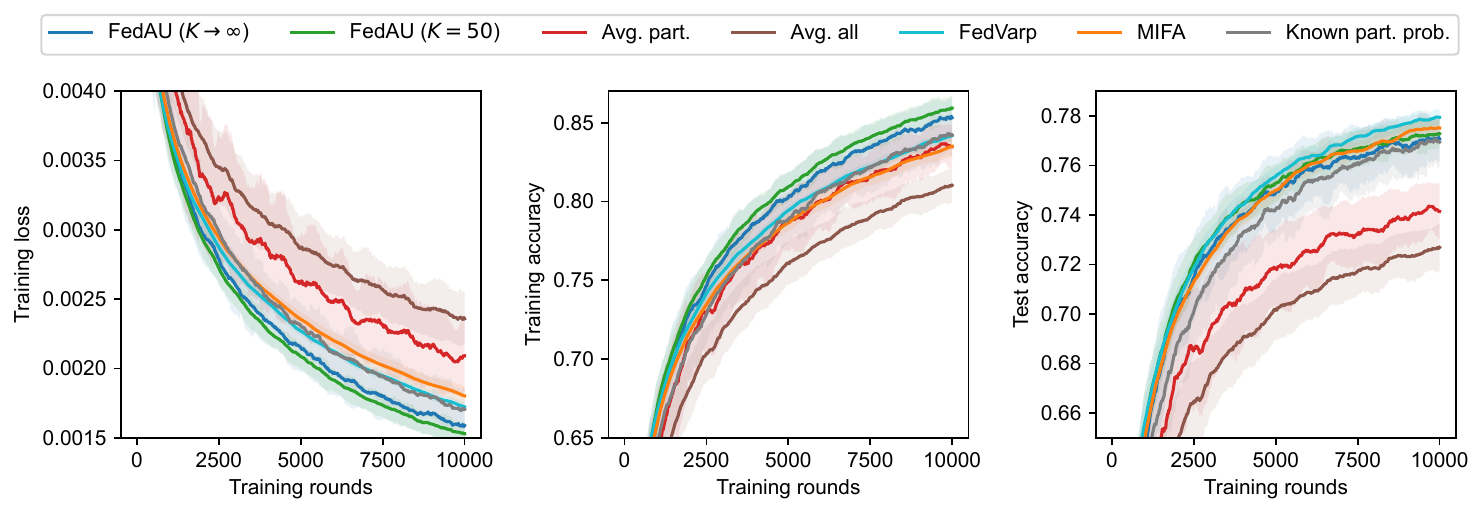}
    \caption{Results on CIFAR-10 dataset (Bernoulli participation).}
    \label{fig:plotCIFAR10}
\end{figure}

\begin{figure}[H]
    \centering
    \includegraphics[width=\linewidth]{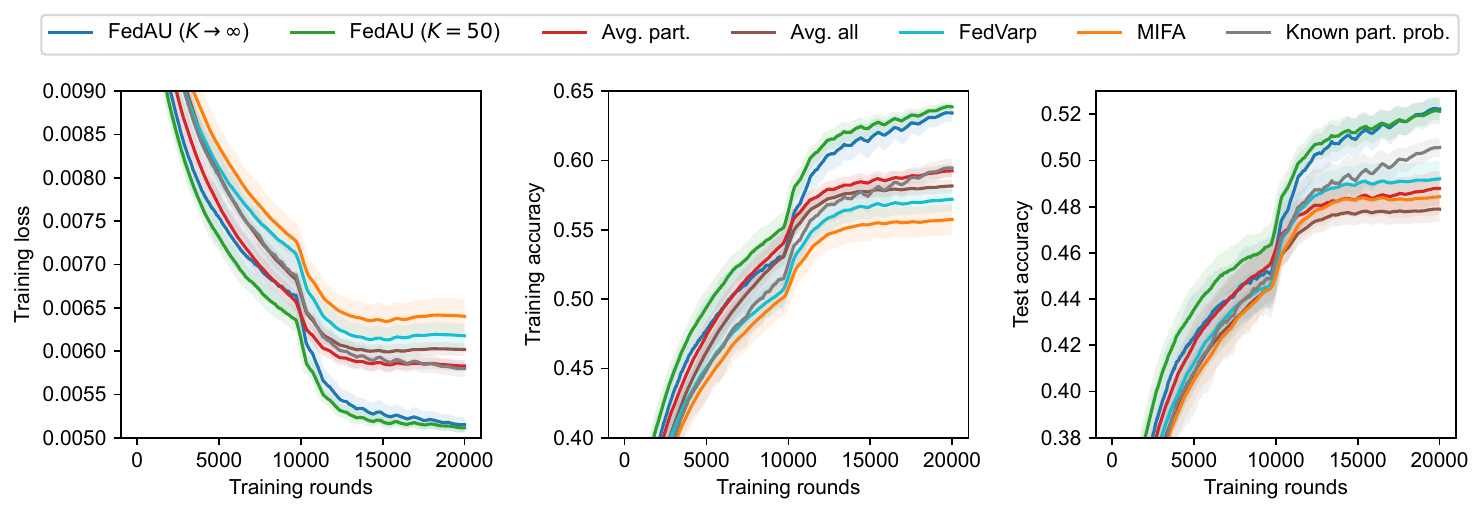}
    \caption{Results on CIFAR-100 dataset (Bernoulli participation).}
    \label{fig:plotCIFAR100}
\end{figure}

\begin{figure}[H]
    \centering
    \includegraphics[width=\linewidth]{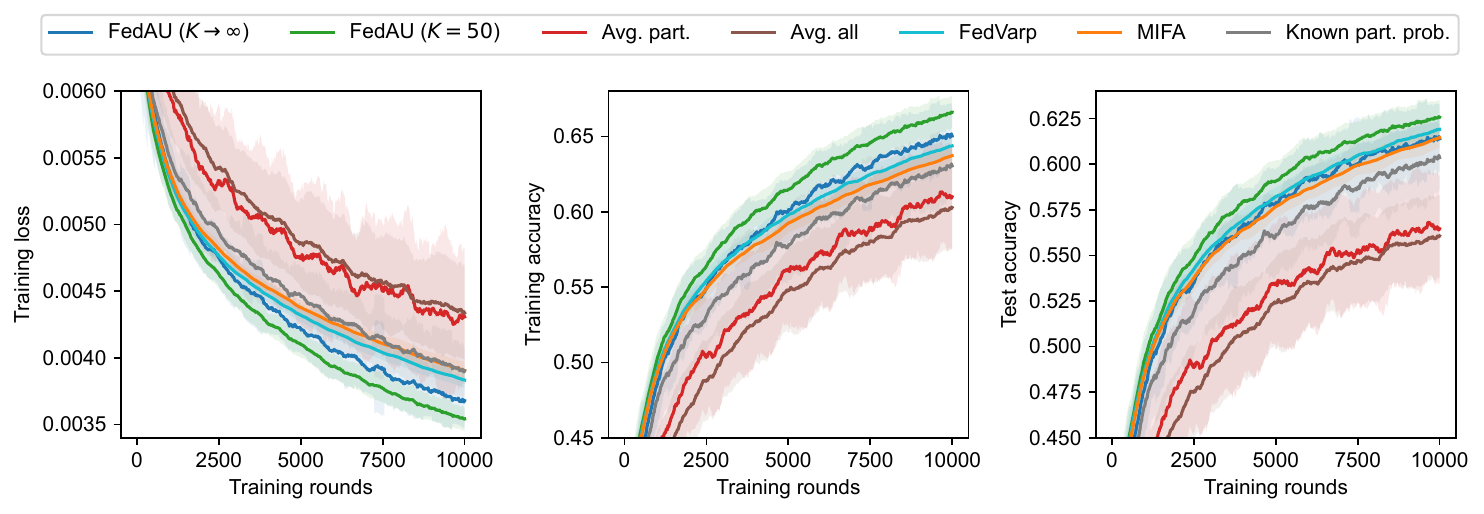}
    \caption{Results on CINIC-10 dataset (Bernoulli participation).}
    \label{fig:plotCINIC10}
\end{figure}

\subsection{Client-wise Distributions of Loss and Accuracy}

We plot the loss and accuracy value distributions among all the clients in Figure~\ref{fig:client-wise-distribution}, where we consider Bernoulli participation and compare with baselines %
that \textit{do not} require extra resources or information. 

\begin{figure}[H]
     \centering
     \begin{subfigure}[b]{0.32\textwidth}
         \centering
         \includegraphics[width=\textwidth]{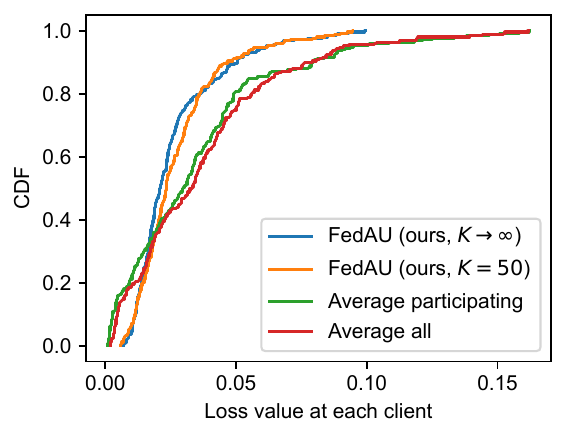}
     \end{subfigure}
    ~
     \begin{subfigure}[b]{0.32\textwidth}
         \centering
         \includegraphics[width=\textwidth]{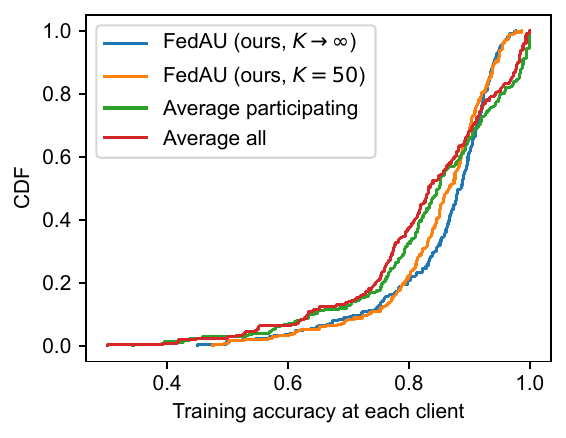}
     \end{subfigure}
     ~
     \begin{subfigure}[b]{0.32\textwidth}
         \centering
         \includegraphics[width=\textwidth]{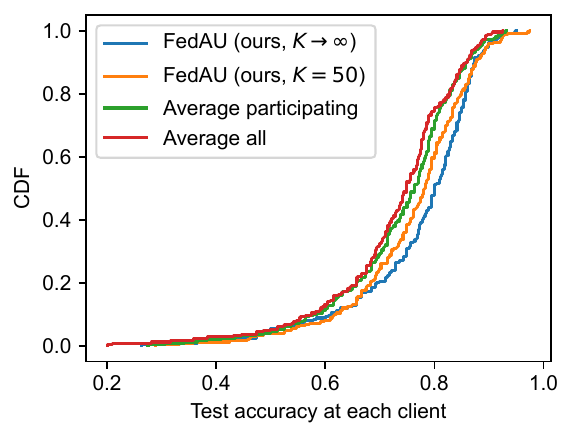}
     \end{subfigure}
     \caption{Client-wise distributions of loss value, training accuracy, and test accuracy, on CIFAR-10 dataset with Bernoulli participation. The distribution is expressed as empirical cumulative distribution function (CDF).
     }
    \label{fig:client-wise-distribution}
\end{figure}

We can see that compared to the average-participating and average-all baselines that use the same amount of memory as FedAU, the spread in the loss and accuracy with FedAU is smaller. This is also seen in the standard deviation of all the clients' loss and accuracy values in Table~\ref{table:client-wise-statistics}, where we only include the standard deviation values because the mean values are the same as those in Table~\ref{table:results}.

\begin{table}[H]
\caption{Client-wise statistics of loss and accuracy (CIFAR-10 dataset with Bernoulli participation) \vspace{0.2em}}
\label{table:client-wise-statistics}
\small
{
\centering
\renewcommand{\arraystretch}{1.3}
\begin{tabular}{>{\centering\arraybackslash}p{0.28\linewidth} | >{\centering\arraybackslash\!\!}p{0.2\linewidth} | >{\centering\arraybackslash\!\!}p{0.2\linewidth} | >{\centering\arraybackslash\!\!}p{0.2\linewidth} } 
\thickhline
\textbf{Method} & \textbf{Client-wise std. dev. of loss} & \textbf{Client-wise std. dev. of training accuracy} & \textbf{Client-wise std. dev. of test accuracy}\\
\thickhline
FedAU (ours, $K\rightarrow\infty$) & 0.017 & 9.9\% & 11.7\%\\
FedAU (ours, $K=50$) & \textbf{0.016} & \textbf{9.5\%} & \textbf{11.2\%} \\
Average participating & 0.031 & 13.3\% & 11.6\%  \\
Average all & 0.030 & 13.3\% & 12.2\% \\
\thickhline
\end{tabular}
}
\end{table}

This shows that FedAU (especially with $K=50$) reduces the bias among clients compared to the two baselines, which aligns with our motivation mentioned in Section~\ref{sec:introduction} about reducing discrimination.

\subsection{Aggregation Weights}
\label{appendix:aggregation_weight_results}

As shown in Figure~\ref{fig:bernoulli_weights}, with Bernoulli participation, the computed weights can be quite different from $\sfrac{1}{p_n}$, especially when the participation probability is low (in Subfigures \ref{subfig:weight_bernoulli_low_prob1}--\ref{subfig:weight_bernoulli_low_prob3}).
In contrast, we see in Figure~\ref{fig:cyclic_weights} that with cyclic participation the weights computed by FedAU and the known participation statistics baseline are more similar. This aligns with the fact that the accuracies in the case of cyclic participation are also more similar compared to the case of Bernoulli participation, as seen in Table~\ref{table:results}.

Note that we use $K\rightarrow\infty$ for FedAU in both Figure~\ref{fig:bernoulli_weights} and Figure~\ref{fig:cyclic_weights}.

\begin{figure}[H]
     \centering
     \begin{subfigure}[b]{0.25\textwidth}
         \centering
         \includegraphics[width=\textwidth]{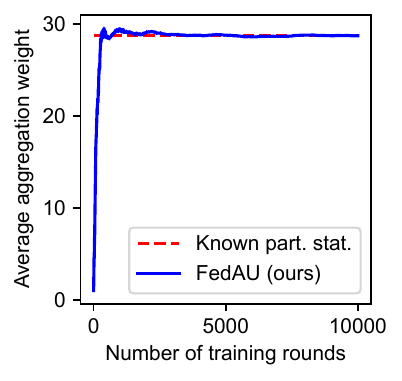}\vspace{-0.5em}%
         \caption{}
         \label{subfig:avg_weight_bernoulli}
     \end{subfigure}
     \begin{subfigure}[b]{0.265\textwidth}
         \centering
         \includegraphics[width=\textwidth]{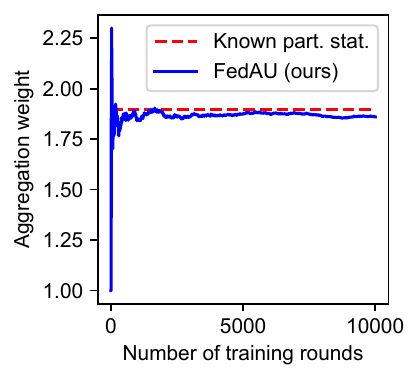}\vspace{-0.5em}%
         \caption{}
         \label{subfig:weight_bernoulli_high_prob}
     \end{subfigure}\\
     \begin{subfigure}[b]{0.25\textwidth}
         \centering
         \includegraphics[width=\textwidth]{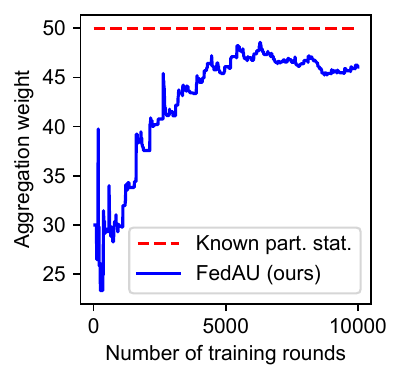}\vspace{-0.5em}%
         \caption{}
         \label{subfig:weight_bernoulli_low_prob1}
     \end{subfigure}
     \begin{subfigure}[b]{0.25\textwidth}
         \centering
         \includegraphics[width=\textwidth]{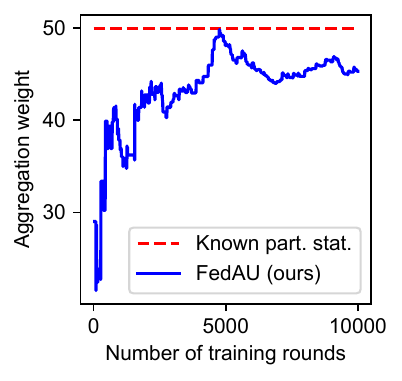}\vspace{-0.5em}%
         \caption{}
         \label{subfig:weight_bernoulli_low_prob2}
     \end{subfigure}
     \begin{subfigure}[b]{0.26\textwidth}
         \centering
         \includegraphics[width=\textwidth]{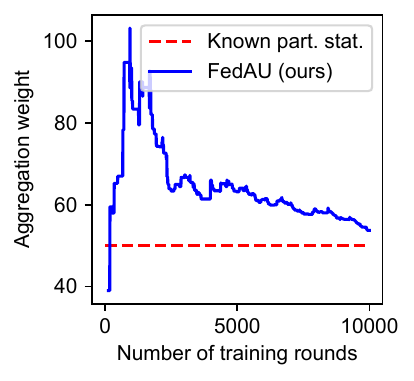}\vspace{-0.5em}%
         \caption{}
         \label{subfig:weight_bernoulli_low_prob3}
     \end{subfigure}
     \caption{Aggregation weights with Bernoulli participation, (a) average aggregation weights over all clients, (b) aggregation weights of a single client with a high mean participation rate of $52.7\%$, (c)-(e) aggregation weights of three individual clients with a low mean participation rate of $2\%$. 
     }
    \label{fig:bernoulli_weights}
\end{figure}

\begin{figure}[H]
     \centering
     \begin{subfigure}[b]{0.25\textwidth}
         \centering
         \includegraphics[width=\textwidth]{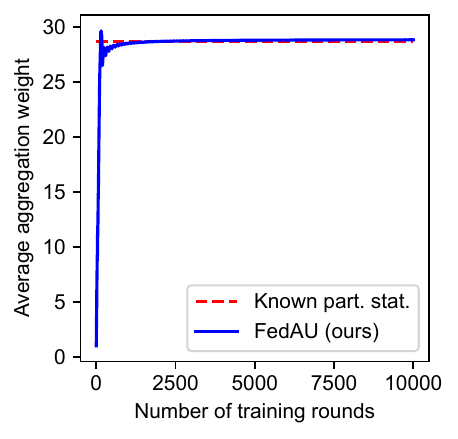}
         \caption{}
         \label{subfig:avg_weight_cyclic}
     \end{subfigure}
     \begin{subfigure}[b]{0.25\textwidth}
         \centering
         \includegraphics[width=\textwidth]{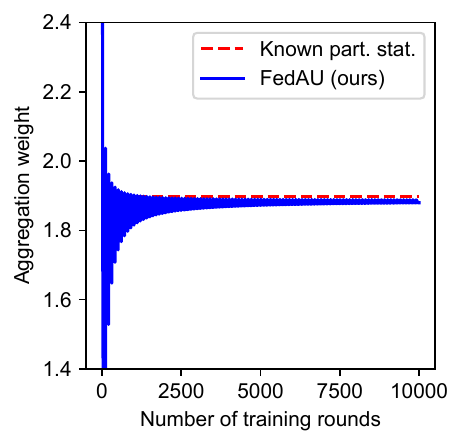}
         \caption{}
         \label{subfig:weight_cyclic_high_prob}
     \end{subfigure}\\
     \begin{subfigure}[b]{0.25\textwidth}
         \centering
         \includegraphics[width=\textwidth]{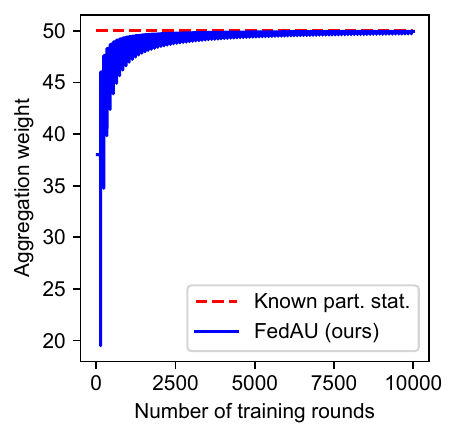}
         \caption{}
         \label{subfig:weight_cyclic_low_prob1}
     \end{subfigure}
     \begin{subfigure}[b]{0.25\textwidth}
         \centering
         \includegraphics[width=\textwidth]{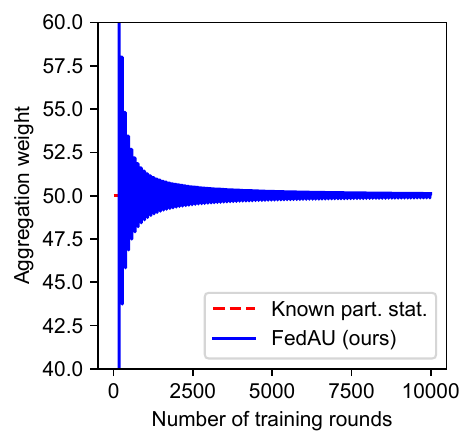}
         \caption{}
         \label{subfig:weight_cyclic_low_prob2}
     \end{subfigure}
     \begin{subfigure}[b]{0.25\textwidth}
         \centering
         \includegraphics[width=\textwidth]{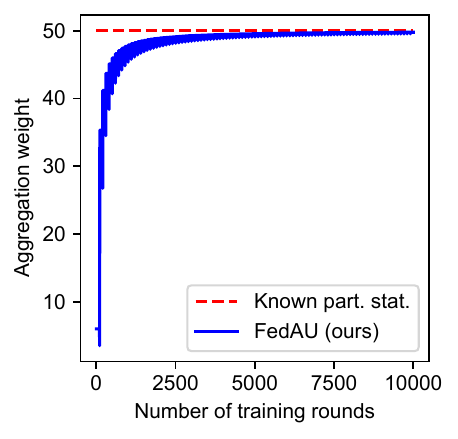}
         \caption{}
         \label{subfig:weight_cyclic_low_prob3}
     \end{subfigure}
     \caption{Aggregation weights with cyclic participation, (a) average aggregation weights over all clients, (b) aggregation weights of a single client with a high mean participation rate of $52.7\%$, (c)-(e) aggregation weights of three individual clients with a low mean participation rate of $2\%$. 
     }
    \label{fig:cyclic_weights}
\end{figure}

\subsection{Choice of different $K$}
\label{appendix:different_K}

We study the effect of the cutoff interval length $K$ by considering the performance of FedAU under different minimum participation probabilities. 
The distributions of participation probabilities $\{p_n\}$ for all the clients with different lower bounds are shown in Figure~\ref{fig:participation_prob_distrbution_diff_lb}, where we can see that a smaller lower bound value corresponds to having more clients with very small participation probabilities. The full set of plots complementing Figure~\ref{fig:different_K_main} is shown in Figure~\ref{fig:different_K}.

\begin{figure}[H]
     \centering
     \begin{subfigure}[b]{0.23\textwidth}
         \centering
         \includegraphics[width=\textwidth]{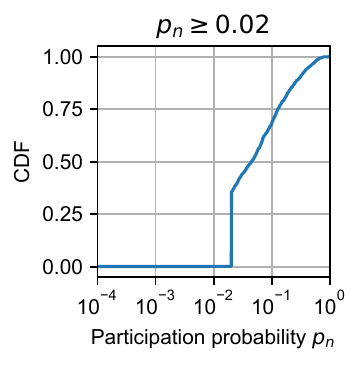}
     \end{subfigure}
    ~
     \begin{subfigure}[b]{0.23\textwidth}
         \centering
         \includegraphics[width=\textwidth]{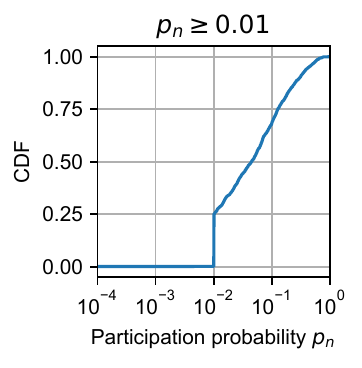}
     \end{subfigure}
     ~
     \begin{subfigure}[b]{0.23\textwidth}
         \centering
         \includegraphics[width=\textwidth]{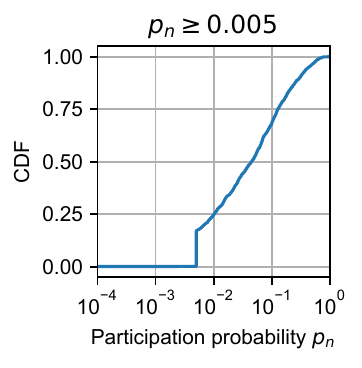}
     \end{subfigure}
     ~
     \begin{subfigure}[b]{0.23\textwidth}
         \centering
         \includegraphics[width=\textwidth]{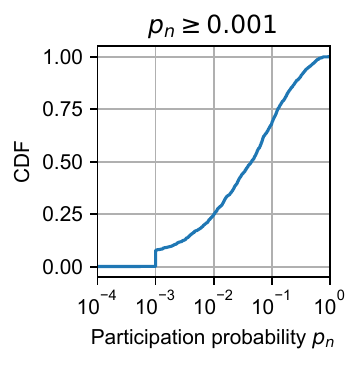}
     \end{subfigure}
     
     \begin{subfigure}[b]{0.4\textwidth}
         \centering
         \includegraphics[width=\textwidth]{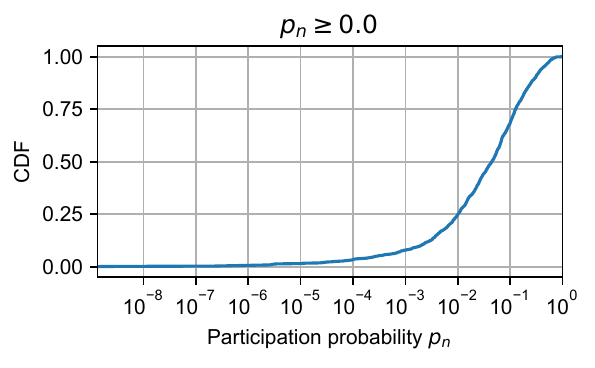}
     \end{subfigure}

     \caption{Distributions of participation probabilities $\{p_n\}$ at clients, with different lower bound values of these probabilities. The distribution is expressed as empirical cumulative distribution function (CDF) with logarithmic scale on the $x$-axis.
     }
    \label{fig:participation_prob_distrbution_diff_lb}
\end{figure}

\begin{figure}[H]
     \centering
     \begin{subfigure}[b]{1\textwidth}
         \centering
         \includegraphics[width=\textwidth]{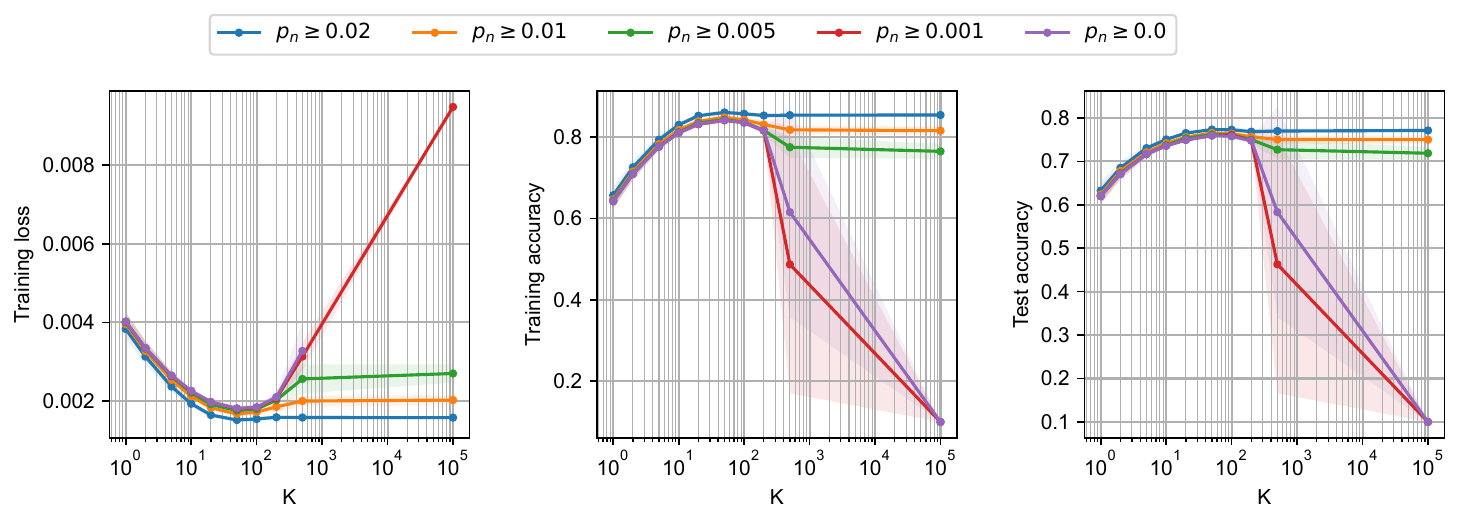}\vspace{-0.8em}
         \caption{Full plots}\vspace{0.5em}
     \end{subfigure}
     
     \begin{subfigure}[b]{1\textwidth}
         \centering
         \includegraphics[width=\textwidth]{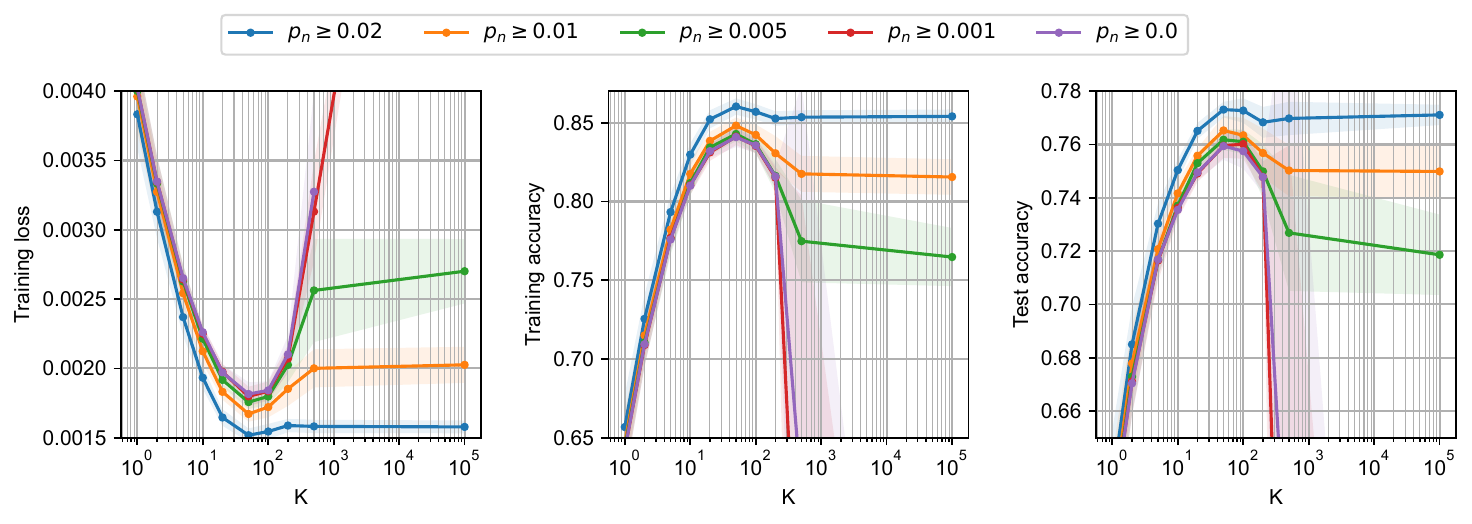}\vspace{-0.8em}
         \caption{Flots with enlarged $y$-axis}
     \end{subfigure}     
     \caption{Results of FedAU with different $K$, on CIFAR-10 dataset with Bernoulli participation. The loss is NaN for $K=10^5$ with $p_n\geq 0.0$.
     }
    \label{fig:different_K}
\end{figure}

\subsection{Low Participation Rates}
\label{appendix:low_participation_rates}

To further study the performance of FedAU in the presence of clients with low participation rates, we set the lower bound of participation probabilities to $0.0$ (i.e., we do not impose a specific lower bound; see Appendix~\ref{appendix:participation_pattern_generation} and Appendix~\ref{appendix:different_K} for details) and compare the performance of FedAU with $K=50$ to the baseline algorithms. We consider settings with different mean participation probabilities $\Expectbracket{p_n}$, while following the same procedure of generating heterogeneous participation patterns as described in Appendix~\ref{appendix:generating_heterogeneous_participation}, to capture the effect of different \textit{overall} participation rates of clients. The resulting distributions of $\{p_n\}$ with different $\Expectbracket{p_n}$ are shown in Figure~\ref{fig:participation_prob_distrbution_diff_mean}.

\begin{figure}[H]
     \centering
     \begin{subfigure}[b]{0.32\textwidth}
         \centering
         \includegraphics[width=\textwidth]{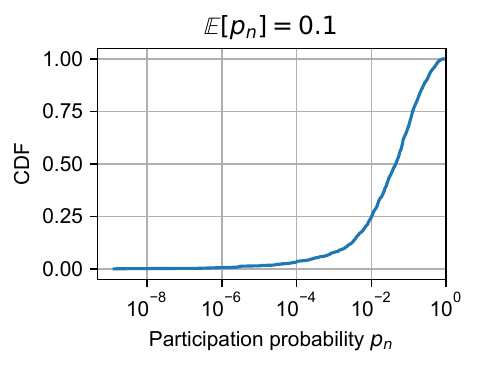}
     \end{subfigure}
    ~
     \begin{subfigure}[b]{0.32\textwidth}
         \centering
         \includegraphics[width=\textwidth]{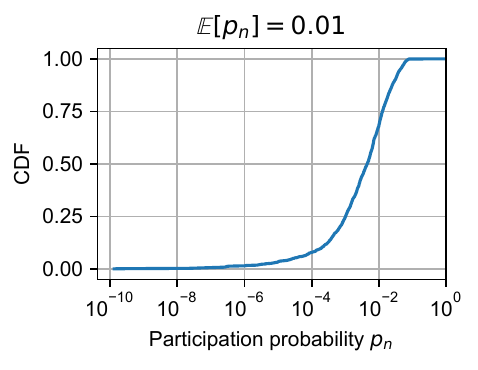}
     \end{subfigure}
     ~
     \begin{subfigure}[b]{0.32\textwidth}
         \centering
         \includegraphics[width=\textwidth]{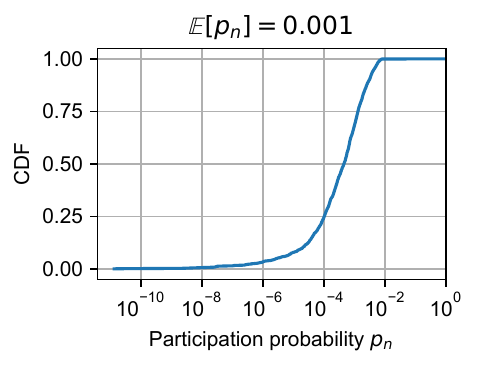}
     \end{subfigure}
     \caption{Distributions of participation probabilities $\{p_n\}$ at clients, with different values of $\Expectbracket{p_n}$. The distribution is expressed as empirical cumulative distribution function (CDF) with logarithmic scale on the $x$-axis.
     }
    \label{fig:participation_prob_distrbution_diff_mean}
\end{figure}

\begin{table}[H]
\caption{Accuracy results (in \%) on training and test data of CIFAR-10, with different mean participation probabilities $\Expectbracket{p_n}$ of clients (Bernoulli participation) and no minimum cap of $p_n$\vspace{0.2em}}
\label{table:resultsDifferentMeanPartProb}
\footnotesize
{
\centering
\renewcommand{\arraystretch}{1.3}
\begin{tabular}{>{\centering\arraybackslash}p{0.3\linewidth} | >{\centering\arraybackslash\!\!}p{0.08\linewidth} | >{\centering\arraybackslash\!\!}p{0.08\linewidth} | >{\centering\arraybackslash\!\!}p{0.08\linewidth} | >{\centering\arraybackslash\!\!}p{0.08\linewidth} | >{\centering\arraybackslash\!\!}p{0.08\linewidth} | >{\centering\arraybackslash\!\!}p{0.08\linewidth}  } 
\thickhline
\textbf{Mean participation probability} & \multicolumn{2}{c|}{$\Expectbracket{p_n}=0.1$} & \multicolumn{2}{c|}{$\Expectbracket{p_n}=0.01$} & \multicolumn{2}{c}{$\Expectbracket{p_n}=0.001$} \\
\hline
\textbf{Method / Metric} & \, Train & \, Test & \, Train & \, Test & \, Train & \, Test  \\
\thickhline
FedAU (ours, $K=50$) & \textbf{84.1}{\tiny $\pm$0.6} & \textbf{75.9}{\tiny $\pm$0.5} & \textbf{71.5}{\tiny $\pm$1.3} & \textbf{67.7}{\tiny $\pm$1.1} & \textbf{45.9}{\tiny $\pm$1.2} & \textbf{45.6}{\tiny $\pm$1.2}  \\
Average participating & 81.6{\tiny $\pm$0.7} & 72.5{\tiny $\pm$0.7} & 60.2{\tiny $\pm$1.4} & 57.9{\tiny $\pm$1.5} & 26.7{\tiny $\pm$1.8} & 26.8{\tiny $\pm$1.8} \\
Average all & 79.5{\tiny $\pm$0.8} & 71.5{\tiny $\pm$0.9} & 61.8{\tiny $\pm$2.4} & 60.0{\tiny $\pm$2.5} & 33.0{\tiny $\pm$1.8} & 33.5{\tiny $\pm$1.9}   \\
\thickhline
FedVarp ($250\times$ memory) & 61.5{\tiny $\pm$25.8} & 59.5{\tiny $\pm$24.8} & 10.0{\tiny $\pm$0.0} & 10.0{\tiny $\pm$0.0} & \underline{12.7}{\tiny $\pm$3.4} & \underline{12.8}{\tiny $\pm$3.5}  \\
MIFA ($250\times$ memory) & \underline{74.8}{\tiny $\pm$1.9} & \underline{72.5}{\tiny $\pm$1.5} & 10.0{\tiny $\pm$0.0} & 10.0{\tiny $\pm$0.0} & 10.0{\tiny $\pm$0.0} & 10.0{\tiny $\pm$0.0} \\
Known participation statistics & 15.0{\tiny $\pm$10.0} & 14.9{\tiny $\pm$9.8} & 10.0{\tiny $\pm$0.0} & 10.0{\tiny $\pm$0.0} & 10.0{\tiny $\pm$0.0} & 10.0{\tiny $\pm$0.0} \\
\thickhline
\end{tabular}
}
\vspace{0.3em}

\textit{Note to the table.} Total number of rounds is $10,000$. We do not enforce a minimum participation probability in these results, i.e., we allow any $p_n \geq 0.0$. We use the hyperparameters listed in Table~\ref{table:hyperparameters} for all the experiments. The same note in Table~\ref{table:results} also applies to this table.
\end{table}

\textbf{Key Observations.}
The accuracy results are presented in Table~\ref{table:resultsDifferentMeanPartProb}, from experiments with the CIFAR-10 dataset and $10,000$ rounds of FL. 
As expected, the performance of the majority of algorithms decreases as $\Expectbracket{p_n}$ decreases, where the minor increase of FedVarp's performance from the case of $\Expectbracket{p_n}=0.01$ to $\Expectbracket{p_n}=0.001$ is due to randomness in the experiments. We summarize the key findings from Table~\ref{table:resultsDifferentMeanPartProb} in the following.

It is interesting to see that the baseline algorithms that require additional memory or other information actually perform very poorly when the clients' participation rates are low, where we note that an accuracy of $10\%$ corresponds to random guess for the CIFAR-10 dataset that has $10$ classes of images. The reason is that FedVarp and MIFA both perform variance reduction based on previous updates of clients. When clients participate rarely, it is likely that the saved updates are outdated, causing more distortion than benefit to parameter updates. For the case of known participation statistics, the aggregation weight of each client $n$ is chosen as $\sfrac{1}{p_n}$. When $p_n$ is very small, the aggregation weight becomes very large, which causes instability to the model training process. 

The average participating and average all baselines perform better than the FedVarp, MIFA, and known participation statistics baselines, because the aggregation weights used by the average participating and average all algorithms do not have much variation, which provides more stability in the case of low client participation rates. 

FedAU (with $K=50$) gives the best performance, because the cutoff interval of length $K$ ensures that the aggregation weights are not too large, which provides stability in the training process. At the same time, clients that participate frequently still have lower aggregation weights, which balances the contributions of clients with different participation rates.

\textbf{Further Discussion.}
We further note that all the results in Table~\ref{table:resultsDifferentMeanPartProb} are from experiments using the hyperparameters listed in Table~\ref{table:hyperparameters}. These near-optimal hyperparameters were found from a grid search (see Appendix~\ref{sec:hyperparameter}) when the clients participate according to Bernoulli distribution with statistics described in Appendix~\ref{appendix:generating_heterogeneous_participation}. For the baseline methods that give random guess (or close to random guess) accuracies, it is possible that their performance can be slightly improved by choosing a much smaller learning rate, which may alleviate the impact of stale updates (for FedVarp and MIFA) or excessively large aggregation weights (for known participation statistics) when the client participation rate is low. However, it is impractical to fine tune the learning rates depending on the participation rates, especially when the participation rates are unknown a priori. In addition, using very small learning rates generally slows down the convergence, although the algorithm may converge in the end after a large number of rounds. The fact that FedAU gives the best performance compared to the baselines for a wide range of client participation rates, while keeping the learning rates unchanged (i.e., using the values in Table~\ref{table:hyperparameters}), confirms its stability and usefulness in practice.

\end{document}